\newtheorem{assumption}{Assumption}
\newtheorem{corollary}{Corollary}
\newtheorem{problem*}{Problem}
\newtheorem{theorem}{Theorem}
\newtheorem{lemma}{Lemma}
\newtheorem{definition}{Definition}
\newtheorem{example}{Example}
\newtheorem*{example*}{Example}
\newcommand{\nosemic}{\renewcommand{\@endalgocfline}{\relax}}
\newcommand{\dosemic}{\renewcommand{\@endalgocfline}{\algocf@endline}}
\let\oldnl\nl
\newcommand{\nonl}{\renewcommand{\nl}{\let\nl\oldnl}}
\DeclareMathOperator*{\argmin}{argmin}
\newcommand{\clip}{\textsl{clip}}
\newcommand{\noise}{\textsl{noise}}
\definecolor{darkgreen}{RGB}{204,102,0}
\newcommand{\prune}[1]{{\color{gray}{#1}}}
\newcommand*{\defeq}{\stackrel{\text{def}}{=}}
\newcommand{\cA}{\mathcal{A}}
 \newcommand{\cL}{\mathcal{L}}
\newcommand{\cM}{\mathcal{M}} \newcommand{\cN}{\mathcal{N}}
 \newcommand{\cX}{\mathcal{X}}
\newcommand{\EE}{\mathbb{E}} \newcommand{\RR}{\mathbb{R}}
\newcommand{\btheta}{{\bm{\theta}}}
\newcommand{\bg}{\bm{g}}
\newcommand{\bH}{\bm{H}}
\newcommand{\var}{\mathrm{Var}}
\newcommand\norm[1]{\left\lVert#1\right\rVert}
\newcommand\hugA[1]{\left\langle#1\right\rangle}
\newcommand\hugB[1]{\left[{#1}\right]}
\newcommand\hugP[1]{\left(#1\right)}
\newcommand\abs[1]{\left|#1\right|}
\DeclareMathOperator{\Tr}{Tr}
\title{Differentially Private Empirical Risk Minimization under the Fairness Lens}
\author{%
  Cuong Tran\\
  Syracuse University\\
  \texttt{cutran@syr.edu} \\
  \And
  My H.~Dinh\\
  Syracuse University\\
  \texttt{mydinh@syr.edu}\\
  \And
  Ferdinando Fioretto\\
  Syracuse University\\
  \texttt{ffiorett@syr.edu}\\
}
\begin{document}

\maketitle

\begin{abstract}
Differential Privacy (DP) \cite{dwork:06} is an important privacy-enhancing technology for private machine learning systems. It allows to measure and bound the risk associated with an individual participation in a computation. However, it was recently observed that DP learning systems may  exacerbate bias and unfairness for different groups of individuals \cite{NEURIPS2019_eugene,xu2020removing,pujol:20}. This paper builds on these important observations and sheds light on the causes of the disparate impacts arising in the problem of differentially private empirical risk minimization. It focuses on the accuracy disparity arising among groups of individuals in two well-studied DP learning methods: output perturbation \cite{chaudhuri2011differentially} and differentially private stochastic gradient descent \cite{abadi2016deep}. The paper analyzes which data and model properties are responsible for the disproportionate impacts, why these aspects are affecting different groups disproportionately, and proposes guidelines to mitigate these effects. The proposed approach is evaluated on several datasets and settings.
\end{abstract}

\section{Introduction}
\label{sec:introduction}

{While learning systems have become instrumental for many decisions 
and policy operations involving individuals, the use of rich datasets 
combined with the adoption of black-box algorithms has sparked concerns 
about how these systems operate. 
Two key concerns regard how these systems handle discrimination and 
how much information they leak about the individuals whose data is used as input.}

Differential Privacy (DP) \cite{dwork:06} has become the paradigm of
choice for protecting data privacy and its deployments are growing at
a fast rate. 
DP is appealing as it bounds the risks of disclosing
sensitive information of individuals participating in a computation.
However, it was recently observed that DP systems may induce biased and unfair outcomes for different groups of individuals 
\cite{NEURIPS2019_eugene,pujol:20,xu2020removing}. 
The resulting outcomes can have significant societal and economic
impacts on the involved individuals: classification errors may
penalize some groups over others in important determinations
including criminal assessment, landing, and hiring 
\cite{NEURIPS2019_eugene} or can result in disparities regarding
the allocation of critical funds, benefits, and therapeutics \cite{pujol:20}.
{\em While these surprising observations have become apparent in
several contexts, their causes are largely understudied and not 
fully understood.}

This paper makes a step toward addressing this important knowledge 
gap. It builds on these key observations and sheds light on
the causes of the disparate impacts arising in the problem of
differentially private empirical risk minimization (ERM). It focuses
on the {accuracy disparity} arising among groups of individuals in two
well-studied DP learning methods: output perturbation
\cite{chaudhuri2011differentially} and differentially private
stochastic gradient descent (DP-SGD) \cite{abadi2016deep}. The paper
analyzes which properties of the model and the data are responsible
for the disproportionate impacts, why these aspects are affecting
different groups disproportionately, and proposes guidelines to
mitigate these effects. 

In summary, the paper makes the following contributions:
  \begin{enumerate}[leftmargin=*, parsep=0pt, itemsep=2pt, topsep=-4pt]
     \item It develops a notion of fairness under private training that relies on the concept of excessive risk.

     \item It analyzes this fairness notion in two DP learning 
     methods: output perturbation and DP-SGD. 

     \item It isolates the relevant components related with noise addition and gradient clipping responsible for the disparate impacts.

     \item It studies the behaviors and the causes for these components 
     to affect different groups of individuals disproportionately during private training. 

     \item Based on these observations, it proposes a mitigation solution and evaluates its effectiveness on several standard datasets.
  \end{enumerate}
To the best of the authors knowledge, this work represents a first step toward a deeper understanding of the causes of the unfairness impacts in differentially private learning. 

\section{Related work}
\label{sec:related_work}
The research at the interface between differential privacy and 
fairness is receiving increasing attention and can be broadly 
categorized into three main lines of work.
The first shows that DP is in alignment with fairness. 
Notable contribution in this direction include \citet{dwork2012fairness} seminal work, which highlights the relation 
between individual fairness and differential privacy, and 
\citet{mahdi2020improving}, which shows that the private exponential 
mechanism can produce fair outcomes in some selection problems. 
Works in the second category study the setting under which a fair model 
can leak privacy 
\cite{mozannar2020fair, jagielski2019differentially, chang2020privacy,tran2020differentially,Xu_2019}. 
These works propose learning frameworks that guarantee DP while also 
encouraging the satisfaction of different notions of fairness. 
{For example, \citet{xu2020removing} proposes a private and fair variant of DP-SGD that uses separate clipping bounds for each groups of individuals. Such proposal encourages accuracy parity at the expense of an extra privacy cost (required to customize the clipping bound for each group).}
Works in the last category show that private mechanisms can have a 
negative impact towards fairness \cite{pujol:20, xu2020removing, 
NEURIPS2019_eugene, farrand2020neither, tran2020differentially}.
For example, Cummings et al.~\cite{cummings2019compatibility} shows 
that it is impossible to achieve \emph{exact} equalized odds while 
also satisfying pure DP. 
\citet{pujol:20} observe that decisions made using a 
private version of a dataset may disproportionately affect some 
groups over others. 
Similar observations were also made in the context of model learning. \citet{NEURIPS2019_eugene} empirically observed 
that the accuracy of a DP model trained using DP-SGD drops 
disproportionately across groups causing larger negative impacts to 
the underrepresented groups. 
\citet{farrand2020neither} reaches similar conclusions. The authors empirically show that the disparate impact of differential privacy on model accuracy is not limited to highly imbalanced data and can occur even in situations where the classes are slightly imbalanced. 

This paper builds on this body of work and their important 
empirical observations. It derives the conditions and studies the 
causes of unfairness in the context of private empirical risk minimization 
problems as well as it introduces mitigating guidelines.

\section{Preliminaries}
\label{sec:preliminaries}
 Differential privacy (DP) \cite{dwork:06} is a strong privacy notion used to quantify and bound the privacy loss of an individual participation to a computation. 
 Informally, it  states that the probability of any output does not change much when a record is added or removed from a dataset, limiting the amount of information that the output reveals about any individual.  
The action of adding or removing a record from a dataset $D$, resulting in a new dataset $D'$, defines the notion of \emph{adjacency}, denoted $D \sim D'$.
\begin{definition}
  \label{dp-def}
  A mechanism $\cM \!:\! \mathcal{D} \!\to\! \mathcal{R}$ with domain $\mathcal{D}$ and range $\mathcal{R}$ is $(\epsilon, \delta)$-differentially private, if, for any two adjacent inputs $D \sim D' \!\in\! \mathcal{D}$, and any subset of output responses $R \subseteq \mathcal{R}$:
  \[
      \Pr[\cM(D) \in R ] \leq  e^{\epsilon} 
      \Pr[\cM(D') \in R ] + \delta.
  \]
\end{definition}
\noindent 
Parameter $\epsilon > 0$ describes the \emph{privacy loss} of the algorithm, with values close to $0$ denoting strong privacy, while parameter 
$\delta \in [0,1)$ captures the probability of failure of the algorithm to satisfy $\epsilon$-DP. 
The global sensitivity $\Delta_\ell$ of a real-valued 
function $\ell: \mathcal{D} \to \mathbb{R}^k$ is defined as the maximum amount 
by which $\ell$ changes  in two adjacent inputs:
\(
  \Delta_\ell = \max_{D \sim D'} \| \ell(D) - \ell(D') \|.
\)
In particular, the Gaussian mechanism, defined by
\(
    \mathcal{M}(D) = \ell(D) + \mathcal{N}(0, \Delta_\ell^2 \, \sigma^2), 
\)
\noindent where $\mathcal{N}(0, \Delta_\ell^2\, \sigma^2)$ is 
the Gaussian distribution with $0$ mean and standard deviation 
$\Delta_\ell^2\, \sigma^2$, satisfies $(\epsilon, \delta)$-DP for 
$\delta \!>\! \frac{4}{5} \exp(-(\sigma\epsilon)^2 / 2)$ 
and $\epsilon \!<\! 1$ \cite{dwork:14}. 

\smallskip
\section{Problem settings and goals}
\label{sec:problem}

{The paper adopts boldface symbols to describe vectors (lowercase) and matrices (uppercase). Italic symbols are used to denote scalars (lowercase) and data features or random variables (uppercase). Notation $\|\cdot\|$ is used to denote the $L_2$ norm.} 
The paper considers datasets $D$ consisting of $n$ individuals' data points $(X_i, A_i, Y_i)$, with $i \!\in\! [n]$ drawn i.i.d.~from an unknown distribution. Therein, $X_i \!\in\! \mathcal{X}$ is a feature vector, $A_i \!\in\! \mathcal{A}$ is a protected group attribute, 
and $Y_i \!\in\! \mathcal{Y}$ is a label.
For example, consider the case of a classifier that needs to predict the risks associated with a lending decision. The training example features $X_i$ may describe the individual's demographics, education, credit score, and loan amount, the protected attribute $A_i$ may describe the individual gender or ethnicity, and $Y_i$ represents whether or not the individual will default on the loan.
The goal is to learn a classifier $f_\btheta : \mathcal{X} \to \mathcal{Y}$, where $\btheta$ is a vector of real-valued parameters, 
that 
guarantees the \emph{privacy} of each individual data $(X_i,A_i,Y_i)$ in $D$. 
The model quality is measured in terms of a 
nonnegative \emph{loss function} $\ell: \mathcal{Y} \times \mathcal{Y} \to \mathbb{R}_+$, and the problem is that of minimizing the empirical risk (ERM) function:
\begin{flalign}
\label{eq:erm}
    \min_\btheta \cL(\btheta; D) = \frac{1}{n} \sum_{i=1}^n 
    \ell(f_\btheta(X_i), Y_i) .
    \tag{L}
\end{flalign}
For a group $a \in \cA$, the paper uses $D_a$ to denote the subset of $D$ containing exclusively samples whose group attribute $A = a$. 
The paper focuses on learning classifiers that protect the disclosure of the individuals' data using the notion of differential privacy and it analyzes the fairness impact (as defined next) of privacy on different groups of individuals. Importantly, the paper assumes that the attribute $A$ is not part of the model input during inference. 
\setcounter{equation}{0}

\noindent\textbf{Fairness}
The fairness analysis focuses on the notion of \emph{excessive risk}, a widely adopted metric in private learning \cite{NIPS2017_f337d999,ijcai2017548}. 
It defines the difference between the private and non private risk functions: 
\begin{flalign}
\label{def:excessiver_risk}
  R(\btheta, D) = \EE_{\tilde{\btheta}}  
  \left[ \cL( \tilde{\btheta}; D ) \right]
       - \cL( \btheta^*; D),
\end{flalign}
where the expectation is defined over the randomness of the private mechanism and $\tilde{\btheta}$ denotes the private model parameters while $\btheta^* = \argmin_\btheta \cL(\btheta; D)$.
The paper uses shorthands $R(\btheta)$ and $R_a(\btheta)$ to denote, 
respectively, the population-level $R(\btheta, D)$ excessive risk and the group level $R(\btheta, D_a)$ excessive risk for group $a$. 
Fairness is measured with respect to the \emph{excessive risk gap}:
\begin{flalign} 
\label{def:risk_gap}
\xi_a = | R_a(\btheta) - R(\btheta) |.
\end{flalign}
(Pure) fairness is achieved when $\xi_a = 0$ for all groups $a \in \cA$ and, thus, a private and fair classifier aims at minimizing the maximum excessive risk gap among all groups. 
The paper assumes that the private mechanisms are non-trivial, i.e., they minimize the population-level excessive risk $R(\btheta)$.

All proofs are reported in the Appendix, Section \ref{sec:missing_proofs}.

\section{Warm up: output perturbation}
\label{sec:optput_pert}
\newcommand{\indep}{\perp \!\!\! \perp}

The paper starts with analyzing fairness under the DP setting induced by an output perturbation mechanism. 
In this setting the analysis restricts to twice differentiable and convex loss functions $\ell$. 
Output perturbation is a standard DP paradigm in which noise calibrated to the function sensitivity is added directly to the output of the computation. In the context of the \emph{regularized} ERM problem, adding noise drawn from a Gaussian distribution $\mathcal{N}(0, \Delta_\ell^2 \sigma^2)$ to the optimal model parameters $\btheta^*$ ensures $(\epsilon, \delta)$-differential privacy \cite{chaudhuri2011differentially}.  Therein, $\Delta_\ell = \nicefrac{2}{n \lambda}$  with regularization parameter $\lambda$. 
The following result sheds light on the unfairness induced by this  mechanism.
\begin{theorem}
\label{thm:output}
Let $\ell$ be a twice differentiable and convex loss function and consider the output perturbation mechanism described above.  
Then, the excessive risk gap for group $a \in \cA$ \mbox{is approximated by:}
\begin{flalign}
    \xi_a \approx \frac{1}{2}\Delta^2_\ell\sigma^2  \left| 
    \Tr(\bH^a_\ell) - \Tr(\bH_\ell) \right|,
    \label{eq:corr_excessive_hessian}
\end{flalign}
where 
\( \bH^a_\ell \!=\!\! \nabla^2_{\btheta^*} \sum_{(X,A,Y) \in D_a} 
   \!\!\ell( f_{\btheta^*}(X), Y) \)
 is the Hessian matrix of the loss function, at the optimal parameters 
 vector $\btheta^*$, computed using the group data $D_a$, 
\( \bH_\ell \) is the analogous Hessian computed using the population
data $D$, and $\Tr(\cdot)$ denotes the trace of a matrix. 
\end{theorem}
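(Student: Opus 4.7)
My overall plan is to quantify both $R_a(\btheta)$ and $R(\btheta)$ via a second-order Taylor expansion of the loss around the optimum $\btheta^*$, exploit the Gaussian nature of the output-perturbation noise to evaluate the resulting expectations in closed form, and then take the absolute difference. The trace-of-Hessian structure in the claim strongly suggests that the dominant contribution will come from the quadratic Taylor term, since $\EE[\bm{z}^{\top} M \bm{z}] = \Tr(M \cov(\bm{z}))$ for any zero-mean noise $\bm{z}$.

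\textbf{Setup and expansion.} Write $\tilde{\btheta} = \btheta^* + \bm{z}$ with $\bm{z} \sim \cN(\bm{0}, \Delta_\ell^2 \sigma^2 \bm{I})$. For any dataset $S \in \{D, D_a\}$, twice-differentiability lets me expand
\[
  \cL(\tilde{\btheta}; S) = \cL(\btheta^*;S) + \nabla_{\btheta^*} \cL(\btheta^*;S)^{\top} \bm{z} + \tfrac{1}{2}\, \bm{z}^{\top} \bH^S_\ell \, \bm{z} + O(\|\bm{z}\|^3),
\]
where $\bH^S_\ell$ is the Hessian of the loss on $S$ at $\btheta^*$, consistent with the paper's $\bH_\ell \equiv \bH^D_\ell$.

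\textbf{Taking expectations.} Plugging into $R(\btheta, S) = \EE_{\tilde{\btheta}}[\cL(\tilde{\btheta};S)] - \cL(\btheta^*;S)$, the zeroth-order terms cancel. The linear term $\nabla \cL(\btheta^*;S)^{\top} \EE[\bm{z}]$ vanishes because $\EE[\bm{z}] = \bm{0}$; note this kills the linear contribution even when $S = D_a$, where $\btheta^*$ is \emph{not} a minimizer and $\nabla \cL(\btheta^*; D_a)$ is generically nonzero. For the quadratic term I invoke the standard identity
\[
  \EE[\bm{z}^{\top} M \bm{z}] = \Tr(M \cov(\bm{z})) = \Delta_\ell^2 \sigma^2 \,\Tr(M),
\]
yielding $R_a(\btheta) \approx \tfrac{1}{2}\Delta_\ell^2 \sigma^2\, \Tr(\bH^a_\ell)$ and $R(\btheta) \approx \tfrac{1}{2}\Delta_\ell^2 \sigma^2\, \Tr(\bH_\ell)$. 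Moreover, because the Gaussian is symmetric, all odd central moments vanish, so the cubic Taylor remainder contributes exactly zero in expectation; the approximation error therefore sits at the quartic order. Subtracting and taking absolute values gives the stated formula.

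\textbf{Main obstacle.} The Gaussian-moment calculation is routine, and convexity is used only to guarantee that $\btheta^*$ is a well-defined global minimizer of $\cL(\cdot; D)$. The substantive point is justifying the ``$\approx$'' sign by showing the dropped Taylor terms are negligible. With bounded higher-order derivatives of $\ell$ and the post-output-perturbation regime $\Delta_\ell = 2/(n\lambda)$, the next nonvanishing term scales as $\Delta_\ell^4 \sigma^4 \, \EE[\|\bm{z}\|^4 / \|\bm{z}\|^4] \times (\text{4th-derivative norm})$, which is an order smaller than the leading $\Delta_\ell^2 \sigma^2$ contribution in the small-noise regime of interest. Making this remainder bound fully quantitative (rather than asymptotic) is the only delicate step I anticipate; the rest of the argument is a two-line consequence of Taylor's theorem and Isserlis' identity for Gaussian quadratic forms.
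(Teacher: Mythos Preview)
Your proposal is correct and follows essentially the same route as the paper: second-order Taylor expansion of $\cL(\btheta^*+\bm{z};S)$ around $\btheta^*$, the linear term vanishing because $\EE[\bm{z}]=\bm{0}$, and the quadratic term reducing to $\tfrac{1}{2}\Delta_\ell^2\sigma^2\,\Tr(\bH^S_\ell)$ via the Gaussian quadratic-form identity. The paper's proof simply stops at the second-order truncation without any remainder analysis, so your discussion of the vanishing cubic term and the quartic-order error already goes beyond what the paper provides.
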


The approximation above follows form a second order Taylor expansion 
of the loss function, linearity of expectation, and the properties of 
Gaussian distributions. 
{It uses that fact that the excessive risk $R_a(\btheta)$ for a 
group $a$ can be approximated as $\nicefrac{1}{2}\Delta^2_\ell\sigma^2 \Tr(\bH^a_\ell)$.} 
The proof is reported in Appendix \ref{sec:missing_proofs}.

Theorem \ref{thm:output} sheds light on the relation between fairness and the difference in the local curvatures of the losses $\ell$ associated with a group and the population and provides a necessary condition to guarantee pure fairness. 
It suggests that output perturbation mechanisms may introduce unfairness when the local curvatures associated with the loss function of different groups differ substantially from one another. Additionally, the unfairness level is proportional to the amount of noise $\sigma$ or, equivalently, inversely proportional to the privacy parameter $\epsilon$, for a fixed $\delta$.
Finally, it also suggests that groups with larger Hessian traces $\Tr(\bH_a^\ell)$ will have larger excessive risk compared to groups with smaller Hessian traces. 
An additional analysis on the reasons behind why different groups may have large differences in their associated Hessian traces is provided in Section \ref{sec:noise}. 

\begin{wrapfigure}[12]{r}{210pt}
\vspace{-12pt}
\centering
\includegraphics[width=\linewidth]{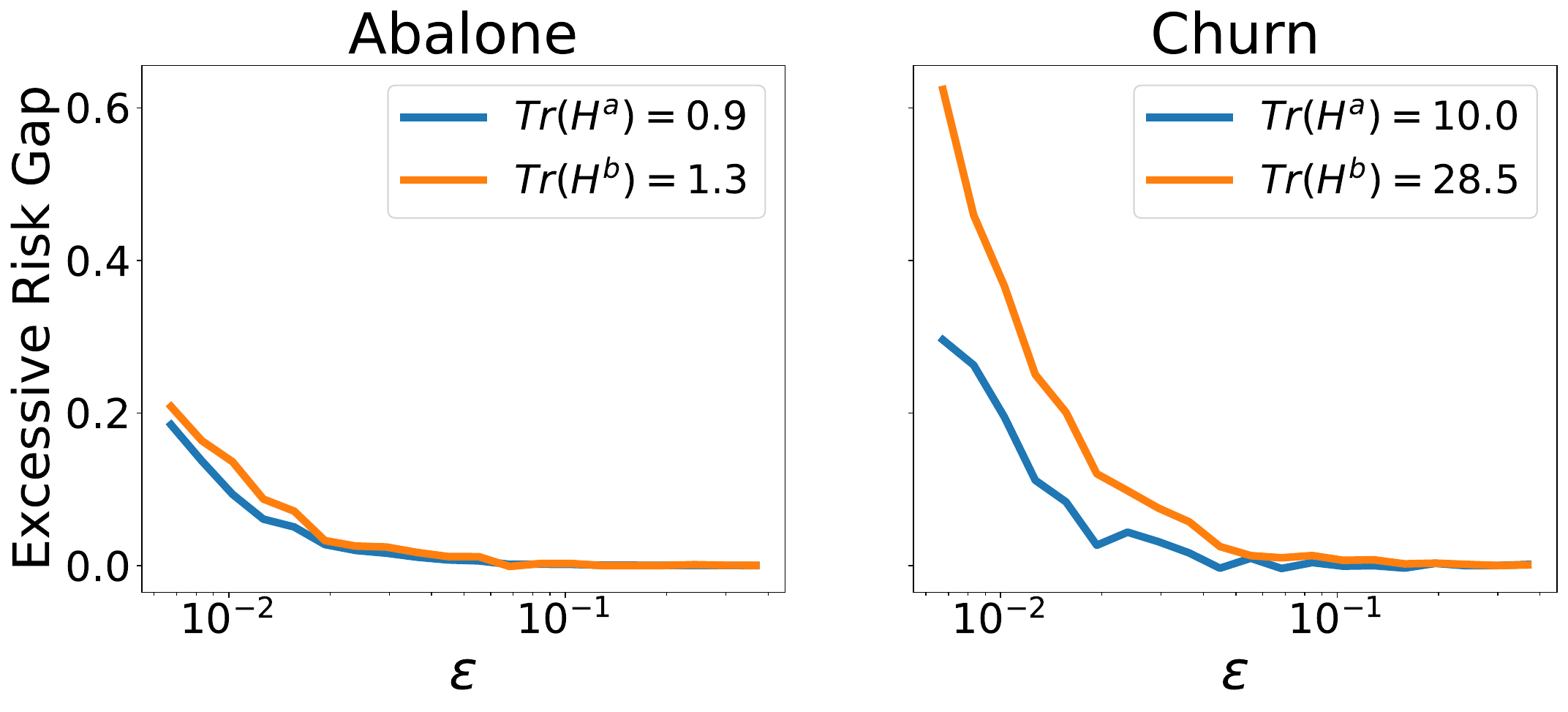}
\vspace{-12pt}
\caption{\small Correlation between excessive risk gap and Hessian 
Traces at varying of the privacy loss $\epsilon$.}
\label{fig:output_pertb}
\end{wrapfigure}

Figure \ref{fig:output_pertb} illustrates Theorem \ref{thm:output}. The plots show the correlation between the excessive risk gap $\xi_z(\btheta)$\footnote{In all experiment presented, the excessive risk gap is approximated by sampling over 100 repetitions.} and the quantity $\Tr(\bH_z^\ell)$ for each group $z \in \cA$, at varying of the privacy loss $\epsilon \in [0.005, 0.5], \delta=1e^{-5}$ on two datasets.
Each data point represents the average of 100 runs of a DP Logistic Regression (obtained with output perturbation) on each group $z \in \cA$. Details on dataset and experimental setting are provided in Appendix \ref{sec:experimental_settings} and additional experiments in Appendix \ref{sec:additional_experiments}.
Note the positive correlation between the excessive risk and the Hessian trace: \emph{Groups with larger Hessian trace tend to have larger excessive risks}. 
Note also the inverse correlation between $\epsilon$ and the dependency between the excessive risk and the Hessian trace. This is due to that larger $\epsilon$ values require smaller $\sigma$ values, and thus, as shown in Equation \ref{eq:corr_excessive_hessian}, the dependency between the excessive risk and Hessian trace is attenuated.

The following illustrates that even a class of simple linear models may not to satisfy pure fairness.
\begin{corollary}
\label{cor:l2_loss}
Consider the ERM problem for a linear model $f_{\btheta}(X) \defeq \btheta^T X$, with $L_2$ loss function i.e., $\ell(f_{\btheta}(X), Y) = (f_{\btheta}(X) - Y)^2$. Then, output perturbation does not guarantee pure fairness.
\end{corollary}

\noindent 
It follows from the observation that the Hessian of the $L_2$ loss for group $a \!\in\! \cA$, i.e.,
$\Tr(\bH^a_{\ell}) \!=\! \mathbb{E}_{X \sim D_a} \Tr(X X^T) \!=\!  
\mathbb{E}_{X \sim D_a} \|X\|^2$, depends solely on the input norms 
of the elements in $D_a$\footnote{Throughout the paper, we abuse notation and treat the dataset $D_Z$ associated with group $Z$ as distributions.}. 
Interestingly, this result highlights the relation between fairness and the average input norms of different group elements. When 
these norms are substantially different one another they will impact 
their respective excessive risks differently. {An additional analysis on this behavior is also discussed in Section \ref{sec:clipping}.} 

The following is a positive result.
\begin{corollary}
\label{cor:same_risk}
If for any two groups $a,b \in \cA$ their average group norms $\EE_{X_a \sim D_a} \| X_a \| = \EE_{X \sim D_b} \| X_b \| $ have identical values, then output perturbation with $L_2$ loss function provides pure fairness.
\end{corollary}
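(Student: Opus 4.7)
The plan is to apply Theorem 1 directly, using the Hessian computation already established in Corollary 1. By Theorem 1, the excessive risk gap satisfies
\begin{flalign*}
\xi_a \approx \frac{1}{2}\Delta_\ell^2 \sigma^2 \left| \Tr(\bH_\ell^a) - \Tr(\bH_\ell) \right|,
\end{flalign*}
so pure fairness ($\xi_a = 0$ for every $a \in \cA$) follows as soon as $\Tr(\bH_\ell^a) = \Tr(\bH_\ell)$ for all groups.

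Next, I would compute the traces explicitly for the linear model $f_\btheta(X) = \btheta^T X$ under the $L_2$ loss. The per-sample Hessian of $\ell(f_\btheta(X),Y) = (\btheta^T X - Y)^2$ with respect to $\btheta$ is $2 X X^T$, independent of both $\btheta$ and $Y$. As already observed in the proof sketch of Corollary 1, this yields $\Tr(\bH_\ell^a) = \EE_{X \sim D_a} \|X\|^2$ and analogously $\Tr(\bH_\ell) = \EE_{X \sim D} \|X\|^2$.

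The combinatorial step is then to use the tower identity $\EE_{X \sim D} \|X\|^2 = \sum_{a \in \cA} p_a \, \EE_{X \sim D_a} \|X\|^2$, where $p_a = |D_a|/n$ is the population fraction of group $a$. If the per-group expectations all coincide with some common value $c$, then $\Tr(\bH_\ell) = c \sum_a p_a = c = \Tr(\bH_\ell^a)$ for every $a$, which plugged back into Theorem 1 yields $\xi_a \approx 0$.

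The main obstacle, which I would flag explicitly rather than calculate around, is that the hypothesis is stated in terms of the expected norms $\EE \|X\|$, whereas the traces appearing in Theorem 1 depend on the expected \emph{squared} norms $\EE \|X\|^2$. The argument above strictly requires equality of the second moments; this is automatic when the group norms are constant (the likely intended reading, and the setting in which $\EE \|X\|$ alone determines $\EE \|X\|^2$), and more generally the corollary should be read as the natural consequence of Corollary 1's Hessian-trace identity.
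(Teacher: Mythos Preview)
Your approach matches the paper's: both reduce to Theorem~1 via Corollary~1's trace computation and then observe that if all per-group traces coincide, the population trace---being a convex combination of the group traces---must equal each of them, yielding $\xi_a \approx 0$. Your flagged concern about $\EE\|X\|$ versus $\EE\|X\|^2$ is well taken: the paper is internally inconsistent on this point (the main-text sketch of Corollary~1 writes $\|X\|^2$, while the appendix proof and Equation~(11) write $\|X\|$), and the argument indeed requires equality of the squared-norm expectations, which the paper glosses over.
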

The above is a direct consequence of Corollary \ref{cor:l2_loss}. 
{\em Note also that pure fairness may be achieved, in this setting, by normalizing the input values for each group independently} (as shown in Appendix \ref{sec:additional_experiments}) although this solution requires accessing the sensitive group attributes at inference time.


\section{Gradient perturbation: DP-SGD}
\label{sec:dp-sdg}

Having identified the dependency between the Hessian of the model loss and the privacy parameters with the excessive risk gap 
in output perturbation mechanisms, this section extends the analysis to the context of DP Stochastic Gradient Descent (DP-SDG) \cite{abadi2016deep}. In contrast to output perturbation, DP-SGD does not restrict focus on convex loss functions and the privacy analysis does not require optimality of the model parameters $\btheta$, rendering it an appealing framework for DP ERM problems. 

\begin{wrapfigure}[12]{r}{220pt}
{\small
\begin{algorithm}[H]
  \caption{DP-SGD\!\!\!\!\!\!\!\!\!\!\!\!\!\!\!\!}
  \label{alg:alg1}
  \setcounter{AlgoLine}{0}
  \SetKwInOut{Input}{input}

  \Input{Training data $D$; Sample prob.~$q$; 
      Iterations $T$;~Noise\!\!\!\!\!\!\!\!\!\!\!\!\!\!\!\!\\variance $\sigma^2$; Clipping bound $C$; learning rate $\eta$}
  \label{line:1a}
  $\btheta_0 \gets \bm{0}^T$\\
  \label{line:2a}
  \For{iteration $t =  1,2, \ldots T$} {    
    \label{line:3a}
    $B \gets $ random sub-sample of $D$ with $\Pr q$\\
    \label{line:4a}
    \ForEach{ $(X_i, A_i, Y_i) \in B$} {
      \label{line:5a}
      $\bm{g}_i = \nabla \ell\left( f_{\btheta_t} (X_i), Y_i \right)$
    }
    \label{line:6a}
    $\bar{\bm{g}}_B \gets \frac{1}{|B|} 
    \hugP{\sum_i \pi_C(\bm{g}^i) + \cN(0, \bm{I} C^2 \sigma^2)}$\\
    $\btheta_{t+1} \gets \btheta_t - \eta \bar{\bm{g}}_B$
  }
\end{algorithm}
}
\end{wrapfigure}
In a nutshell, DP-SDG computes the gradients for each data sample in 
a random mini-batch $B$, clips their $L_2$-norm, 
adds noise to ensure privacy, and computes the average. 
Two key characteristics of DP-SGD are: {\bf (1)} Clipping the 
gradients whose $L_2$ norm exceeds a given bound $C$, and {\bf (2)} 
Perturbing the averaged clipped gradients with $0$-mean Gaussian 
noise with variance $\sigma^2 C^2$. 
The procedure is described in Algorithm \ref{alg:alg1}. Therein, 
$\bm{g}_i$ represents the gradient of a data sample $(X_i, A_i, Y_i)$,
$\bar{\bm{g}}_B$ the average clipped noisy gradient of the samples 
in mini-batch $B$, and the function $\pi_C (\bm{x}) = \bm{x} \cdot \min(1, \frac{C}{\| \bm{x} \|})$.

The following theorem is an important result of this section. 
It connects the expected loss $\EE[\cL(\btheta; D_a)]$ of a group $a \in \cA$ with its excessive risk $R_a(\btheta)$, which is, in turn, used in our fairness analysis. 
It decomposes the expected loss during private training into three key components: 
The first relates with the model parameters update and it is not 
affected by the private training. 
The other two relate with gradient clipping and noise addition, and, combined, capture the notion of excessive risk. 

\begin{theorem}
\label{thm:taylor}
Consider the ERM problem \eqref{eq:erm} with loss $\ell$ twice differentiable w.r.t.~the model parameters.
The expected loss $\EE[\cL(\btheta_{t+1}; D_a)]$ of group $a\!\in\!\cA$ at iteration $t\!+\!1$, is approximated as:
\begin{align}
\label{eq:thm2a}
 \mathbb{E}\left[\cL( \btheta_{t+1}; D_a )\right]
  & \approx  
  \underbrace{\cL(\btheta_{t}; D_a) - \eta 
  \hugA{\bm{g}_{D_a}, \bm{g}_D} 
  + \frac{\eta^2}{2} \mathbb{E}\hugB{\bm{g}_B^T \bH_{\ell}^a  \bm{g}_B}
  }_{ \mbox{non-private term}}
  \\
 \label{eq:thm2b}\tag{$R_a^{\clip}$}
 & + \underbrace{
    \eta \hugP{
    \hugA{ \bm{g}_{D_a}, \bm{g}_D } - 
    \hugA{ \bm{g}_{D_a}, \bar{\bm{g}}_D}
    } 
   + \frac{\eta^2}{2} 
    \hugP{
    \mathbb{E}\hugB{\bar{\bm{g}}_B^T \bH_{\ell}^a \bar{\bm{g}}_B}  - 
    \mathbb{E}\hugB{\bm{g}_B^T \bH_{\ell}^a \bm{g}_B}
   }
  }_{\mbox{private term due to clipping}} 
  \\
 \label{eq:thmc} \tag{$R_a^{\noise}$}
 & +\underbrace{
    \frac{\eta^2}{2} \Tr( \bH_{\ell}^a ) C^2 \sigma^2
    }_{\mbox{private term due to noise}}
\end{align}
where the expectation is taken over the randomness of the private 
noise and the mini-batch selection, and the terms $\bm{g}_Z$ and $\bar{\bm{g}}_Z$ denote, respectively, the average non-private and private gradients over subset 
$Z$ of $D$ at iteration $t$ (the iteration number is dropped for ease of notation).
\end{theorem}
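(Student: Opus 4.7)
The plan is to derive \eqref{eq:thm2a}--\eqref{eq:thmc} from a second-order Taylor expansion of $\cL(\cdot;D_a)$ around the current iterate $\btheta_t$, combined with the DP-SGD update rule $\btheta_{t+1} = \btheta_t - \eta \tilde{\bm{g}}_B$, where $\tilde{\bm{g}}_B = \frac{1}{|B|}\bigl(\sum_i \pi_C(\bm{g}_i) + \bm{z}\bigr)$ with $\bm{z} \sim \mathcal{N}(0, \bm{I} C^2 \sigma^2)$. The expectation that appears on the left is over the two independent sources of randomness introduced by Algorithm~\ref{alg:alg1}: the Poisson mini-batch $B$ (used with probability $q$) and the Gaussian noise $\bm{z}$. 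I would first fix the notation so that $\bar{\bm{g}}_Z$ denotes the average \emph{clipped} gradient over a set $Z$ (no noise), and $\bm{g}_Z$ the average (unclipped, unnoised) gradient; this makes $\mathbb{E}_B[\bar{\bm{g}}_B] = \bar{\bm{g}}_D$ and $\mathbb{E}_B[\bm{g}_B] = \bm{g}_D$.

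Concretely, the first step is to write
\begin{align*}
 \cL(\btheta_{t+1};D_a) \;\approx\; \cL(\btheta_t;D_a) \;-\; \eta\,\bm{g}_{D_a}^T \tilde{\bm{g}}_B \;+\; \frac{\eta^2}{2}\, \tilde{\bm{g}}_B^T \bH_\ell^a \tilde{\bm{g}}_B ,
\end{align*}
using that $\nabla_\btheta \cL(\btheta_t;D_a) = \bm{g}_{D_a}$ and $\bH_\ell^a$ is the Hessian of the group loss at $\btheta_t$ (the ``$\approx$'' absorbs the third-order Taylor remainder, consistent with the theorem's approximation). Splitting $\tilde{\bm{g}}_B = \bar{\bm{g}}_B + \bm{z}/|B|$ and taking expectations, the cross-term between $\bar{\bm{g}}_B$ and $\bm{z}$ vanishes because $\mathbb{E}[\bm{z}]=\bm{0}$ and $\bm{z}$ is independent of $B$, which kills the first-order noise contribution and yields
\[
\mathbb{E}\bigl[\tilde{\bm{g}}_B^T \bH_\ell^a \tilde{\bm{g}}_B\bigr] \;=\; \mathbb{E}\bigl[\bar{\bm{g}}_B^T \bH_\ell^a \bar{\bm{g}}_B\bigr] \;+\; \Tr(\bH_\ell^a)\,C^2 \sigma^2 ,
\]
via the standard identity $\mathbb{E}[\bm{z}^T \bH \bm{z}] = \Tr(\bH)\cov(\bm{z})$ for zero-mean Gaussians. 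This immediately isolates $R_a^{\noise}$.

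Second, I would perform the ``add and subtract'' trick to separate clipping from the purely non-private dynamics. In the linear term, write
\[
-\eta\,\langle \bm{g}_{D_a}, \bar{\bm{g}}_D\rangle \;=\; -\eta\,\langle \bm{g}_{D_a}, \bm{g}_D\rangle \;+\; \eta\bigl(\langle \bm{g}_{D_a}, \bm{g}_D\rangle - \langle \bm{g}_{D_a}, \bar{\bm{g}}_D\rangle\bigr),
\]
and in the quadratic term,
\[
\mathbb{E}\bigl[\bar{\bm{g}}_B^T \bH_\ell^a \bar{\bm{g}}_B\bigr] \;=\; \mathbb{E}\bigl[\bm{g}_B^T \bH_\ell^a \bm{g}_B\bigr] \;+\; \bigl(\mathbb{E}\bigl[\bar{\bm{g}}_B^T \bH_\ell^a \bar{\bm{g}}_B\bigr] - \mathbb{E}\bigl[\bm{g}_B^T \bH_\ell^a \bm{g}_B\bigr]\bigr).
\]
Grouping the ``pure gradient'' pieces gives the non-private line \eqref{eq:thm2a}, the two remainders form $R_a^{\clip}$ in \eqref{eq:thm2b}, and the Gaussian quadratic computation above furnishes $R_a^{\noise}$ in \eqref{eq:thmc}.

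The main obstacle, I expect, is bookkeeping rather than conceptual difficulty: one has to be careful that the Hessian $\bH_\ell^a$ in the second-order term is evaluated at $\btheta_t$ (so the Taylor approximation is linear in the random perturbation), that the mini-batch randomness and Gaussian noise are treated as independent so the cross-terms vanish cleanly, and that the notational overloading of $\bar{\bm{g}}_B$ (clipped only, vs.~clipped+noised as used in the text surrounding Algorithm~\ref{alg:alg1}) is made consistent so that the noise contribution is not double-counted between \eqref{eq:thm2b} and \eqref{eq:thmc}. Once these conventions are pinned down, the three-way decomposition follows from linearity of expectation and the trace identity.
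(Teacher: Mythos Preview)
Your proposal is correct and follows essentially the same route as the paper: a second-order Taylor expansion of $\cL(\cdot;D_a)$ around $\btheta_t$, vanishing of the noise cross-terms via $\mathbb{E}[\bm{z}]=0$, the trace identity for the quadratic Gaussian term, and an add-and-subtract to separate the non-private and clipping contributions. The paper organizes the computation slightly differently---it writes the non-private and private Taylor expansions side by side and then subtracts---but the underlying argument and the conventions you pin down (in particular treating $\bar{\bm{g}}_B$ as clipped but not noised) match the appendix proof exactly.
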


The result in Theorem \ref{thm:taylor} follows from a second order Taylor expansion of the non-private and private ERM functions $\cL(\btheta_{t} -\eta \bm{g}_B; D_a)$ and $\cL(\btheta_{t} -\eta (\bm{\bar{g}}_B +\mathcal{\bm{N}}(0, \bm{I}C^2\sigma^2); D_a)$, respectively, around $\btheta_t$ and by comparing their differences. Once again, proofs are reported in Appendix \ref{sec:missing_proofs}.

The first term in the expression (Equation \eqref{eq:thm2a}) denotes 
the Taylor approximation of the (non-private) SGD loss. Terms 
\eqref{eq:thm2b} and \eqref{eq:thmc} quantify, together, the 
excessive risk for group $a$. Therein, \eqref{eq:thm2b} quantifies 
the effect of clipping to the excessive risk, and \eqref{eq:thmc} 
quantifies the effect of perturbing the average gradients to the 
excessive risk.
Therefore, Theorem \ref{thm:taylor} shows that there are two main 
sources of disparate impact in DP-SDG training:
\begin{enumerate}[leftmargin=*, parsep=0pt, itemsep=2pt, topsep=-4pt]
    \item {\em Gradient clipping} 
    ($R_a^\clip$): which, in turn, depends of three factors: 
    {\bf (i)} The values of the Hessian matrix $\bH_\ell^a$ of the 
    loss function associated with group $a$;
    {\bf (ii)} The gradients values $\bg_{D_a}$ associated with the samples of group $a$; and
    {\bf (iii)} The clipping bound $C$, which appears in $\bar{\bg}_B$ and $\bar{\bg}_D$.
    
    \item {\em Noise addition}
    ($R_a^\noise$): which, in turn, depends on two factors:
    {\bf (i)} The values of the (trace of the) Hessian matrix 
    $\bH_\ell^a$ of the loss function associated with group $a$; and
    {\bf (ii)} The privacy loss parameters $(\epsilon, \delta, \Delta_\ell)$ (which, in turn, are characterized by the noise variance $C^2\sigma^2$).
\end{enumerate}
A schematic representation of these factors is shown in Figure 
\ref{fig:unfairness}. Therein,  $X_{D_a}$ denotes the features values $X \in \cX$ of the subset $D_a$ of $D$. 
{\em Theorem \ref{thm:taylor} entails that unfairness occurs whenever different groups have different values for any of the gradient clipping and noise addition excessive risk terms.} 

\begin{figure}[tb]
\centering
\includegraphics[width=0.7\linewidth]{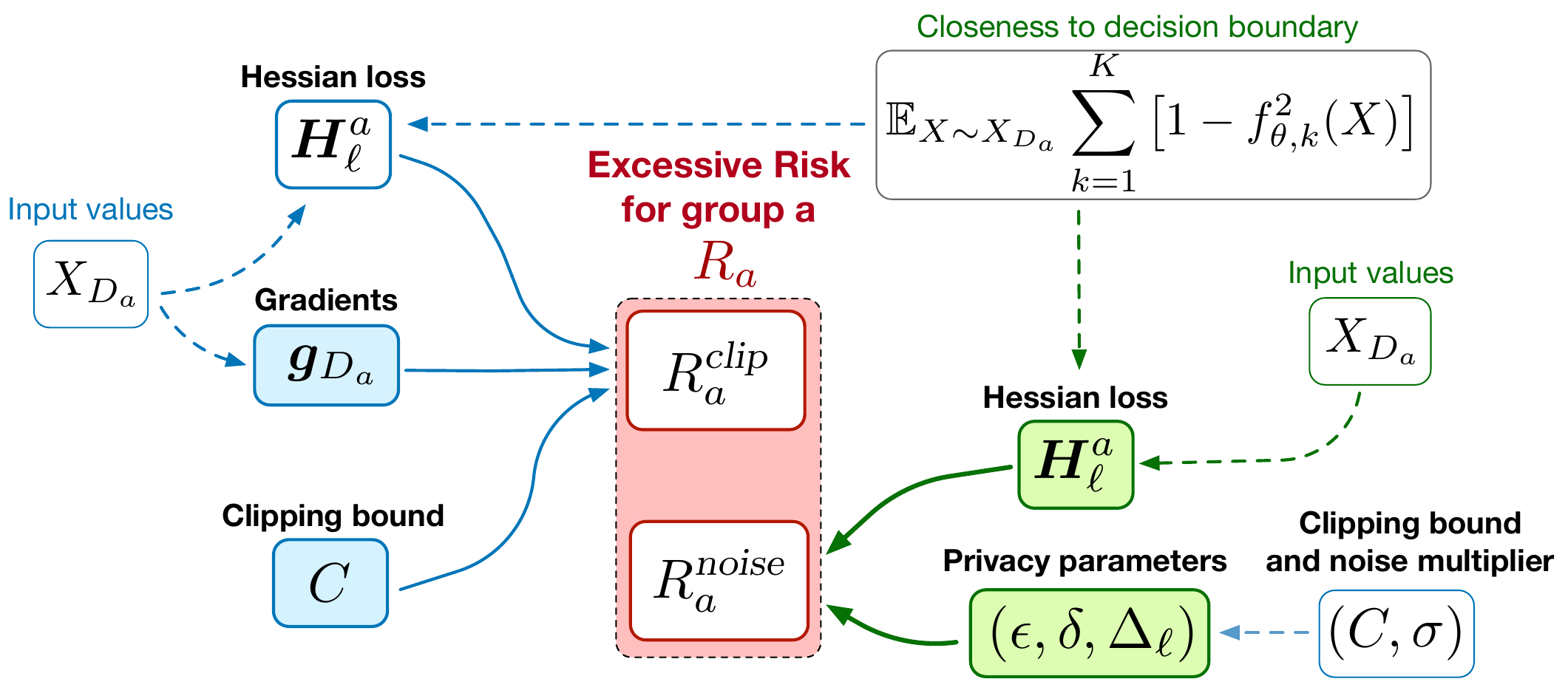}
\caption{\small Diagram of the factors affecting the excessive risk $R_a$ for a group $a \in \cA$ of individuals. 
Components affecting $R_a$ in output perturbation involve exclusively the green boxes while those affecting $R_a$ in DP-SGD involve both green and blue boxes. The main \emph{direct} factors (e.g., those appearing in Eq.~\eqref{eq:thm2a}) affecting the excessive risk clipping $R_a^\clip$ and noise $R_a^\noise$ components are highlighted within colored boxes. 
These direct factors are also regulated by \emph{latent} factors, shown in white boxes, with dotted lines illustrating dependencies.
\label{fig:unfairness}
}
\end{figure}

The next sections analyze the reasons behind the disparity in excessive risk focusing, independently, on terms $R_a^\clip$ (Section \ref{sec:clipping}) and $R_a^\noise$ (Section \ref{sec:noise}).
Independently studying these terms is motivated by observation that the clipping value $C$ regulates the dominance of a factor over the other. Indeed, for sufficiently large (small) $C$ values $R_a^\noise$ will dominate (be dominated by) $R_a^\clip$.\footnote{This observation relates with the bias-variance trade-off typically observed in DP-SGD \cite{tran2020differentially}.} 

\section{Why gradient clipping causes unfairness?}
\label{sec:clipping}

As highlighted above, there are three factors influencing the clipping effect to the excessive risk $R_a^\clip$: the \emph{Hessian loss}, the \emph{gradient values}, and the \emph{clipping bound}. 
This section illustrates their dependencies with the excessive risk, provides conditions to compare the disparate impacts between different groups, and shows the presence of an extra (latent) factor: the norm of the \emph{input values} $X_{D_a}$, which plays a role to this disparate impacts by indirectly controlling the norms of gradient $\bm{g}_{D_a}$ (see the diagram illustrated in Figure \ref{fig:unfairness}).

The next results assume that the empirical loss function $\cL(\btheta; D_a)$, associated with each group $a \in \cA$, is convex and $\beta_a$-smooth. The analysis also consider learning rates $\eta \leq \nicefrac{1}{\max_a \beta_a}$ and gradients $\bm{g}(B)$ and $\bm{\bar{g}}(B)$ with small variances. Note that this is not restrictive as the variance decreases as a function of the batch size $B$. 
Finally, for notational convenience, and w.l.o.g., the result focus on the case in which $|\cA| = 2$.
{As shown in the empirical assessment (see Appendix \ref{sec:additional_experiments}), however, the conclusions carry on even in cases when the above assumptions may not hold}.
\begin{wrapfigure}[18]{r}{140pt}
\vspace{-10pt}
\centering
    \includegraphics[width=1\linewidth]{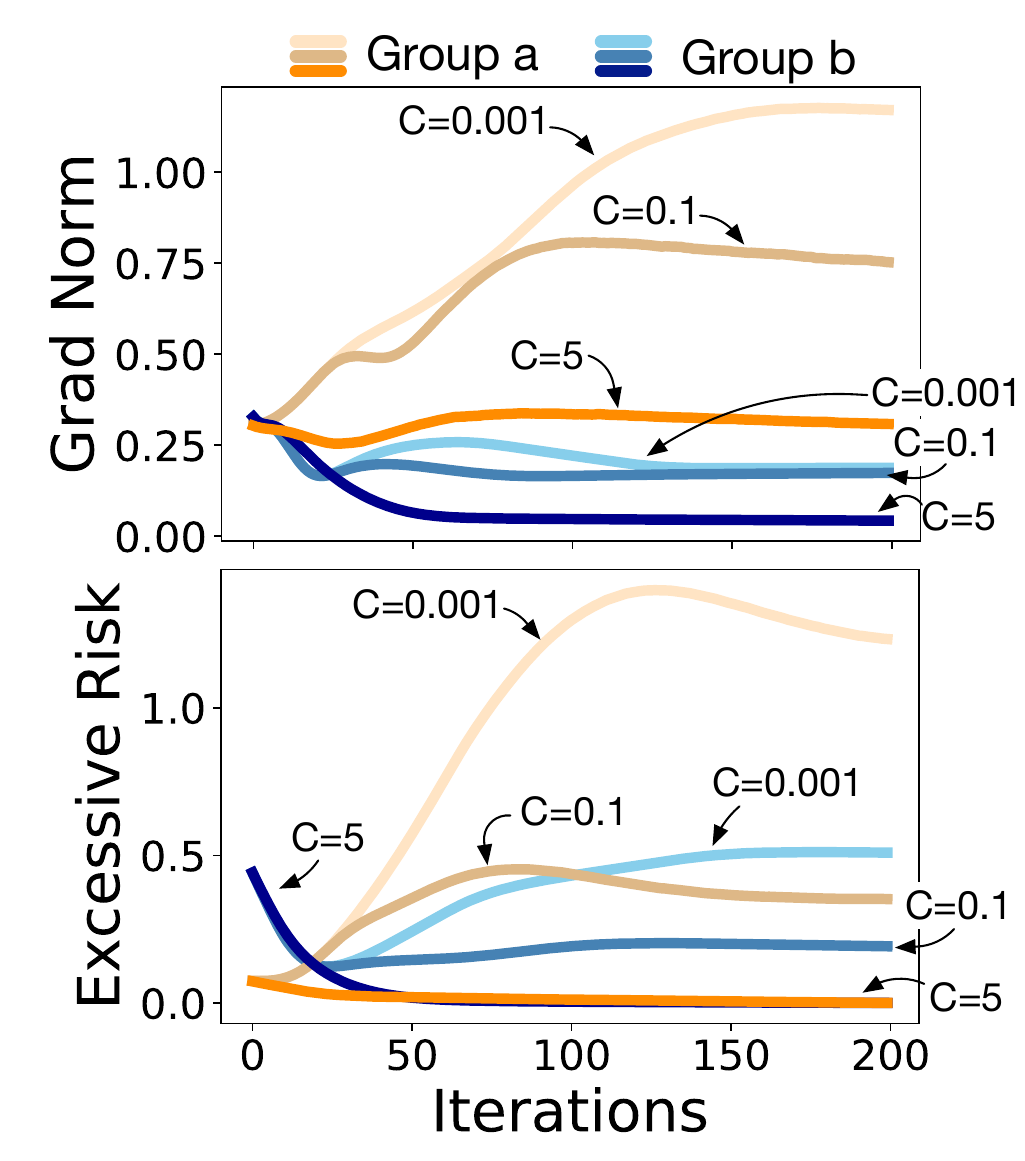}
    \vspace{-18pt}
    \caption{\small Impact of gradient clipping on gradient norms for 
    different clipping bounds. Bank dataset.
    \label{fig:rel_C_norm}}
\end{wrapfigure}

\begin{theorem}
\label{thm_large_grad_norm} 
Let $p_z = \nicefrac{|D_z|}{|D|}$ be the fraction of training samples in group $z \in \cA$. For groups $a, b \in \cA$, 
$R_a^\clip > R_b^\clip$ whenever:
\begin{equation}
\label{eq:grad_norm} 
  \norm{\bm{g}_{D_a}} \hugP{p_a - \frac{p_a
  ^2}{2}} \geq  
  \frac{5}{2} C +  \norm{\bm{g}_{D_b}}\hugP{1 + p_b +\frac{p_b^2}{2}}.
\end{equation}
\end{theorem}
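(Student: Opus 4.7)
The plan is to invoke Theorem \ref{thm:taylor} to write
\[
R_a^\clip \approx \eta\langle \bg_{D_a}, \bg_D - \bar{\bg}_D\rangle + \frac{\eta^2}{2}\bigl(\bar{\bg}_D^T \bH_\ell^a \bar{\bg}_D - \bg_D^T \bH_\ell^a \bg_D\bigr),
\]
after the small-variance assumption lets one replace the mini-batch expectations $\mathbb{E}[\bg_B]$, $\mathbb{E}[\bar{\bg}_B]$ by their population counterparts $\bg_D, \bar{\bg}_D$, and symmetrically for $R_b^\clip$. I would then derive a lower bound on $R_a^\clip$ and an upper bound on $R_b^\clip$ and ask when the former exceeds the latter. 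The key structural facts driving the argument are the group-level decompositions $\bg_D = p_a\bg_{D_a} + p_b\bg_{D_b}$, $\bar{\bg}_D = p_a\bar{\bg}_{D_a} + p_b\bar{\bg}_{D_b}$, together with $\|\bar{\bg}_{D_a}\|, \|\bar{\bg}_{D_b}\|\leq C$, since each clipped group-average is an average of vectors of norm at most $C$.

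For the lower bound on $R_a^\clip$ I would split $\langle \bg_{D_a},\bg_D-\bar{\bg}_D\rangle$ into an own-group piece $p_a\langle \bg_{D_a}, \bg_{D_a}-\bar{\bg}_{D_a}\rangle \geq p_a(\|\bg_{D_a}\|^2 - C\|\bg_{D_a}\|)$ and a cross-group piece bounded below by $-p_b\|\bg_{D_a}\|(\|\bg_{D_b}\|+C)$, each via Cauchy--Schwarz and the clip bound. The Hessian term is controlled using convexity ($\bar{\bg}_D^T \bH_\ell^a \bar{\bg}_D\geq 0$) and $\beta_a$-smoothness ($\bg_D^T \bH_\ell^a \bg_D \leq \beta_a\|\bg_D\|^2$). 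The learning-rate hypothesis $\eta\leq 1/\max_a\beta_a$ collapses $\eta^2\beta_a \leq \eta$, so this piece contributes at least $-\tfrac{\eta}{2}(p_a\|\bg_{D_a}\|+p_b\|\bg_{D_b}\|)^2$; the $-p_a^2/2$ coefficient appearing in the expansion of this square combines with the first-order $+p_a$ to produce exactly the stated factor $p_a - p_a^2/2$ on the leading $\|\bg_{D_a}\|^2$ contribution.

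The upper bound on $R_b^\clip$ is obtained by mirror-image reasoning: Cauchy--Schwarz in the opposite direction gives $\eta\bigl(p_a\|\bg_{D_a}\|\|\bg_{D_b}\| + p_b\|\bg_{D_b}\|^2 + C\|\bg_{D_b}\|\bigr)$, while the Hessian term is bounded above by $\tfrac{\eta^2}{2}\beta_b\|\bar{\bg}_D\|^2 \leq \tfrac{\eta}{2}C^2$, using $\|\bar{\bg}_D\|\leq C$ and $\eta^2\beta_b\leq\eta$. Subtracting produces a quadratic-in-norms inequality with leading coefficient $p_a-p_a^2/2$ on $\|\bg_{D_a}\|^2$; dividing through by $\|\bg_{D_a}\|$ and bounding the ratios $\|\bg_{D_b}\|/\|\bg_{D_a}\|$ and $C/\|\bg_{D_a}\|$ by $1$ in the implicit large-gradient regime $\|\bg_{D_a}\|\geq \max(C,\|\bg_{D_b}\|)$ rearranges this into the stated linear condition, with $\tfrac{5}{2}C$ emerging as the sum of the three $C$-linear remainders and $1+p_b+p_b^2/2$ emerging from combining the cross-group inner product with the residual $\tfrac{p_b^2}{2}\|\bg_{D_b}\|^2$ term in the Hessian expansion.

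The principal obstacle is coefficient bookkeeping: the natural inequality produced by the bounds above is quadratic in the group-gradient norms, whereas the target condition is linear in them. Closing this gap requires leveraging the implicit large-gradient regime $\|\bg_{D_a}\| \gg \max(C,\|\bg_{D_b}\|)$ to absorb quadratic residues into linear coefficients, and applying the step-size--smoothness identity $\eta^2\beta\leq\eta$ in opposite directions for the two groups simultaneously, so that the final inequality is first order in $\eta$ and independent of the specific values of $\beta_a$ and $\beta_b$.
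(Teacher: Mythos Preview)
Your proposal is correct and follows essentially the same route as the paper. The paper organizes the argument slightly differently---it first isolates the intermediate bound
\[
R_a^{\clip}-R_b^{\clip}\;\geq\;\eta\Bigl(\langle \bg_{D_a}-\bg_{D_b},\,\bg_D-\bar{\bg}_D\rangle-\tfrac{1}{2}\bigl(\|\bg_D\|^2+C^2\bigr)\Bigr)
\]
as a separate lemma (working directly with the difference rather than bounding each group term separately), and only then expands via $\bg_D=p_a\bg_{D_a}+p_b\bg_{D_b}$---but the ingredients are identical: the clip bound $\|\bar{\bg}_{D_z}\|\leq C$, Cauchy--Schwarz on every cross inner product, PSD/smoothness to sandwich the Hessian terms, $\eta\beta\leq 1$ to reduce to first order in $\eta$, and finally the observation that the hypothesis forces $\|\bg_{D_a}\|>\max(C,\|\bg_{D_b}\|)$ to convert the resulting quadratic inequality into the stated linear one.
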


Theorem \ref{thm_large_grad_norm} provides a sufficient condition for 
which a group may have larger excessive risk than another solely based 
on the clipping term analysis. 
{\em It relates unfairness with the average (non-private) gradient 
norms of the groups $\bm{g}_{D_a}$ and $\bm{g}_{D_b}$ and the 
clipping value $C$}. 
As shown in \mbox{the diagram of Figure~\ref{fig:unfairness}}, this result relates two main factors to the excessive risk due to clipping $\bm{R}^{clip}_a$: 
{\bf (1)} the \emph{clipping bound $C$}, and
{\bf (2)} the (norm of the) \emph{gradients $\|\bm{g}_{D_a}\|$}.
While the \emph{relative dataset size $p_a=\nicefrac{|D_a|}{|D|}$} of each group also appears in Equation \eqref{eq:grad_norm}, our extensive experiments showed that this factor may not play a prime role in controlling the disparate impacts (see Appendix \ref{sec:additional_experiments}).

The relation with these two factors is illustrated in Figure 
\ref{fig:rel_C_norm}, which shows the impact of gradient clipping (for different $C$ values) to the gradient norms (top) and to the excessive risk $R_a$ (bottom). It shows that the gradient norms reduce as $C$ increases and that the group with larger gradient norms have also larger excessive risk. 

Finally, the diagram in Figure \ref{fig:unfairness} also shows the presence of an 
additional factor affecting the gradient 
norms: \emph{the input norms}, whose average is denoted $X_{D_a} = 
\EE_{X \sim X_{D_a}} \|X\|$, in the figure.
While this aspect is not directly evident in Theorem 
\ref{thm_large_grad_norm}, the following examples highlight the 
positive correlation between input and gradients norms when considering a linear classifier and a feedforward neural network.  

\begin{example} 
\label{example:LR}
Consider the ERM problem \eqref{eq:erm} for a linear classifier $f_{\btheta}(X) \defeq \textsl{softmax}(\btheta^T X)$  and cross-entropy loss $\ell(f_{\btheta}(X),Y) = -\sum_{i=1}^K Y_i \log \boldsymbol{f}^i_{\btheta}(X)$ where $K$ is the number of classes. The gradient of the loss function at a given data point $(X,Y)$ is:
\(
    \bm{g}_{X} = \nabla{\btheta}\ell(f_{\btheta}(X),Y)   =  (\boldsymbol{Y} - \boldsymbol{f}) \otimes X.
\)
The result is by \cite{bohning1992multinomial} and it suggests that the gradient norms are proportional to the input norms: $\| \bm{g}_{X}\| \propto \|X\|$.
\end{example}

\begin{example} 
\label{example:NN}
Next, consider a neural network with single 
hidden layer, $f_{\btheta}(X) \defeq \textsl{softmax} \left(\btheta_1^T o(\btheta^T_2 X) \right)$, where $o(\cdot)$ is a proper 
activation function and $\btheta_1, \btheta_2$ are the model 
parameters. 
It can be seen that $\|g_X\| \propto \| \nabla_{\btheta_1} 
 \ell(f_{\btheta}(X),Y) \| + \| \nabla_{\btheta_2} \ell(f_{\btheta}(X),Y) \|$, 
 where $\| \nabla_{\btheta_2} \ell(f_{\btheta}(X),Y) \| \propto \| X\|$. 
{The full derivations are reported in Appendix \ref{sec:additional_examples}.}
\end{example}
Both examples illustrate a correlation between the gradients norms $\| \bm{g}_X\|$ and input norms $\|X\|$ for a given data sample $X$.
This behavior is also illustrated in Figure \ref{fig:corr_grad_input}, 
which highlights a positive correlation between the individual inputs 
and the gradients norms obtained while privately training a simple 
neural network (with one hidden layer) using DP-SGD on the Bank dataset. 
\begin{wrapfigure}[11]{r}{140pt}
\vspace{-6pt}
\centering
\includegraphics[width=\linewidth]{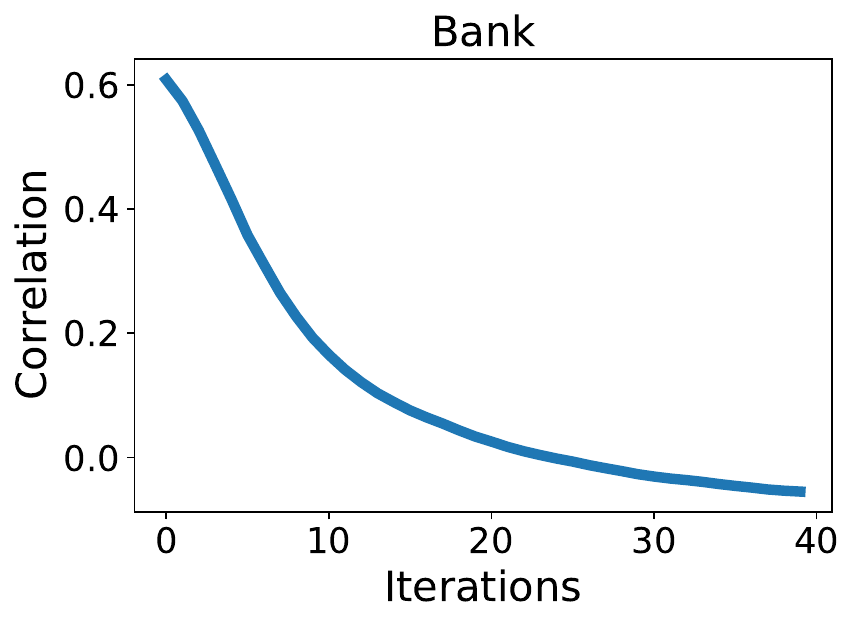}
\vspace{-18pt}
\caption{\small Correlation between inputs and gradients norms.}
\label{fig:corr_grad_input}
\vspace{-12pt}
\end{wrapfigure}
The experiment use $C=0.1$ and $\sigma =1$. The correlation 
decreases during training since the gradients norms reduce as training advances.

\emph{These observations imply that group data with large input 
norms---typically defining the tail of data distribution---result in 
large gradient norms and, thus, as shown in Theorem 
\ref{thm_large_grad_norm}, may have larger disproportionate impacts 
than groups with smaller input norms, under DP-SGD.} 
This analysis is in alignment with the empirical observation raised 
in \cite{NEURIPS2019_eugene}, showing that samples at the tail of a 
distribution may experience larger accuracy losses, in private 
training, with respect to~other samples.

While the above shows a dependency between gradients and clipping bound, as illustrated in the \eqref{eq:thm2b} equation,
the group excessive risk is also affected by the Hessian values. 
However, as shown in Appendix \ref{sec:additional_experiments}, the Hessian factor is almost always
dominated by the other factors examined in this section. 
This is due to the presence of the multiplier $\nicefrac{\eta^2}{2}$ which attenuate the impact of the Hessian value to the excessive risk due to clipping in conjunction with the smoothness assumptions, which prevents the Hessian values to grow too large. 

In summary, the main factors affecting $R^\clip_a$ for a group $a \in \cA$ are the norm of the group gradients $\bm{g}_{D_a}$, in turn 
controlled by the norm of the inputs $X_{D_a}$, and the clipping bound $C$.

\section{Why noise addition causes unfairness?}
\label{sec:noise}

Next, the paper analyzes the factors influencing the noise 
effect to the excessive risk $R_a^\noise$, which, as highlighted in 
Theorem \ref{thm:taylor}, for DP-SGD and Theorem  \ref{thm:output} for output perturbation, are the \emph{Hessian loss}, and the \emph{privacy loss parameters} $(\epsilon, \delta, \Delta_\ell)$ (see also Figure \ref{fig:unfairness}). 
Noting that the privacy parameters have a multiplicative effect on the Hessian loss (see Equations \eqref{eq:thmc} and \eqref{eq:corr_excessive_hessian}), the following analysis, treats them as constants, and restricts focus on the effects of the Hessian trace to the disparate impacts. 

The following result provide a condition to compare the disparate impacts between different groups, 
\begin{theorem}
\label{thm:add_noise}
For groups $a,b \in \cA$, $R^{noise}_a > R^{noise}_b$ whenever 
\[ \Tr(H^a_{\ell}) > \Tr(H^b_{\ell}).\]
\end{theorem}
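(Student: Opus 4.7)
The statement is essentially a direct read-off from the expression for $R_a^{\noise}$ already established in Theorem \ref{thm:taylor}. The plan is therefore to invoke that formula and observe that it is a monotone linear function of $\Tr(\bH_\ell^a)$ with a strictly positive multiplier.

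Concretely, I would first recall from Theorem \ref{thm:taylor} that
\[
R_a^{\noise} = \frac{\eta^2}{2}\, \Tr(\bH_\ell^a)\, C^2 \sigma^2,
\]
and analogously $R_b^{\noise} = \frac{\eta^2}{2}\, \Tr(\bH_\ell^b)\, C^2 \sigma^2$. Since the learning rate $\eta$, the clipping bound $C$, and the noise scale $\sigma$ are all strictly positive (nontrivial private training requires $\sigma > 0$ and $C > 0$, and we pick $\eta > 0$), the prefactor $\tfrac{\eta^2}{2} C^2 \sigma^2$ is a strictly positive constant, common to both groups.

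Subtracting the two expressions then gives
\[
R_a^{\noise} - R_b^{\noise} = \frac{\eta^2}{2}\, C^2 \sigma^2 \left( \Tr(\bH_\ell^a) - \Tr(\bH_\ell^b)\right),
\]
so $R_a^{\noise} > R_b^{\noise}$ is equivalent to $\Tr(\bH_\ell^a) > \Tr(\bH_\ell^b)$. This establishes the claimed implication.

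There is really no hard step here: the only mild subtlety is that the $R_a^{\noise}$ expression in Theorem \ref{thm:taylor} comes from a second-order Taylor expansion, so strictly speaking the equivalence holds under that approximation; I would state this explicitly (as is done consistently throughout the paper) rather than try to prove a sharper bound. The analogue for the output perturbation setting follows in exactly the same way from Theorem \ref{thm:output}, where the noise-induced excessive risk is $\tfrac{1}{2}\Delta_\ell^2 \sigma^2 \Tr(\bH_\ell^a)$, again linear and monotone in $\Tr(\bH_\ell^a)$ with a positive multiplier.
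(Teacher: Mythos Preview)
Your proposal is correct and matches the paper's own proof essentially line for line: both simply invoke the expression $R_a^{\noise} = \tfrac{\eta^2}{2}\Tr(\bH_\ell^a)C^2\sigma^2$ from Theorem~\ref{thm:taylor} and observe that the strictly positive prefactor $\tfrac{\eta^2}{2}C^2\sigma^2$ makes the inequality on the traces equivalent to the inequality on the noise excessive risks. Your added remarks on the Taylor-approximation caveat and the output-perturbation analogue are fine but go slightly beyond what the paper bothers to state.
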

Note the connection of the result above with Theorem \ref{thm:output}.
Additionally, as illustrated in the diagram of Figure \ref{fig:unfairness} the Hessian trace for a group is controlled by two (latent) factors: 
{\bf (1)} The average distance of the group data to the decision boundary, and 
{\bf (2)} The values of the group input norms. 
While these aspects are not directly evident in Theorem 
\ref{thm:add_noise}, the following highlights the 
positive correlation between these two factors and the Hessian Traces. 

\begin{example}
\label{exp:trace_hessian}
Consider the same setting of Example \ref{example:LR}.
The Hessian of the cross entropy loss of a sample $X \sim D$ is given by
$H^X_{\ell} = [\big(\mbox{diag}(\bm{f}) - \bm{f}\bm{f}^T  \big) \otimes X X^T ]$, where $\otimes$ is the Kronecker product \cite{bohning1992multinomial}. This result suggests that the trace of the Hessian for sample $X$ is proportional to its input norm: $\Tr(H^X_{\ell}) \propto  \|X\|^2$. Additionally it also shows that: $\Tr(H^X_{\ell}) \propto (1 - \sum_{k=1}^K \bm{f}_{\btheta, k}^2(X))$, where $K$ is the number of classes, whose term is connected to the distance to the decision boundary, as shown next.
\end{example}

The following result highlights the connection between the term 
$(1 - \sum_{k=1}^K \bm{f}_{\btheta, k}^2(X))$ and the distance of sample $X$ to the decision boundary.
\begin{theorem}
\label{thm:trace_hessian}
Consider a $K$-class classifier $\bm{f}_{\btheta,k}$ ($k \in [K]$). 
For a given sample $X \sim D$, the term
$\hugP{1 - \sum_{k=1}^K \bm{f}_{\btheta, k}^2(X)}$ 
is maximized when
$\bm{f}_{\btheta, k}(X) = \nicefrac{1}{K}$ and minimized when 
$\exists k \in [K]$ s.t.~$\bm{f}_{\btheta, k}(X) = 1$ 
and $\bm{f}_{\btheta, k'} = 0 \ \forall k'\in [K], k \neq k$.
\end{theorem}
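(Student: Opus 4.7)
The plan is to recast the claim as a statement about the function $h(\bm{p}) = \sum_{k=1}^K p_k^2$ on the probability simplex, since $\bm{p} \defeq (\bm{f}_{\btheta,1}(X), \ldots, \bm{f}_{\btheta,K}(X))$ is the output of a softmax and therefore satisfies $p_k \geq 0$ and $\sum_k p_k = 1$. Maximizing $1 - h(\bm{p})$ is equivalent to minimizing $h(\bm{p})$, and minimizing $1 - h(\bm{p})$ is equivalent to maximizing $h(\bm{p})$.

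First I would establish the maximum. Because $x \mapsto x^2$ is convex, Jensen's inequality (equivalently, the Cauchy--Schwarz or the QM--AM inequality) gives
\[
\sum_{k=1}^K p_k^2 \;\geq\; \frac{1}{K}\Bigl(\sum_{k=1}^K p_k\Bigr)^2 \;=\; \frac{1}{K},
\]
with equality if and only if $p_1 = p_2 = \cdots = p_K$. Combined with $\sum_k p_k = 1$, this forces $p_k = 1/K$ for every $k$, at which point $1 - h(\bm{p}) = 1 - 1/K$, giving the claimed maximizer.

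Next I would establish the minimum. Since $p_k \in [0,1]$ we have $p_k^2 \leq p_k$ coordinate-wise, and therefore
\[
h(\bm{p}) \;=\; \sum_{k=1}^K p_k^2 \;\leq\; \sum_{k=1}^K p_k \;=\; 1,
\]
with equality if and only if $p_k \in \{0,1\}$ for each $k$. Under the simplex constraint, the only feasible such points are the vertices, i.e., there exists a unique $k$ with $p_k = 1$ and $p_{k'} = 0$ for all $k' \neq k$. At these vertices, $1 - h(\bm{p}) = 0$, yielding the claimed minimizer.

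There is essentially no technical obstacle here: the argument is a two-line optimization on the simplex. The only small subtlety worth flagging is that the softmax output never actually attains a vertex of the simplex (since $p_k = \exp(\btheta_k^T X)/\sum_j \exp(\btheta_j^T X) > 0$), so the minimum is attained only in a limiting sense as $\btheta_k^T X - \max_{j\neq k}\btheta_j^T X \to \infty$; this matches the intuition that $(1 - \sum_k p_k^2)$ is small precisely when $X$ is far from the decision boundary, which is the connection the surrounding text draws on.
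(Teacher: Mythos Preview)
Your proposal is correct and follows essentially the same approach as the paper: the paper also uses the Cauchy--Schwarz (power-mean) inequality to obtain $\sum_k p_k^2 \geq 1/K$ with equality at the uniform vector, and the coordinate-wise bound $p_k^2 \leq p_k$ to get $\sum_k p_k^2 \leq 1$ with equality at a vertex. Your additional remark that the softmax never actually reaches a vertex (so the minimum is only attained in the limit) is a nice clarification that the paper does not make explicit.
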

That is, the term $\hugP{1 - \sum_{k=1}^K \bm{f}_{\btheta, k}^2(X)}$ 
is maximized (minimized) when the sample $X$ is close (far) to the decision boundary. Since, as shown in Example \ref{exp:trace_hessian} this term can be proportional to the Hessian trace, then the aforementioned relation also indicates a connection between the Hessian trace value for a sample and its distance to the decision boundary: 
The closest (farther) is a sample $X$ to the decision boundary the larger (smaller) is the associated Hessian trace value $\Tr(\bm{H}_\ell^X)$.
\begin{wrapfigure}[12]{r}{150pt}
\vspace{-8pt}
\centering
\includegraphics[width=\linewidth]{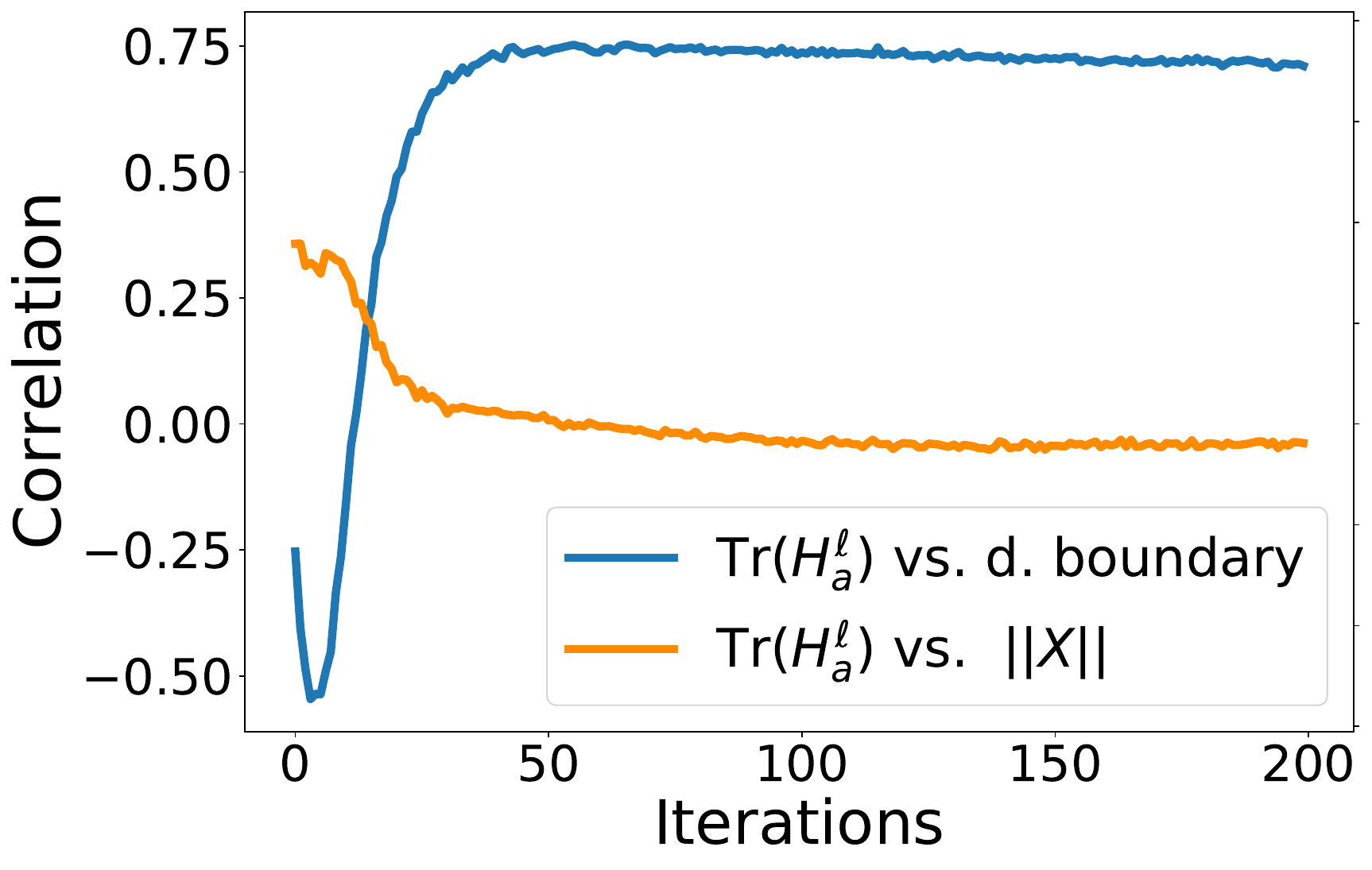}
\vspace{-14pt}
\caption{\small Correlation between trace of Hessian with closeness to 
boundary (dark color) and input norm (light color).}
\label{fig:corr_hessia_input}
\end{wrapfigure}
This is intuitive as the model decision are less robust to the presence of noise in the model (e.g., as that introduced by a DP mechanism) for the samples which are close to the decision boundary w.r.t.~those which are far from it. 

An analogous behavior is also observed in Neural Networks and described in {Appendix \ref{sec:additional_examples}} due to space constraints. 
Figure \ref{fig:corr_hessia_input} illustrates this behavior using the same setting adopted in Figure \ref{fig:corr_grad_input}. It highlights the positive correlation between the input norm, the trace of Hessian, and the closeness to the decision boundary for a given sample $X$.




While the above discusses the relation between input norms and Hessian losses, Figure \ref{fig_corr_input_excessive_risk} illustrates this dependencies with the excessive risk, which is one of the main objective of the analysis, on three datasets. \emph{Once again this observation recognizes the difference in input 
norms as a crucial proxy to unfairness: Groups with larger input 
norms will tend to have larger disproportionate impacts under private
training than groups with smaller input norms.} 

\begin{figure}[tb]
\centering
\includegraphics[width=0.75\linewidth]{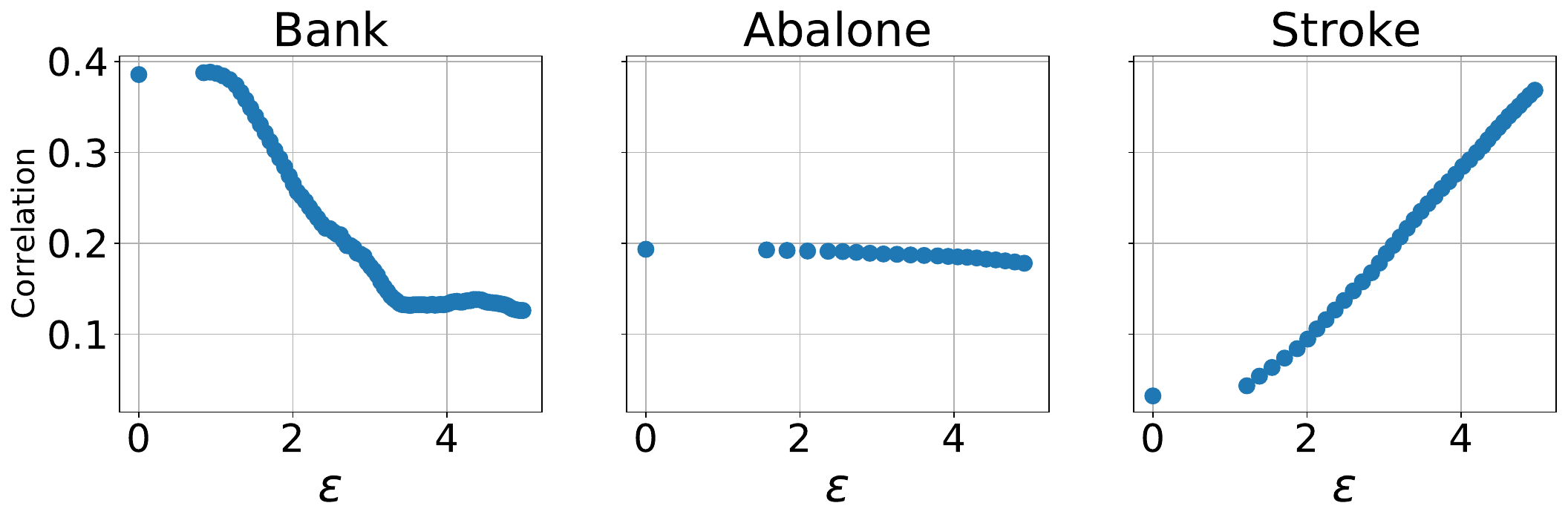}
\caption{Correlation between input norms and excessive risk; 
DP-SGD with $C=0.1$ and $\sigma=1.0$.}
\label{fig_corr_input_excessive_risk}
\end{figure}

In summary, the main factor affecting $R^\noise_a$ for a group $a \in \cA$ is the Hessian loss $\bm{H}_\ell^a$, which, in turn, is controlled by the group's distance to the decision boundary and by their inputs norm.

\section{Mitigation solution}
\label{sec:solution}
The previous sections showed that, in DP-SGD, the excessive risk 
$R_a$ for a group $a \in \cA$ could be decomposed into two factors 
$R_a^\clip$, due to clipping, and $R_a^\noise$, due to noise 
addition. 
In turn, it identified the gradients values $\bg_{D_a}$ associated 
with the samples of group $a$ and the clipping bound $C$ as the main 
sources of disparate impact in component $R_a^\clip$, 
and the (trace of the) Hessian $\bH_\ell^a$ of the group $a$ loss 
function as the main source of disparate impact in component $R_a^\noise$.

A solution to mitigate the effects of these components to 
the excessive risk gap is to equalize the factors responsible for
$R_a^\clip$ and $R_a^\noise$ among all group $a\in \cA$ during private 
training. The resulting empirical risk loss becomes:
\begin{align}
\label{eq:mitigation}
 \min_{\btheta} \cL(\btheta; D) +  \sum_{a \in \cA} 
 \hugP{
  \gamma_1 \abs{ \hugA{\bg_{D_a} - \bg_{D}, \bg_{D} - \bar{\bg}_{D}}} +
  \gamma_2 \abs{\Tr(\bH_\ell^a) - \Tr(\bH_\ell)}
  },
\end{align}
where the component multiplied by $\gamma_1$ comes for simplifying 
the expression 
$|\hugA{\bg_{D_a}, \bg_{D}} - \hugA{\bg_{D_a}, \bar{\bg}_{D}} -$
$\hugA{\bg_{D}, \bg_{D}} - \hugA{\bg_{D}, \bar{\bg}_{D}}|$ 
associated to the empirical risk gap $\xi_a$ of the main factor affecting
$R_a^\clip$, and component multiplied by $\gamma_2$ by the analogous 
expression for the main factor affecting $R_a^\noise$.
Note that this last component involves computing the Hessian matrices 
of the loss functions during each training step, which is a computationally expensive 
process. The previous section, however, showed a strong dependency
between the trace of the Hessian losses and the distance to the decision
boundary (Theorem \ref{thm:trace_hessian}). Thus, in place of Equation 
\eqref{eq:mitigation} the proposed mitigating solution solves:
\begin{align*}
\label{eq:mitigation2}
 \min_{\btheta} \cL(\btheta; D) +  \sum_{a \in \cA} 
 \hugP{
  \gamma_1 \abs{ \hugA{\bg_{D_a} - \bg_{D}, \bg_{D} - \bar{\bg}_{D}}} +
  \gamma_2 \abs{ \EE_{X \sim D_a} [1 \!-\! \sum_{k=1}^K f_{\theta,k}^2(X)] 
                  -
                  \EE_{X \sim D} [1 \!-\! \sum_{k=1}^K f_{\theta,k}^2(X)]
                }
  }.
\end{align*}
Figure \ref{fig:bank_mitigation} illustrates this approach at work, 
for various multipliers $\gamma_1$ and $\gamma_2$ on the Bank dataset 
with two protected group (blue = majority; orange = minority). 
Similar trends are shown for other datasets as well in Appendix 
\ref{sec:additional_experiments}.
The implementation uses a neural network with a single hidden layer and Suppose
uses DP-SDG with $C=0.1, \sigma=5.0$. A clear trend arises: For appropriately 
selected values $\gamma_1$ and $\gamma_2$ the excessive risk gap between
the majority and minority groups not only tends to be equalized, but 
it also decreases significantly for both groups. {\em These results imply 
that the proposed mitigating strategy may not only improve fairness but also the loss in utility of the private models.}
\begin{figure}[tb]
\centering
\includegraphics[width=0.9\linewidth]{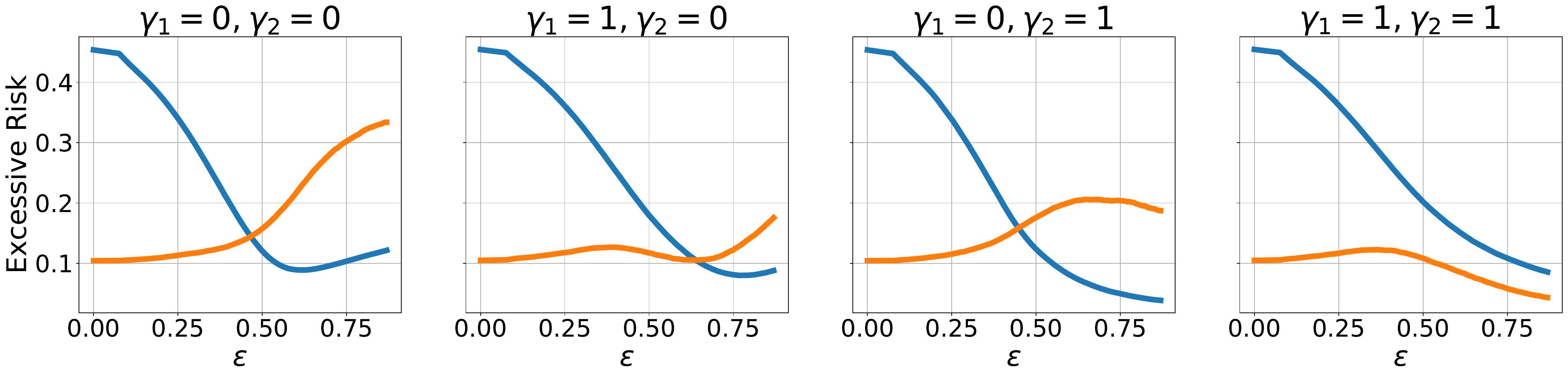}
\caption{Mitigating solution: Excessive risk gap at varying of the privacy
loss $\epsilon$ on the Bank dataset for different values of  
$\gamma_1$ and $\gamma_2$. Majority (minority) group is shown in dark (light)
colors.
\label{fig:bank_mitigation}}
\end{figure}


\section{Limitations and conclusions}
\label{sec:limitations}
This work was motivated by the recent observations regarding the disparate impacts induced by DP in learning systems. 
The paper introduced a notion of fairness that relies on the concept of excessive risk, analyzed this fairness notion in output perturbation and DP-SGD for ERM problems, it isolated the relevant components related with noise addition and gradient clipping 
responsible for the disparate impacts, studied the main factors affecting these components, and introduced a mitigation solution. 

This study recognizes the following limitations: Firstly, the analyses in Section \ref{sec:clipping} requires the ERM losses to 
be smooth and convex. While these are common assumptions adopted in the analysis of private ERM \cite{ijcai2017548, JMLR:v12:chaudhuri11a}, the generalization to the non-convex case is an interesting open question. 
The second limitation regards the selection of the multipliers 
$\gamma_1$ and $\gamma_2$ in Equation~\ref{eq:mitigation}. 
While the paper does not investigate how to optimally 
selecting these values, the adoption of a Lagrangian Dual framework, 
as in \citep{tran2020differentially}, could a useful tool to the automatic selection of such parameters, for an extra privacy 
cost. Finally, the proposed mitigation solution negatively affects the training runtime and the design of more efficient solutions and implementations is an interesting challenge.

Despite these limitations, given the increasingly key role of differential privacy in machine learning, we believe that this work may represent an important and broadly useful step toward understanding the roots of the disparate impacts observed in differentially private learning systems.

\section*{Acknowledgement}
This research is partially supported by an Amazon Research Award.

\bibliographystyle{abbrvnat}
\bibliography{lib}

\appendix
\pagenumbering{arabic}
\renewcommand{\thepage} {A--\arabic{page}}

\makeatletter
\@addtoreset{theorem}{section}
\@addtoreset{corollary}{section}

\makeatother
\section{Missing Proofs}
\label{sec:missing_proofs}

\begin{theorem}
\label{thm:output_pertb}
Let $\ell$ be a twice differentiable and convex loss function and consider the output perturbation mechanism described above.  
Then, the excessive risk gap for group $a \in \cA$ \mbox{is approximated by:}
\begin{equation}
\label{eq:thm3}
    \xi_a \approx \frac{1}{2} \Delta^2_\ell\sigma^2  \left| 
    \Tr(\bH^a_\ell) - \Tr(\bH_\ell) \right|, \tag{3}
\end{equation}
where 
\( \bH^a_\ell \!=\!\! \nabla^2_{\btheta^*} \sum_{(X,A,Y) \in D_a} 
   \!\!\ell( f_{\btheta^*}(X), Y) \)
 is the Hessian matrix of the loss function at the optimal parameters 
 vector $\btheta^*$, computed using the group data $D_{a}$, 
\( \bH^\ell \) is the analogous Hessian computed using the population
data $D$, and $\Tr(\cdot)$ denotes the trace of a matrix. 
\end{theorem}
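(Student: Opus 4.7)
The plan is to obtain the stated approximation via a second-order Taylor expansion of the loss around the non-private optimum $\btheta^*$, then to take expectations over the Gaussian perturbation, for which both the linear and quadratic moments have closed forms.

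Concretely, I would write $Z \defeq \tilde{\btheta} - \btheta^*$, so that $Z \sim \cN(\bm{0}, \Delta_\ell^2 \sigma^2 \bm{I})$ by construction of the output perturbation mechanism. Expanding $\cL(\tilde{\btheta}; D_a)$ around $\btheta^*$ to second order gives
\[
\cL(\tilde{\btheta}; D_a) \;\approx\; \cL(\btheta^*; D_a)
\;+\; \nabla_\btheta \cL(\btheta^*; D_a)^{\!\top} Z
\;+\; \tfrac{1}{2}\, Z^{\!\top} \bH^a_\ell\, Z,
\]
where $\bH^a_\ell$ denotes the Hessian of the group loss at $\btheta^*$ (up to the normalization convention fixed in the statement). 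Taking expectation over $Z$, the linear term vanishes because $\EE[Z] = \bm{0}$, and the quadratic term simplifies via the standard Gaussian identity $\EE[Z^{\!\top} M Z] = \Tr(M\, \cov(Z))$ applied to the symmetric matrix $M = \bH^a_\ell$, yielding $\tfrac{1}{2}\Delta_\ell^2 \sigma^2 \Tr(\bH^a_\ell)$. Subtracting $\cL(\btheta^*; D_a)$ from both sides then produces
\[
R_a(\btheta) \;=\; \EE_{\tilde{\btheta}}[\cL(\tilde{\btheta}; D_a)] - \cL(\btheta^*; D_a) \;\approx\; \tfrac{1}{2}\, \Delta_\ell^2 \sigma^2\, \Tr(\bH^a_\ell).
\]

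Applying the identical calculation to the full dataset $D$ (with $\bH_\ell$ in place of $\bH^a_\ell$) yields $R(\btheta) \approx \tfrac{1}{2}\Delta_\ell^2\sigma^2 \Tr(\bH_\ell)$. Plugging both into the definition $\xi_a = |R_a(\btheta) - R(\btheta)|$ and using linearity of the trace gives precisely the claimed expression $\tfrac{1}{2}\Delta_\ell^2 \sigma^2 |\Tr(\bH^a_\ell) - \Tr(\bH_\ell)|$.

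There is no real obstacle in the argument; two minor subtleties are worth flagging. First, the linear Taylor term is killed by the mean of $Z$ rather than by optimality of $\btheta^*$, which matters because $\nabla\cL(\btheta^*; D_a)$ need not vanish on a subgroup even though $\nabla\cL(\btheta^*; D) = \bm{0}$. Second, the result is only an approximation because the Taylor series is truncated at second order; quantifying the remainder would require a bound on the third-derivative tensor of $\ell$ (e.g., via self-concordance), but this is not needed for the stated "$\approx$" claim and is implicitly justified by the convexity and twice-differentiability hypotheses of the theorem.
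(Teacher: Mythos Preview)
Your proposal is correct and follows essentially the same route as the paper: a second-order Taylor expansion of $\cL(\cdot; D_a)$ around $\btheta^*$, expectation over the Gaussian perturbation so the linear term vanishes by $\EE[Z]=\bm{0}$, and evaluation of the quadratic term to $\tfrac{1}{2}\Delta_\ell^2\sigma^2\Tr(\bH^a_\ell)$, then the same for $D$ and subtraction in absolute value. The only cosmetic difference is that the paper expands $\EE[\psi^\top \bH^a_\ell \psi]$ entrywise using independence of the coordinates, whereas you invoke the compact identity $\EE[Z^\top M Z]=\Tr(M\,\cov(Z))$; these are the same computation.
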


\begin{proof}

Recall that the output perturbation mechanism adds Gaussian noise
directly to the non-private model parameters $\btheta^{*}$ to obtain
the private parameters $\tilde{\btheta}$. Denote $\psi \sim \mathcal{N}
(0,  \mathbf{I}\Delta^2_{\ell} \sigma^2)$ the random noise vector
with the same size as $\btheta^{*}$. 
Then $\tilde{\btheta} = \btheta^{*} + \psi$.  
Using a second order Taylor expansion around $\btheta^{*}$ the 
private risk function for group $a \in \cA$ is approximated as follows:
\begin{equation}
\label{eq:7}
        \cL(\tilde{\btheta}, D_a) =  \cL(\btheta^* + \psi, D_a) \approx 
        \cL(\btheta^*, D_a) + \psi^T \nabla_{\btheta^*} \cL(\btheta^*, D_a) + 
        \frac{1}{2} \psi^T \bm{H}_\ell^a \psi.
\end{equation}

Taking the expectation with respect to $\psi$ on both sides of the above 
equation results in:
\begin{subequations}
\begin{align}
    \label{eq:8a}
    \mathbb{E} \hugB{\cL(\tilde{\btheta}, D_a)} &\approx 
    \cL(\btheta^*, D_a) + \mathbb{E} \hugB{\psi^T \nabla_{\btheta^*} \cL(\btheta^*, D_a)} 
    + \frac{1}{2} \mathbb{E} \hugB{\psi^T \bm{H}_\ell^a \psi} \\
    &= \label{eq:8b} 
    \cL(\btheta^*, D_a)
    + \frac{1}{2} \mathbb{E} \hugB{\psi^T \bm{H}_\ell^a \psi} \\
    &= \label{eq:8c} 
    \cL(\btheta^*, D_a)
    + \frac{1}{2} \sum_{i,j}  \mathbb{E}\hugB{\psi_i (\bm{H}_\ell^a)_{ij} \psi_j}\\
    &= \label{eq:8d} 
    \cL(\btheta^*, D_a)
    + \frac{1}{2} \sum_{i} \mathbb{E}\hugB{\psi^2_i} (\bm{H}_\ell^a)_{ii}\\
    &= \label{eq:8e} 
    \cL(\btheta^*, D_a)
    + \frac{1}{2} \Delta^2_{\ell} \sigma^2 \Tr\hugP{\bm{H}_\ell^a},
\end{align}
\end{subequations}
where equation \eqref{eq:8b} follows from linearity of expectation, by observing that $\nabla_{\btheta^*} \cL(\btheta^*, D_a)$
is a constant term, and that $\psi$ is a $0$-mean noise variable, 
thus, $\mathbb{E}[\psi]=\bm{0}^T \times \nabla_{\btheta^*} \cL(\btheta^*, D_a) = \bm{0}^T$.
Equation \eqref{eq:8c} follows by definition of Hessian matrix, where 
$(H_\ell^a)_{ij}$ denotes the entry with indices $i$ and $j$ of the matrix.
Equation \eqref{eq:8d} follows from that $\psi_i \perp \psi_j$, 
for all $i \neq j$, and Equation \eqref{eq:8e} from that
for a random variable $X$, $\EE[X^2] = (\EE[X])^2 + \var[X]$, and 
$\var[\psi_i] = \Delta^2_{\ell} \sigma^2\,\, \forall i$ and definition of Trace of a matrix.

Therefore, the group and population excessive risks are approximated as: 
\begin{align}
    \label{eq:9}
  R_a(\btheta) &= \mathbb{E}\hugB{\cL(\tilde{\btheta}, D_a)} - \cL(\btheta^*, D_a) 
              \approx  \frac{1}{2}\Delta^2_\ell \sigma^2 \Tr\hugP{\bm{H}_\ell^a}\\
    \label{eq:10}
    R(\btheta) &= \mathbb{E}\hugB{\cL(\tilde{\btheta}, D)} - \cL(\btheta^*, D) 
               \approx \frac{1}{2}\Delta^2_\ell \sigma^2 \Tr \hugP{\bm{H}_\ell}.
\end{align}
The claim follows by definition of excessive risk gap (Equation \ref{def:risk_gap})
subtracting Equation \eqref{eq:9} from \eqref{eq:10} in absolute values.
\end{proof}

\begin{corollary}
\label{cor:l2_loss}
Consider the ERM problem for a linear model $f_{\btheta}(X) \defeq \btheta^T X$, with $L_2$ loss function i.e., $\ell(f_{\btheta}(X), Y) = (f_{\btheta}(X) - Y)^2$. Then, output perturbation does not guarantee pure fairness.
\end{corollary}

\begin{proof}
First, notice that for an $L_2$ loss function the trace of Hessian loss for a group $a \in \cA$ is: 
$$  \Tr(\bH^a_\ell)  = \mathbb{E}_{x \sim D_a} \|X\|. $$

Therefore, from Theorem \ref{thm:output_pertb}, the excessive risk gap $\xi_a$ for group $a$ is:
\begin{equation}
\label{eq:11}
    \xi_a \approx \frac{1}{2} \Delta^2_\ell\sigma^2  \left| 
     \mathbb{E}_{x \sim D_a} \|X\| -  \mathbb{E}_{x \sim D} \|X\|  \right|.
\end{equation}
Notice that $\xi_a$ is larger than zero only if the average input norm of group $a$ is different with that of the population one. 
Since this condition cannot be guaranteed in general, the output perturbation mechanism for a linear ERM model under the $L_2$ loss does not guarantee pure fairness.
\end{proof}

\begin{corollary}
\label{cor:same_risk}
If for any two groups $a,b \in \cA$ their average group norms $\EE_{X_a \sim D_a} \| X_a \| = \EE_{X_b \sim D_b} \| X_b \| $ have identical values, then output perturbation with $L_2$ loss function provides pure fairness.
\end{corollary}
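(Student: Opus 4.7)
The plan is to invoke Corollary \ref{cor:l2_loss} directly and then verify that the identical-group-norm hypothesis forces the approximate excessive risk gap to vanish for every group. Recall from the proof of Corollary \ref{cor:l2_loss} that, in the linear $L_2$ setting,
\[
  \Tr(\bH_\ell^a) = \EE_{X \sim D_a}\|X\|^2 \quad \text{and} \quad \Tr(\bH_\ell) = \EE_{X \sim D}\|X\|,
\]
so that Theorem \ref{thm:output_pertb} yields
\[
  \xi_a \approx \tfrac{1}{2}\Delta_\ell^2 \sigma^2 \, \bigl| \EE_{X \sim D_a}\|X\| - \EE_{X \sim D}\|X\| \bigr|.
\]
Thus pure fairness ($\xi_a = 0$ for all $a \in \cA$) will follow once I show that the population-level average norm coincides with every group-level average norm under the hypothesis.

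The key step I would carry out is to decompose the population as a mixture over the protected groups. Writing $p_a = |D_a|/|D|$, the dataset $D$ is a disjoint union of the $D_a$'s, so by linearity of expectation,
\[
  \EE_{X \sim D}\|X\| = \sum_{a \in \cA} p_a \, \EE_{X \sim D_a} \|X\|.
\]
If every group-level average norm is equal to some common value $\mu$ (by the hypothesis $\EE_{X_a \sim D_a}\|X_a\| = \EE_{X_b \sim D_b}\|X_b\|$ holds for every pair $a,b$), then the right-hand side collapses to $\mu \sum_{a \in \cA} p_a = \mu$. Hence $\EE_{X \sim D}\|X\| = \mu = \EE_{X \sim D_a}\|X\|$ for every $a$, and substituting into the expression for $\xi_a$ gives $\xi_a \approx 0$ for all $a$, i.e., pure fairness.

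There is no real obstacle here: the result is an immediate two-line corollary of the approximation established in Corollary \ref{cor:l2_loss} together with the mixture decomposition of the dataset. The only point worth stating explicitly in the write-up is the linearity argument that turns the equality of all pairwise group averages into equality with the population average, since the corollary's claim is about pairwise equality rather than equality with the aggregate.
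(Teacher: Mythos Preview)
Your proposal is correct and takes essentially the same approach as the paper: both argue that the formula $\xi_a \approx \tfrac{1}{2}\Delta_\ell^2\sigma^2\,|\EE_{X\sim D_a}\|X\| - \EE_{X\sim D}\|X\||$ from Corollary~\ref{cor:l2_loss} vanishes once all group-level averages coincide, since the population average is then forced to equal that common value. Your write-up is in fact slightly more explicit than the paper's, which simply asserts the conclusion; the mixture decomposition $\EE_{X\sim D}\|X\| = \sum_{a}p_a\,\EE_{X\sim D_a}\|X\|$ is exactly the missing justification.
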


\begin{proof}
The above follows directly by observing that, when the average norms 
of any two groups have identical values, $\xi_a \approx 0$ for any group
 $a \in \cA$ (see Equation \eqref{eq:11}), and thus the average norm of each group also coincide with that of the population. 
\end{proof}
The above indicates that as long as the average group norm is invariant across 
different groups, then output perturbation mechanism provides pure fairness. 

\begin{theorem}
\label{thm:taylor}
Consider the ERM problem \eqref{eq:erm} with loss $\ell$ twice differentiable with respect to the model parameters.
The expected loss $\EE[\cL(\btheta_{t+1}; D_a)]$ of group $a\!\in\!\cA$ at iteration $t\!+\!1$, is approximated as:
\begin{align}
\label{eq:thm2a}
 \mathbb{E}\left[\cL( \btheta_{t+1}; D_a )\right]
  & \approx  
  \underbrace{\cL(\btheta_{t}; D_a) - \eta 
  \hugA{\bm{g}_{D_a}, \bm{g}_D} 
  + \frac{\eta^2}{2} \mathbb{E}\hugB{\bm{g}_B^T \bH_{\ell}^a  \bm{g}_B}
  }_{ \mbox{non-private term}} \tag{4}
  \\
 \label{eq:thm2b}\tag{$R_a^{\clip}$}
 & + \underbrace{
    \eta \hugP{
    \hugA{ \bm{g}_{D_a}, \bm{g}_D } - 
    \hugA{ \bm{g}_{D_a}, \bar{\bm{g}}_D}
    } 
   + \frac{\eta^2}{2} 
    \hugP{
    \mathbb{E}\hugB{\bar{\bm{g}}_B^T \bH_{\ell}^a \bar{\bm{g}}_B}  - 
    \mathbb{E}\hugB{\bm{g}_B^T \bH_{\ell}^a \bm{g}_B}
   }
  }_{\mbox{private term due to clipping}} 
  \\
 \label{eq:thmc} \tag{$R_a^{\noise}$}
 & +\underbrace{
    \frac{\eta^2}{2} \Tr( \bH_{\ell}^a ) C^2 \sigma^2
    }_{\mbox{private term due to noise}}
\end{align}
where the expectation is taken over the randomness of the private 
noise and the mini-batch selection, and the terms $\bm{g}_Z$ and $\bar{\bm{g}}_Z$ denote, respectively, the average non-private and private gradients over subset 
$Z$ of $D$ at iteration $t$ (the iteration number is dropped for ease of notation).
\end{theorem}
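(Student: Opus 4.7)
The plan is to second-order Taylor expand $\cL(\cdot;D_a)$ around $\btheta_t$ at the DP-SGD iterate $\btheta_{t+1} = \btheta_t - \eta\,\bar{\bm{g}}_B$, take the joint expectation over the mini-batch $B$ and the Gaussian privacy noise, and then reshape the result by adding and subtracting its non-private counterparts so that the three stated blocks emerge.

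First, I would split the quantity entering the update as $\bar{\bm{g}}_B + \bm{z}$, where $\bar{\bm{g}}_B = \tfrac{1}{|B|}\sum_{i\in B}\pi_C(\bm{g}_i)$ is the clipped average (deterministic given $B$) and $\bm{z}\sim \cN(\bm{0},\bm{I}\,C^2\sigma^2)$ is the Gaussian noise, which cleanly separates sampling randomness from noise randomness. Since $\ell$ is twice differentiable in $\btheta$ and $\nabla_{\btheta_t}\cL(\btheta_t;D_a) = \bm{g}_{D_a}$, the second order Taylor approximation reads
\[
 \cL(\btheta_{t+1};D_a) \approx \cL(\btheta_t;D_a) - \eta\,\hugA{\bm{g}_{D_a},\,\bar{\bm{g}}_B+\bm{z}} + \tfrac{\eta^2}{2}(\bar{\bm{g}}_B+\bm{z})^T \bH_\ell^a (\bar{\bm{g}}_B+\bm{z}).
\]
Taking $\EE[\cdot]$ over $B$ and $\bm{z}$, the linear part collapses to $-\eta\hugA{\bm{g}_{D_a},\bar{\bm{g}}_D}$ by linearity together with $\EE[\bm{z}]=\bm{0}$ and the fact that a random mini-batch is an unbiased estimator of the full-data clipped average. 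Expanding the quadratic form,
\[
 \EE\hugB{(\bar{\bm{g}}_B+\bm{z})^T \bH_\ell^a (\bar{\bm{g}}_B+\bm{z})} = \EE[\bar{\bm{g}}_B^T \bH_\ell^a \bar{\bm{g}}_B] + 2\,\EE[\bar{\bm{g}}_B^T \bH_\ell^a \bm{z}] + \EE[\bm{z}^T \bH_\ell^a \bm{z}],
\]
the cross term vanishes because $\bm{z}\perp B$ and $\EE[\bm{z}]=\bm{0}$, while the last term equals $C^2\sigma^2\,\Tr(\bH_\ell^a)$ by the Gaussian identity $\EE[\bm{z}^T M \bm{z}] = \Tr(M\,\cov(\bm{z}))$.

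To produce the stated three-block decomposition, I would then add and subtract the non-private counterparts $\eta\hugA{\bm{g}_{D_a},\bm{g}_D}$ in the linear part and $\tfrac{\eta^2}{2}\EE[\bm{g}_B^T \bH_\ell^a \bm{g}_B]$ in the quadratic part. The unperturbed pieces collect into the non-private block \eqref{eq:thm2a}; the differences form the clipping block $R_a^{\clip}$ in \eqref{eq:thm2b}; and the Gaussian trace term is the noise block $R_a^{\noise}$ in \eqref{eq:thmc}. The main obstacle is essentially bookkeeping rather than analysis: I need to be careful that in the quadratic form inside $R_a^{\clip}$ the symbol $\bar{\bm{g}}_B$ refers to the clipped \emph{noise-free} average (not the noisy gradient used in Algorithm \ref{alg:alg1}), so that the noise-variance contribution is not double counted between $R_a^{\clip}$ and $R_a^{\noise}$; the cross term $\EE[\bar{\bm{g}}_B^T \bH_\ell^a \bm{z}]$ requires $\bm{z}\perp B$ on top of zero-meanness; and all equalities should be written as ``$\approx$'' since the argument rests on truncating the Taylor series at second order and tacitly absorbing the remainder into the approximation.
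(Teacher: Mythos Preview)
Your proposal is correct and mirrors the paper's own argument: the paper likewise Taylor-expands the private update $\btheta_{t+1}=\btheta_t-\eta(\bar{\bm{g}}_B+\psi)$ to second order, takes expectations using $\EE[\psi]=\bm{0}$, independence of $\psi$ and $B$, and $\EE[\psi^T\bH_\ell^a\psi]=C^2\sigma^2\Tr(\bH_\ell^a)$, and then adds and subtracts the non-private expansion to isolate the three blocks. Your use of the trace identity $\EE[\bm{z}^T M\bm{z}]=\Tr(M\,\cov(\bm{z}))$ is a slightly slicker version of the paper's coordinate-wise computation, and your explicit caveat that $\bar{\bm{g}}_B$ in $R_a^{\clip}$ must be the noise-free clipped average is exactly the bookkeeping the paper relies on implicitly.
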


\begin{proof}

The proof of Theorem \ref{thm:taylor} relies on the following two 
second order Taylor approximations: 
{\textbf{(1)}} The first approximates the ERM loss at iteration
$t+1$ under non-private training, i.e., $\btheta_{t+1} = \btheta_
{t} - \eta \bm{g}_B$, where $B \subseteq D$ denotes the minibatch.
{\textbf{(2)}} The second approximates expected ERM loss under
private-training, i.e $\btheta_{t+1} = \btheta_{t} - \eta (\bar{\bm{g}}_B 
+ \psi)$ where $\psi \sim \mathcal{N}(0, \bm{I} C^2\sigma^2)$. 
Finally, the result is obtained by taking the difference of 
these approximations under private and non-private training.

\paragraph{1. Non-private term.}
The non private term of Theorem \ref{thm:taylor} can be
derived using second order Taylor approximation as follows: 
\begin{equation}
\label{eq:12}
        \cL(\theta_{t+1}, D_a)= \cL(\theta_{t} - \eta \bm{g}_{B}, D_a) \approx \cL(\theta_{t}, D_a) -\eta \langle \bm{g}_{D_a},  \bm{g}_{B} \rangle + \frac{\eta^2}{2} \bm{g}_{B}^T \bm{H}_\ell^a \bm{g}_{B}
\end{equation}
Taking the expectation with respect to the randomness of the mini-batch $B$ selection 
on both sides of the above approximation, and noting that $\mathbb{E}[\bm{g}_{B}] =\bm{g}_D$ 
(as $B$ is selected randomly from dataset $D$), it follows: 
\begin{subequations}
\label{eq:13}
\begin{align}
\label{eq:13a}
        \mathbb{E} [\cL(\theta_{t+1}, D_a)]  
        &\approx \cL(\theta_{t}, D_a) -\eta \mathbb{E} [\langle \bm{g}_{D_a}, \bm{g}_{B}] \rangle  + \frac{\eta^2}{2}  \mathbb{E} [ \bm{g}_{B}^T \bm{H}_\ell^a \bm{g}_{B}] \\
\label{eq:13b}
        & = \cL(\theta_{t}, D_a)  - \eta \langle \bm{g}_{D_a}, \bm{g}_D \rangle + \frac{\eta^2}{2}  \mathbb{E} [ \bm{g}_{B}^T \bm{H}_\ell^a \bm{g}_{B}].
\end{align}
\end{subequations}


\paragraph{2. Private term (due to both clipping and noise).}
Consider the private update in DP-SGD, i.e., 
$\btheta_{t+1} = \btheta_{t} - \eta (\bar{\bm{g}}_B + \psi)$. 
Again, applying a second order Taylor approximation around $\btheta_{t}$ 
allows us to estimate the expected private loss at iteration $t+1$ as:
\begin{subequations}
\label{eq:taylor_dpsgd}
\begin{align}
        \cL(\btheta_{t+1}, D_a)
        & = \cL \hugP{\btheta_{t} - \eta (\bar{\bm{g}}_{B} +\psi), D_a} \notag\\
\label{eq:14a}
        & \approx \cL \hugP{\btheta_{t}, D_a} 
        - \eta \left\langle \bm{g}_{D_a},  \bar{\bm{g}}_{B}+\psi \right\rangle 
        + \frac{\eta^2}{2} \hugP{\bar{\bm{g}}_{B}  +\psi}^T \bm{H}_\ell^a \hugP{\bar{\bm{g}}_{B}  +\psi}\\
\label{eq:14b}
        &= \cL \hugP{\btheta_{t}, D_a} 
        - \eta  \left\langle  \bm{g}_{D_a},  \bar{\bm{g}}_{B}  \right\rangle  
        - \eta  \left\langle  \bm{g}_{D_a}, \psi  \right\rangle   
        +  \frac{\eta^2}{2} \bar{\bm{g}}_{B}^T \bm{H}_\ell^a \bar{\bm{g}}_{B} \\
\notag
        & \hspace{50pt} +  \frac{\eta^2}{2} \left( \psi^T \bm{H}_\ell^a \bar{\bm{g}}_{B} + \bar{\bm{g}}_{B}^T \bm{H}_\ell^a\psi +\psi^T \bm{H}_\ell^a \psi \right) 
\end{align}
\end{subequations}
Taking the expectation with respect to the randomness of the mini-batch $B$ selection
and with respect to the randomness of noise $\psi$ on both sides of the above equation
gives:
\begin{subequations}
\label{eq:15}
\begin{align}
\label{eq:15a}
    \EE\hugB{\cL(\btheta_{t+1}, D_a)} & \approx 
        \EE \Big[\cL \hugP{\btheta_{t}, D_a} 
        - \eta  \left\langle  \bm{g}_{D_a},  \bar{\bm{g}}_{B}  \right\rangle  
        - \eta  \left\langle  \bm{g}_{D_a}, \psi  \right\rangle   
        +  \frac{\eta^2}{2} \bar{\bm{g}}_{B}^T \bm{H}_\ell^a \bar{\bm{g}}_{B}\\
\notag
        &\hspace{100pt} 
        + \frac{\eta^2}{2} \left( \psi^T \bm{H}_\ell^a \bar{\bm{g}}_{B} + \bar{\bm{g}}_{B}^T \bm{H}_\ell^a\psi +\psi^T \bm{H}_\ell^a \psi \right)
        \Big]\\
\label{eq:15b}
        &= \cL(\btheta_{t}, D_a) 
        - \eta \left\langle\ \bm{g}_{D_a}, \bar{\bm{g}}_{B} \right\rangle\ 
        - \eta  \left\langle  \bm{g}_{D_a}, \EE[\psi]  \right\rangle   
        + \frac{\eta^2}{2} \mathbb{E}\hugB{ \bar{\bm{g}}_{B}^T \bm{H}_\ell^a  \bar{\bm{g}}_{B} }\\
        \notag
        & \hspace{100pt}
        + \frac{\eta^2}{2} \left( \EE\hugB{\psi}^T\!\! \bm{H}_\ell^a \bar{\bm{g}}_{B} 
                            + \bar{\bm{g}}_{B}^T \bm{H}_\ell^a \EE[\psi] 
                            + \EE\hugB{\psi^T \bm{H}_\ell^a \psi} \right)\\
\label{eq:15c}
        &= \cL(\btheta_{t}, D_a) 
        - \eta \left\langle\ \bm{g}_{D_a}, \bar{\bm{g}}_{B} \right\rangle\ 
        + \frac{\eta^2}{2} \mathbb{E}\hugB{ \bar{\bm{g}}_{B}^T\! \bm{H}_\ell^a  \bar{\bm{g}}_{B} }
        + \frac{\eta^2}{2} \EE\hugB{\psi^T \bm{H}_\ell^a \psi}\\
\label{eq:15d}
        &= \cL(\btheta_{t}, D_a) 
        - \eta \left\langle\ \bm{g}_{D_a}, \bar{\bm{g}}_{B} \right\rangle\ 
        + \frac{\eta^2}{2} \mathbb{E}\hugB{ \bar{\bm{g}}_{B}^T\! \bm{H}_\ell^a  \bar{\bm{g}}_{B} }
        + \frac{\eta^2}{2} \Tr\hugP{\bm{H}_\ell^a} C^2 \sigma^2,
\end{align}
\end{subequations}
where \eqref{eq:15b}, and \eqref{eq:15c} follow from linearity of expectation and 
from that $\mathbb{E}[\psi] = 0$, since $\psi$ is a $0$-mean noise variable. 
Equation \eqref{eq:15d} follows from that,   
\[
  \mathbb{E} \hugB{\psi^T \bm{H}_\ell^a \psi} = 
  \mathbb{E} \hugB{\sum_{i,j} \psi_i   (\bm{H}_\ell^a)_{i,j} \psi_j} = 
  \sum_{i} \mathbb{E} \hugB{\psi_i^2 (\bm{H}_\ell^a)_{i,i}} =  \Tr \hugP{\bm{H}_\ell^a} C^2\sigma^2,
\]
since $\EE[\psi^2] = \EE[\psi]^2 + \var[\psi]$ and $\EE[\psi] = 0$ while 
$\var[\psi] = C^2\sigma^2$. 

Note that in the above approximation (Equation \eqref{eq:15}), the component 
\begin{equation}
\label{eq:16}
    \cL(\btheta_{t}, D_a) 
    - \eta \langle \bm{g}_{D_a}, \bar{\bm{g}}_{B} \rangle 
    + \frac{\eta^2}{2} \mathbb{E}\hugB{\bar{\bm{g}}_{B}^T \bm{H}_\ell^a  \bar{\bm{g}}_{B}}
\end{equation}
is associated to the SGD update step in which gradients have been clipped to the 
clipping bound value $C$, i.e.~$\btheta_{t+1} = \btheta_{t} - \eta (\bar{\bm{g}}_B)$. 

Next, the component
\begin{equation}
\label{eq:17}
    \frac{\eta^2}{2} \Tr\left(\bm{H}_\ell^a\right) C^2 \sigma^2
\end{equation}
is associated to the SGD update step in which the noise $\psi$ is added to 
the gradients.

If we take the difference between the approximation associated 
with the non-private loss term, obtained in Equation \ref{eq:13b}, 
with that associated with the private loss term, obtained in Equation \ref{eq:15d}, 
we can derive the effect of a single step of (private) DP-SGD 
compared to its non-private counterpart: 
\begin{subequations}
\label{eq:18}
\begin{align}
\label{eq:18a}
 \mathbb{E}\left[\cL( \btheta_{t+1}; D_a )\right]
  & \approx   \cL(\btheta_{t}; D_a) 
    - \eta \hugA{\bm{g}_{D_a}, \bm{g}_D} 
    + \frac{\eta^2}{2} \mathbb{E}\hugB{\bm{g}_B^T \bH_{\ell}^a  \bm{g}_B} \\
\label{eq:18b}
    &\hspace{10pt} 
    + \eta \hugP{ \hugA{ \bm{g}_{D_a}, \bm{g}_D } - \hugA{ \bm{g}_{D_a}, \bar{\bm{g}}_D} } 
   + \frac{\eta^2}{2} 
    \hugP{
    \mathbb{E}\hugB{\bar{\bm{g}}_B^T \bH_{\ell}^a \bar{\bm{g}}_B}  -  \mathbb{E}\hugB{\bm{g}_B^T \bH_{\ell}^a \bm{g}_B}
   } \\
\label{eq:18c} 
  &\hspace{10pt}
   + \frac{\eta^2}{2} \Tr\hugP{\bH_{\ell}^a} C^2 \sigma^2.
\end{align}
\end{subequations}
In the above,
\begin{itemize}[leftmargin=*, parsep=0pt, itemsep=2pt, topsep=-4pt]
    \item The components in Equation \eqref{eq:18a} are associated with 
    the loss under non-private training (see again Equation \ref{eq:13b});
  \item The components in Equation \eqref{eq:18b} is associated with for excessive 
  risk due to gradient clipping;
   \item Finally, the components in Equation \eqref{eq:18c} is associated with 
    the excessive risk due to noise addition.
\end{itemize}
\end{proof}

Next, the paper proves Theorem \ref{thm:grad_clip}. This result 
is based on the following assumptions. 

\begin{assumption}
\label{asm:convex}
[Convexity and Smoothness assumption] 
For a group $a \in \cA$, its empirical loss function 
$\cL( \btheta, D_a )$ 
is convex and $\beta_a$-smooth.
\end{assumption}

\begin{assumption}
\label{asm:large_batchsize} 
Let $B \subseteq D$ be a subset of the dataset $D$, and consider a 
constant $\varepsilon \geq 0$. 
Then, the variance associated with the gradient norms of a random mini-batch
$B$, $\sigma^2_{B} = \var\hugB{ \| \bm{g}_{B} \|} \leq \varepsilon$ 
as well as that associated with its clipped counterpart, 
$\bar{\sigma}^2_{B} = \var\hugB{\|\bm{\bar{g}}_{B}\|} \leq \varepsilon$.
\end{assumption}
The assumption above can be satisfied when the mini-batch size is large 
enough. For example, the variance is $0$ when $|B| = |D|$.

\begin{assumption}
\label{asm:lr}
The learning rate used in DP-SDG $\eta$ is upper bounded by quantity
 $\nicefrac{1}{\max_{z \in \cA} \beta_{z}}$.
\end{assumption}

\begin{theorem}
\label{thm:grad_clip}
Let $p_z = \nicefrac{|D_z|}{|D|}$ be the fraction of training samples in group $z \in \cA$. For groups $a, b \in \cA$, 
$R_a^\clip > R_b^\clip$ whenever:
\begin{equation}
\label{eq:5}
  \norm{\bm{g}_{D_a}} \hugP{p_a - \frac{p_a
  ^2}{2}} \geq  
  \frac{5}{2} C +  \norm{\bm{g}_{D_b}}\hugP{1 + p_b +\frac{p_b^2}{2}}.
  \tag{5}
\end{equation}
\end{theorem}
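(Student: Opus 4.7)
My plan is to prove the theorem by lower-bounding $R_a^{\clip}$ and upper-bounding $R_b^{\clip}$ separately, and then asking what condition on the raw gradient norms $\|\bm{g}_{D_a}\|, \|\bm{g}_{D_b}\|$ and the clipping bound $C$ forces the lower bound to exceed the upper bound. The starting point is Theorem~\ref{thm:taylor}, which writes
\[
  R_z^{\clip} = \eta\bigl(\hugA{\bm{g}_{D_z}, \bm{g}_D} - \hugA{\bm{g}_{D_z}, \bar{\bm{g}}_D}\bigr) + \tfrac{\eta^2}{2}\bigl(\mathbb{E}[\bar{\bm{g}}_B^T \bH_\ell^z \bar{\bm{g}}_B] - \mathbb{E}[\bm{g}_B^T \bH_\ell^z \bm{g}_B]\bigr)
\]
for $z\in\{a,b\}$. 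I will bound the first-order inner-product term and the second-order Hessian term independently, then combine.

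For the first-order term, the key expansion is $\bm{g}_D = p_a \bm{g}_{D_a} + p_b \bm{g}_{D_b}$, which gives $\hugA{\bm{g}_{D_a}, \bm{g}_D} = p_a \|\bm{g}_{D_a}\|^2 + p_b \hugA{\bm{g}_{D_a}, \bm{g}_{D_b}}$, and symmetrically for $b$. Cauchy--Schwarz then lets me lower-bound $p_b \hugA{\bm{g}_{D_a}, \bm{g}_{D_b}}$ by $-p_b\|\bm{g}_{D_a}\|\|\bm{g}_{D_b}\|$ on the $a$-side and upper-bound $p_a \hugA{\bm{g}_{D_a}, \bm{g}_{D_b}}$ by $+p_a\|\bm{g}_{D_a}\|\|\bm{g}_{D_b}\|$ on the $b$-side. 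The clipping structure of Algorithm~\ref{alg:alg1} (each $\pi_C(\bm{g}_i)$ has norm at most $C$, hence so does the average $\bar{\bm{g}}_D$) provides $|\hugA{\bm{g}_{D_z}, \bar{\bm{g}}_D}| \leq C\|\bm{g}_{D_z}\|$.

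For the Hessian term, I will use Assumption~\ref{asm:convex} in two complementary ways: convexity of $\cL(\cdot;D_z)$ means $\bH_\ell^z \succeq 0$, so quadratic forms $\bm{v}^T \bH_\ell^z \bm{v}$ are nonnegative (letting me drop the subtracted expectation when lower-bounding $R_a^{\clip}$, and the subtracted expectation when upper-bounding $R_b^{\clip}$); and $\beta_z$-smoothness gives $\bH_\ell^z \preceq \beta_z \bm{I}$, yielding upper bounds like $\mathbb{E}[\bar{\bm{g}}_B^T \bH_\ell^b \bar{\bm{g}}_B] \leq \beta_b \mathbb{E}[\|\bar{\bm{g}}_B\|^2] \leq \beta_b C^2$. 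Assumption~\ref{asm:large_batchsize} lets me replace $\mathbb{E}[\|\bm{g}_B\|^2]$ and $\mathbb{E}[\|\bar{\bm{g}}_B\|^2]$ by $\|\bm{g}_D\|^2$ and $\|\bar{\bm{g}}_D\|^2$ up to negligible variance; Assumption~\ref{asm:lr} on $\eta \leq 1/\max_z \beta_z$ then turns the $\tfrac{\eta^2 \beta_z}{2}$ prefactors into $\tfrac{\eta}{2}$. The net effect is that the $a$-side Hessian correction is at least $-\tfrac{\eta}{2}\|\bm{g}_D\|^2$, whose expansion $\tfrac{1}{2}(p_a^2\|\bm{g}_{D_a}\|^2 + 2 p_a p_b \|\bm{g}_{D_a}\|\|\bm{g}_{D_b}\| + p_b^2\|\bm{g}_{D_b}\|^2)$ supplies exactly the $-p_a^2/2$ adjustment to the leading $p_a\|\bm{g}_{D_a}\|^2$ coefficient, while the $b$-side contributes at most $\tfrac{\eta}{2}C^2$.

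Combining everything and rearranging yields a quadratic inequality in $\|\bm{g}_{D_a}\|, \|\bm{g}_{D_b}\|, C$ whose leading term on the $a$-side is $(p_a - p_a^2/2)\|\bm{g}_{D_a}\|^2$; dividing through by $\|\bm{g}_{D_a}\|$ and absorbing subdominant ratios $\|\bm{g}_{D_b}\|/\|\bm{g}_{D_a}\|$ and $C/\|\bm{g}_{D_a}\|$ (which I expect to bound by $1$ in the regime of interest, since the theorem is only informative when $\|\bm{g}_{D_a}\|$ is the largest quantity) collapses it into the linear form of~\eqref{eq:grad_norm}. I expect the main obstacle to be exactly this last collapse: several cross terms of the form $p_a p_b \|\bm{g}_{D_a}\|\|\bm{g}_{D_b}\|$ and $p_b^2\|\bm{g}_{D_b}\|^2/2$ appear naturally, and getting them to fit the coefficients $(p_a - p_a^2/2)$ on the left and $(1 + p_b + p_b^2/2)$ together with the constant $5C/2$ on the right requires bookkeeping with the identity $p_a + p_b = 1$ and the somewhat delicate bounds $\|\bm{g}_{D_b}\|^2 \leq \|\bm{g}_{D_a}\|\|\bm{g}_{D_b}\|$ and $C^2 \leq C\|\bm{g}_{D_a}\|$, both implicit in the operating regime where the claimed sufficient condition is nontrivial.
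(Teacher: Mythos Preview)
Your proposal is correct and follows essentially the same approach as the paper: the paper packages the first-order contribution as the single inner product $\hugA{\bm g_{D_a}-\bm g_{D_b},\,\bm g_D-\bar{\bm g}_D}$ (its Lemma~\ref{lemma3}) rather than bounding $R_a^{\clip}$ below and $R_b^{\clip}$ above separately, but after expanding $\bm g_D=p_a\bm g_{D_a}+p_b\bm g_{D_b}$ and applying Cauchy--Schwarz this yields exactly the same inequality you obtain. The Hessian bounds you describe (nonnegativity from convexity, $\beta_z$-smoothness to cap the quadratic forms by $\beta_z C^2$ and $\beta_z\|\bm g_D\|^2$, Assumption~\ref{asm:large_batchsize} to pass from $B$ to $D$, and Assumption~\ref{asm:lr} to absorb $\eta\beta_z\le 1$) are precisely the paper's Lemma~\ref{bound_4}, and your final step of factoring out $\|\bm g_{D_a}\|$ and invoking $\|\bm g_{D_a}\|>\|\bm g_{D_b}\|$, $\|\bm g_{D_a}\|>C$ (both forced by the hypothesis~\eqref{eq:5}) matches the paper's endgame verbatim.
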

To ease notation, the statement of the theorem above uses $\varepsilon = 0$ (See 
Assumption \ref{asm:large_batchsize}) but the theorem can be generalized to 
any $\varepsilon \geq 0$.

The following Lemmas are introduced to aid the proof of Theorem \ref{thm:grad_clip}.
\begin{lemma}
\label{lem:norm_bound}
Consider the ERM problem \eqref{eq:erm} solved with DP-SGD with clipping 
value $C$. The following average clipped per-sample gradients 
$\bar{\bm{g}}_Z$, where $Z \subseteq D$, has norm at most $C$.
\end{lemma}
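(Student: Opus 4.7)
The plan is to prove the lemma by a direct application of the triangle inequality to the definition of the average clipped gradient, combined with the fact that the clipping operator $\pi_C$ itself bounds individual norms.

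First, I would recall that, by definition of the clipping operator $\pi_C(\bm{x}) = \bm{x} \cdot \min(1, \frac{C}{\|\bm{x}\|})$, for any vector $\bm{x}$ one has $\|\pi_C(\bm{x})\| \leq C$. This follows by considering two cases: if $\|\bm{x}\| \leq C$ then $\min(1, \frac{C}{\|\bm{x}\|}) = 1$ and $\|\pi_C(\bm{x})\| = \|\bm{x}\| \leq C$; otherwise $\min(1, \frac{C}{\|\bm{x}\|}) = \frac{C}{\|\bm{x}\|}$ and $\|\pi_C(\bm{x})\| = \|\bm{x}\| \cdot \frac{C}{\|\bm{x}\|} = C$.

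Next, I would write out the definition of the average clipped per-sample gradient over $Z \subseteq D$, namely $\bar{\bm{g}}_Z = \frac{1}{|Z|} \sum_{i \in Z} \pi_C(\bm{g}_i)$ (the noise term is not part of the clipped average in the sense used here; the lemma bounds the clipped aggregate prior to noise injection, consistent with how $\bar{\bm{g}}_D$ appears in Theorem \ref{thm:taylor}). Applying the triangle inequality followed by the per-sample bound from the previous step yields
\[
\|\bar{\bm{g}}_Z\| = \Big\| \frac{1}{|Z|} \sum_{i \in Z} \pi_C(\bm{g}_i) \Big\| \leq \frac{1}{|Z|} \sum_{i \in Z} \|\pi_C(\bm{g}_i)\| \leq \frac{1}{|Z|} \sum_{i \in Z} C = C,
\]
which is the claim.

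There is essentially no obstacle: the lemma is a two-line consequence of the clipping definition and the triangle inequality. The only subtlety worth flagging is making explicit what $\bar{\bm{g}}_Z$ denotes in this context (the average of clipped individual gradients over $Z$, without the added Gaussian noise), since in Algorithm \ref{alg:alg1} the quantity written as $\bar{\bm{g}}_B$ does include noise. Once this convention is fixed, the bound follows immediately and uniformly in $|Z|$, which is exactly what later proofs will need when they bound inner products such as $\hugA{\bm{g}_{D_a}, \bar{\bm{g}}_D}$ via Cauchy--Schwarz.
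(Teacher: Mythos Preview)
Your proposal is correct and follows essentially the same approach as the paper: apply the triangle inequality to the average of clipped per-sample gradients and use that each clipped gradient has norm at most $C$. Your extra remark clarifying that $\bar{\bm{g}}_Z$ here denotes the noise-free clipped average is a useful observation, but the core argument is identical to the paper's.
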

\begin{proof}
The result follows by triangle inequality: 
\begin{align*}
 \left\| \bar{\bm{g}}_{D_Z} \right\| &= \left\| \frac{1}{|D_Z|} \sum_{i \in D_Z} \bar{\bm{g}}_{i} \right\| \\
        &\leq \frac{1}{|D_Z|}\sum_{i \in D_Z} \left\| \bar{\bm{g}}_{i} \right\| \\
        &= \frac{1}{|D_Z|}\sum_{i \in D_Z} \left\| \bm{g}_{i} \min \big(1, \frac{C}{\|\bm{g}_{i}\|} \big)  \right\| \\
        & \leq \frac{1}{|D_Z|} \sum_{i \in D_Z} C = C.
\end{align*} 
\end{proof}

The next Lemma 
derives a lower and an upper bound for the component 
$\mathbb{E}[\bar{\bm{g}}_{B}^T \bm{H}_\ell^a \bar{\bm{g}}_{B}]  
- \mathbb{E}[\bm{g}_{B}^T \bm{H}_\ell^a \bm{g}_{B}]$, which appears in
the excessive risk term due to clipping $R^{clip}_a$ for some group $a \in \cA$.

\begin{lemma}
\label{bound_4}
Consider the ERM problem \eqref{eq:erm} with loss $\ell$, solved with 
DP-SGD with clipping value $C$. Further, let $\varepsilon = 0$ 
(see Assumption \ref{asm:large_batchsize}). For any group $a \in \cA$, the following
inequality holds:
\begin{align}
  - \beta_a  \|\bm{g}_D\|^2   \leq  \mathbb{E}[\bar{\bm{g}}_B^T \bm{H}_\ell^a \bar{\bm{g}}_{B}]  - \mathbb{E}[\bm{g}_{B}^T \bm{H}_\ell^a \bm{g}_{B}] \leq \beta_a C^2  
\end{align}
\end{lemma}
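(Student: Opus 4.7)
The plan is to leverage a spectral bound on $\bm{H}_\ell^a$ that follows from Assumption \ref{asm:convex}: convexity plus $\beta_a$-smoothness of $\cL(\cdot, D_a)$ imply $0 \preceq \bm{H}_\ell^a \preceq \beta_a \bm{I}$, so for every random or deterministic vector $v$ one has the pointwise quadratic-form bound $0 \leq v^T \bm{H}_\ell^a v \leq \beta_a \|v\|^2$. Expectation preserves this, so I would control $\EE[\bar{\bm{g}}_B^T \bm{H}_\ell^a \bar{\bm{g}}_B]$ and $\EE[\bm{g}_B^T \bm{H}_\ell^a \bm{g}_B]$ separately, and then combine them to bound the difference from both sides.

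For the upper bound, I would drop the non-private term using $\EE[\bm{g}_B^T \bm{H}_\ell^a \bm{g}_B] \geq 0$ (positive semi-definiteness of $\bm{H}_\ell^a$), and bound the clipped term via the spectral inequality as $\EE[\bar{\bm{g}}_B^T \bm{H}_\ell^a \bar{\bm{g}}_B] \leq \beta_a \EE[\|\bar{\bm{g}}_B\|^2]$. Applying Lemma \ref{lem:norm_bound} with $Z = B$ yields $\|\bar{\bm{g}}_B\| \leq C$ almost surely, hence $\EE[\|\bar{\bm{g}}_B\|^2] \leq C^2$, which gives the upper bound $\beta_a C^2$. For the lower bound, I would symmetrically drop the clipped term via $\EE[\bar{\bm{g}}_B^T \bm{H}_\ell^a \bar{\bm{g}}_B] \geq 0$, upper bound the non-private term by $\beta_a \EE[\|\bm{g}_B\|^2]$, and then invoke Assumption \ref{asm:large_batchsize} with $\varepsilon = 0$ to replace $\EE[\|\bm{g}_B\|^2]$ by $\|\bm{g}_D\|^2$, yielding the lower bound $-\beta_a \|\bm{g}_D\|^2$.

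The one delicate step is this last invocation. As stated, Assumption \ref{asm:large_batchsize} only constrains $\var[\|\bm{g}_B\|]$, so $\varepsilon = 0$ formally just says that $\|\bm{g}_B\|$ is almost surely constant, and Jensen's inequality applied to $\bm{g}_B$ only gives $\EE[\|\bm{g}_B\|^2] \geq \|\bm{g}_D\|^2$, which is the wrong direction for us. The intended (stronger) reading, consistent with the authors' canonical example $|B| = |D|$, is that the random vector $\bm{g}_B$ itself concentrates at $\bm{g}_D$, yielding the needed identity $\EE[\|\bm{g}_B\|^2] = \|\bm{g}_D\|^2$. For general $\varepsilon > 0$ the argument would carry an additive $O(\beta_a \varepsilon)$ slack, controlled via Cauchy--Schwarz, on both sides of the inequality.
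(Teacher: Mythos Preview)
Your proposal is correct and follows essentially the same route as the paper: use convexity for PSD-ness of $\bm{H}_\ell^a$ to drop one term, use $\beta_a$-smoothness for the spectral bound $v^T \bm{H}_\ell^a v \le \beta_a\|v\|^2$ on the other, apply Lemma~\ref{lem:norm_bound} for the clipped side, and Assumption~\ref{asm:large_batchsize} with $\varepsilon=0$ for the unclipped side. Your careful flag on the last step is well taken---the paper's own proof handles it the same way, implicitly reading the zero-variance assumption as $\bm{g}_B \equiv \bm{g}_D$ (its justification of the analogue of your identity $\EE[\|\bm{g}_B\|^2]=\|\bm{g}_D\|^2$ silently slides from $\var[\|\bm{g}_B\|]$ to $\var[\bm{g}_B]$), so you have matched both the argument and its caveat.
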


\begin{proof}
Consider a group $a \in \cA$. 
By the convexity assumption of the loss function, the Hessian $\bm{H}_\ell^a$ is 
a  positive  semi-definite matrix, i.e.,~for all real vectors of appropriate 
dimensions $\bm{v}$, it follows that $\bm{v}^T \bm{H}_\ell^a \bm{v} \geq 0$. 

Therefore, for a subset $B \subseteq D$ the following inequalities hold:
\begin{itemize}[leftmargin=*, parsep=0pt, itemsep=2pt, topsep=-4pt]
    \item  $\bar{\bm{g}}_{B} \bm{H}_\ell^a \bar{\bm{g}}_{B} \geq 0$,
    \item  $\bm{g}_{B}^T \bm{H}_\ell^a \bm{g}_{B} \geq 0$ . 
\end{itemize}
Additionally their expectations $\mathbb{E}[\bar{\bm{g}}_{B} \bm{H}_\ell^a \bar{\bm{g}}_{B}]$ 
 and  $\mathbb{E}[\bm{g}_{B}^T \bm{H}_\ell^a \bm{g}_{B}]$  are non-negative.

By the smoothness property of the loss function, 
$\bar{\bm{g}}_{B}^T \bm{H}_\ell^a \bar{\bm{g}}_{B} \leq \beta_a \| \bar{\bm{g}}_{B}\|^2$, thus:
\begin{subequations}
\begin{align}
\label{eq:21a}
    \mathbb{E}[\bar{\bm{g}}_{B}^T \bm{H}_\ell^a \bar{\bm{g}}_{B}]
        &\leq \beta_a \mathbb{E} \left[ \| \bar{\bm{g}}_{B}\|^2 \right] \\
\label{eq:21b}
        &= \beta_a \big(  \mathbb{E}[ \| \bm{\bar{g}}_{B} \|]^2 + \var\hugB{\| \bar{\bm{g}}_{B } \|} \big)\\
 \label{eq:21c}
        &\leq \beta_a \big( C^2 + \bar{\sigma}^2_B \big) \\
\label{eq:21d}
        &\leq \beta_a (C^2 + \varepsilon),
\end{align}
\end{subequations}
where Equation \eqref{eq:21b} follows from that $\EE[X^2] = (\EE[X])^2 + \var[X]$,
Equation \eqref{eq:21c} is due to Lemma \ref{lem:norm_bound}, and 
finally, the last inequality is due to Assumption \ref{asm:large_batchsize}.

Therefore, since $\varepsilon = 0$ by assumption of the Lemma, 
the following upper bound holds:
\begin{align}
  &\mathbb{E}[\bar{\bm{g}}_{B}^T \bm{H}_\ell^a \bar{\bm{g}}_{B}]  - \mathbb{E}[ \bm{g}_{B}^T \bm{H}_\ell^a \bm{g}_{B}] \leq \beta_a C^2.   \label{ineq_3}
\end{align}

Next, notice that
\begin{subequations}
\begin{align}
\label{eq:22a}
    \mathbb{E}[\bar{\bm{g}}_B^T \bm{H}_\ell^a \bar{\bm{g}}_{B}]  - \mathbb{E}[\bm{g}_{B}^T \bm{H}_\ell^a \bm{g}_{B}] 
        &\geq 
        - \mathbb{E}[\bm{g}_{B}^T \bm{H}_\ell^a \bm{g}_{B}] \\ 
\label{eq:22b}
        &\geq 
        - \EE[ \beta_a \| \bm{g}_B \|^2]\\
\label{eq:22c}
        &= -\beta_a \hugP{ \EE[ \|\bm{g}_B\| ]^2 + \var\hugB{ \|\bm{g}_B\| } }\\
\label{eq:22d}
        &= -\beta_a \| \bm{g}_D \|^2,
\end{align}
\end{subequations}
where the inequality in Equation \eqref{eq:22a} follows since both terms 
on the left hand side of the Equation are non negative. 
Equation \eqref{eq:22b} follows by smoothness assumption of the loss
function. 
Equation \eqref{eq:22c} follows by definition of expectation of a random 
variable, since $\EE[X]^2 = \EE[X^2] + \var[X]$.
Finally, Equation \eqref{eq:22d} follows from that $\var[\bm{g}_B] \leq \varepsilon = 0$ 
by Assumption \ref{asm:large_batchsize}, and that $\varepsilon = 0$ 
by assumption of the Lemma, and thus the norms $\|g_B\| = \|g_D\|$ 
and, thus, $\EE[\bm{g}_B] = \bm{g}_D$.
Therefore if follows:
\begin{align}
 -\beta_a \|\bm{g}_D\|^2 \leq  & \mathbb{E}[\bar{\bm{g}}_{B}^T \bm{H}_\ell^a \bar{\bm{g}}_{B}]  - \mathbb{E}[\bm{g}_{B}^T \bm{H}_\ell^a \bm{g}_{B}].
\end{align}
which concludes the proof.
\end{proof}
Again, the above uses $\varepsilon = 0$ to simplify notation, but the 
results generalize to the case when $\varepsilon > 0$. In such a case,
the bounds require slight modifications to involve the term $\varepsilon$.

\begin{lemma}
\label{lemma3}
Let $a, b \in \cA$ be two groups. 
Consider the ERM problem \eqref{eq:erm} solved with DP-SGD with clipping value $C$
and learning rate $\eta \leq \nicefrac{1}{\max_{a \in \cA} \beta_a}$.
Then, the difference on the excessive risk due to clipping 
$R^a_{clip}  - R^b_{clip}$ is lower bounded as:
\begin{equation}
\label{eq:23}
    R^a_{clip}  - R^b_{clip} \geq \eta 
        \hugP{\langle \bm{g}_{D_a} -\bm{g}_{D_b},\bm{g}_D-\bar{\bm{g}}_D \rangle -\frac{1}{2} ( \|\bm{g}_D\|^2 + C^2 )}.
\end{equation}
\end{lemma}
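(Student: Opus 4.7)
The plan is to expand $R^a_{\clip} - R^b_{\clip}$ using the closed-form expression for the clipping term of the excessive risk provided in Theorem \ref{thm:taylor}, and then bound the resulting Hessian-quadratic differences using Lemma \ref{bound_4} (applied twice, in opposite directions). The learning rate condition will finally be used to collapse the $\eta^2 \beta_z$ factors into $\eta$.

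Concretely, I would first write, for each group $z \in \{a,b\}$,
\[
R^z_{\clip} = \eta \hugP{ \hugA{\bm{g}_{D_z}, \bm{g}_D} - \hugA{\bm{g}_{D_z}, \bar{\bm{g}}_D} } + \frac{\eta^2}{2} \hugP{ \EE[\bar{\bm{g}}_B^T \bH_\ell^z \bar{\bm{g}}_B] - \EE[\bm{g}_B^T \bH_\ell^z \bm{g}_B] }.
\]
Subtracting $R^b_{\clip}$ from $R^a_{\clip}$ and using bilinearity of the inner product, the linear-in-$\eta$ contributions combine cleanly into $\eta \hugA{\bm{g}_{D_a} - \bm{g}_{D_b}, \bm{g}_D - \bar{\bm{g}}_D}$, which is precisely the first piece appearing on the right-hand side of the claimed bound.

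The second step is to lower-bound the Hessian-quadratic piece. Since it appears with a plus sign for group $a$ and a minus sign for group $b$, I apply Lemma \ref{bound_4} in opposite directions: for group $a$ I use the lower bound $\EE[\bar{\bm{g}}_B^T \bH_\ell^a \bar{\bm{g}}_B] - \EE[\bm{g}_B^T \bH_\ell^a \bm{g}_B] \geq - \beta_a \|\bm{g}_D\|^2$, and for group $b$ I use the upper bound $\EE[\bar{\bm{g}}_B^T \bH_\ell^b \bar{\bm{g}}_B] - \EE[\bm{g}_B^T \bH_\ell^b \bm{g}_B] \leq \beta_b C^2$. Together these yield
\[
\frac{\eta^2}{2}\hugP{\EE[\bar{\bm{g}}_B^T \bH_\ell^a \bar{\bm{g}}_B] - \EE[\bm{g}_B^T \bH_\ell^a \bm{g}_B]} - \frac{\eta^2}{2}\hugP{\EE[\bar{\bm{g}}_B^T \bH_\ell^b \bar{\bm{g}}_B] - \EE[\bm{g}_B^T \bH_\ell^b \bm{g}_B]} \geq -\frac{\eta^2 \beta_a}{2} \|\bm{g}_D\|^2 - \frac{\eta^2 \beta_b}{2} C^2.
\]

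The final step is to invoke Assumption \ref{asm:lr}: since $\eta \leq 1/\max_{z \in \cA}\beta_z$, we have $\eta\beta_a \leq 1$ and $\eta\beta_b \leq 1$, so $\eta^2 \beta_a \leq \eta$ and $\eta^2 \beta_b \leq \eta$. This replaces the two $\eta^2 \beta_z$ factors by $\eta$ and lets us collect the quadratic terms into $-\tfrac{\eta}{2}(\|\bm{g}_D\|^2 + C^2)$, matching exactly the second piece of the target inequality. Adding this to the linear-in-$\eta$ contribution and factoring out $\eta$ yields the stated lower bound. The main bookkeeping obstacle is simply keeping track of the sign when applying Lemma \ref{bound_4}, i.e., making sure the upper bound is used on the subtracted group and the lower bound on the leading one; everything else is routine algebra.
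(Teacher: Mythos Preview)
Your proposal is correct and follows essentially the same approach as the paper's proof: expand $R^a_{\clip}-R^b_{\clip}$ via Theorem~\ref{thm:taylor}, combine the linear terms into the inner product, apply Lemma~\ref{bound_4} with the lower bound on the $a$-term and the upper bound on the $b$-term, and finish using the learning-rate assumption. The only cosmetic difference is that the paper first replaces $\beta_a,\beta_b$ by $\max_{z}\beta_z$ before invoking $\eta\leq 1/\max_z\beta_z$, whereas you apply the bound directly to each $\eta\beta_z$; the two are equivalent.
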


\begin{proof}
Recall that $B \subseteq D$ is the mini-batch during the resolution of DP-SGD. 
Using the lower and upper bounds obtained from Lemma \ref{bound_4}, it follows:
\begin{subequations}
\begin{align}
\label{eq:24a}
        R^a_{clip}  - R^b_{clip}  &= \eta \hugP{
    \hugA{ \bm{g}_{D_a}, \bm{g}_D } - 
    \hugA{ \bm{g}_{D_a}, \bar{\bm{g}}_D}
    } 
   + \frac{\eta^2}{2} 
    \hugP{
    \mathbb{E}\hugB{\bar{\bm{g}}_B^T \bH_{\ell}^a \bar{\bm{g}}_B}  - 
    \mathbb{E}\hugB{\bm{g}_B^T \bH_{\ell}^a \bm{g}_B}
   } \\
\notag
   & \hspace{10pt} - \eta \hugP{
    \hugA{ \bm{g}_{D_b}, \bm{g}_D } - 
    \hugA{ \bm{g}_{D_b}, \bar{\bm{g}}_D}
    } 
   - \frac{\eta^2}{2} 
    \hugP{
    \mathbb{E}\hugB{\bar{\bm{g}}_B^T \bH_{\ell}^b \bar{\bm{g}}_B}  - 
    \mathbb{E}\hugB{\bm{g}_B^T \bH_{\ell}^b \bm{g}_B}
   } \\
\label{eq:24c}   
         & =\eta \langle \bm{g}_{D_a} -\bm{g}_{D_b}, \bm{g}_D-\bar{\bm{g}}_D \rangle  
         + \frac{\eta^2}{2} \hugP{\EE\hugB{ \bar{\bm{g}}_{B}^T \bm{H}_\ell^a \bar{\bm{g}}_B}  
                          - \EE\hugB{\bm{g}_B^T \bm{H}_\ell^a \bm{g}_B}} \\
\notag &\hspace{10pt}
         - \frac{\eta^2}{2} \hugP{\EE\hugB{ \bar{\bm{g}}_{B}^T \bm{H}_\ell^b \bar{\bm{g}}_B}  
                          - \EE\hugB{\bm{g}_B^T \bm{H}_\ell^b \bm{g}_B}} \\
\label{eq:24d}
        & \geq \eta \langle \bm{g}_{D_a} -\bm{g}_{D_b}, \bm{g}_D-\bar{\bm{g}}_D \rangle  - \frac{\eta^2}{2}\beta_a  \|\bm{g}_D\|^2  - \frac{\eta^2}{2} \beta_b C^2  \\
\label{eq:24e}
        &\geq  \eta \langle \bm{g}_{D_a} -\bm{g}_{D_b}, \bm{g}_D-\bar{\bm{g}}_D \rangle - \frac{\eta^2}{2} \max_{z \in \cA} \beta_z \big(  \|\bm{g}_D\|^2 + C^2 \big) \\
\label{eq:24f}
        & \geq \eta \left( \langle \bm{g}_{D_a} -\bm{g}_{D_b}, \bm{g}_D-\bar{\bm{g}}_D \rangle -\frac{1}{2} ( \|\bm{g}_D\|^2 + C^2)  \right),
\end{align}
\end{subequations}
where the inequality \eqref{eq:24d} follows as a consequence of Lemma \ref{bound_4}, 
and the inequality \eqref{eq:24f} since $\eta \leq \frac{1}{\max_{a \in \cA} \beta_a}$.
\end{proof}

\begin{proof}[Proof of Theorem \ref{thm:grad_clip}]
We want to show that $R^a_{clip} > R^b_{clip}$ given Equation \eqref{eq:5}.
Since, by Lemma \ref{lemma3} the difference $R^a_{clip} - R^b_{clip}$ 
is lower bounded -- see Equation \eqref{eq:23}, the following shows that 
the right hand side of Equation \eqref{eq:23} is positive, that is:
\begin{equation}
\label{eq:24}
  \left\langle \bm{g}_{D_a} -\bm{g}_{D_b}, \bm{g}_D - \bm{\bar{\bm{g}}}_D \right\rangle 
  - \frac{1}{2} \left( \|\bm{g}_D\|^2 + C^2 \right) > 0.
\end{equation}

First, observe that the gradients at the population level can be expressed as
a combination of the gradients of the two groups $a$ and $b$ in the dataset:
$\bm{g}_D =  p_a \bm{g}_{D_a} + p_b \bm{g}_{D_b}$ and $\bar{\bm{g}} = p_a \bar{\bm{g}}_{D_a} + p_b \bar{\bm{g}}_{D_b}$. 

By algebraic manipulation, and the above, Equation \eqref{eq:24} can thus be expressed as:
\begin{subequations}
\begin{align}
  \eqref{eq:24}
  &=  \langle \bm{g}_{D_a} -\bm{g}_{D_b},  p_a \bm{g}_{D_a} + p_b \bm{g}_{D_b} - p_a \bar{\bm{g}}_{D_a} -p_b \bar{\bm{g}}_{D_b} \rangle - \frac{1}{2}\big( \|\bm{g}_{D_a} p_a + \bm{g}_{D_b} p_b\|^2  + C^2\big)\\
  &= ( p_a \|\bm{g}_{D_a}\|^2   + p_b \bm{g}_{D_a}^T\bm{g}_{D_b}  - p_a \bm{g}_{D_a}^T \bar{\bm{g}}_{D_a} - p_b \bm{g}_{D_a}^T \bar{\bm{g}}_{D_b} - p_a \bm{g}_{D_b}^T \bm{g}_{D_a} - p_b \|\bm{g}_{D_b}\|^2 \\
  & + p_a \bm{g}_{D_b}^T \bar{\bm{g}}_{D_a} + p_b \bm{g}_{D_b}^T\bar{\bm{g}}_{D_b} -\frac{1}{2}\big( p^2_a \|\bm{g}_{D_a}\|^2 + 2p_a p_b \bm{g}_{D_a} \bm{g}_{D_b} + p^2_b \|\bm{g}_{D_b}\|^2 +  C^2\big). \notag
\end{align}
\end{subequations}
Noting that for any vector $\bm{x}, \bm{y}$ the following inequality hold:
 $\bm{x}^T \bm{y} \geq -\|\bm{x}\| \|\bm{y}\|$, all the inner products 
 in the above expression can be replaced by their lower bounds:

\begin{subequations}
\begin{align}
  \eqref{eq:24}
  &\geq \|\bm{g}_{D_a}\|\hugP{ \|\bm{g}_{D_a}\| p_a (1- \frac{p_a}{2}) - p_b  \|\bm{g}_{D_b}\| -p_a C  - p_b  C  -p_a \|\bm{g}_{D_b}\|} \\
  & \hspace{12pt} - \|\bm{g}_{D_b}\| \hugP{ \|\bm{g}_{D_b}\| p_b (1 + \frac{p_b}{2}) + p_a C +p_b C   } -\frac{1}{2}C^2 \notag \\
  & = \|\bm{g}_{D_a}\|\hugP{ \|\bm{g}_{D_a}\| p_a (1- \frac{p_a}{2}) - (p_b +p_a) (  \|\bm{g}_{D_b}\| +C) } \\
  & \hspace{12pt} - \|\bm{g}_{D_b}\| \hugP{ \|\bm{g}_{D_b}\| p_b (1 + \frac{p_b}{2}) + (p_a +p_b) C   } -\frac{1}{2}C^2 \notag  \\
\label{eq:28c}
  & = \|\bm{g}_{D_a}\|\hugP{ \|\bm{g}_{D_a}\| p_a (1- \frac{p_a}{2}) -   \|\bm{g}_{D_b}\| - C) }
  - \|\bm{g}_{D_b}\| \hugP{ \|\bm{g}_{D_b}\| p_b (1 + \frac{p_b}{2}) + C   } -\frac{1}{2}C^2 
\end{align}
\end{subequations}
where the last equality is because $p_a +p_b=1$, by assumption of the 
dataset having exactly two groups.

By theorem assumption, 
$\|\bm{g}_{D_a}\| p_a (1- \frac{p_a}{2}) \geq \frac{5}{2}C + \|\bm{g}_{D_b}\| (1 + p_b +\frac{p_b^2}{2})$.
It follows that $\|\bm{g}_{D_a}\| > \|\bm{g}_{D_b}\| $ and $\|\bm{g}_{D_a}\| > C $. 
Combined with Equation \eqref{eq:28c} it follows that:
\begin{subequations}
\begin{align}
 \eqref{eq:28c} &= 
 \|\bm{g}_{D_a}\| \hugP{  \|\bm{g}_{D_a}\| p_a (1- \frac{p_a}{2}) -   \|\bm{g}_{D_b}\| - C -  \|\bm{g}_{D_b}\| p_b (1 + \frac{p_b}{2}) -  C} -\frac{1}{2} C^2\\
& = \|\bm{g}_{D_a}\| \hugP{ \|\bm{g}_{D_a}\| p_a (1- \frac{p_a}{2}) - 2C
    -  \|\bm{g}_{D_b}\| ( 1 + p_b  + \frac{p^2_b}{2})} -\frac{1}{2} C^2 \\
& \geq \|\bm{g}_{D_a}\| \frac{C}{2} - \frac{1}{2}C^2 \\
& > 0,
  \label{eq_final_proof}
\end{align}
\end{subequations}
where the last equality is because $\|\bm{g}_{D_a} \| > C$. 
\end{proof}

\begin{theorem}
\label{thm:add_noise}
For groups $a,b \in \cA$, $R^{noise}_a > R^{noise}_b$ whenever 
\[ \Tr(\bm{H}^a_{\ell}) > \Tr(\bm{H}^b_{\ell}).\]
\end{theorem}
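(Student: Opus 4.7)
The plan is to invoke directly the closed-form expression for $R_a^{\noise}$ established in Theorem \ref{thm:taylor}. Specifically, the term labelled \eqref{eq:thmc} gives
\[
   R_a^{\noise} \;=\; \frac{\eta^2}{2}\,\Tr(\bH_{\ell}^a)\, C^2 \sigma^2 .
\]
Since the coefficient $\frac{\eta^2}{2} C^2 \sigma^2$ is a strictly positive scalar (under the standard DP-SGD setting in which $\eta, C, \sigma > 0$) and is the same for every group $a \in \cA$, the map $t \mapsto \frac{\eta^2}{2}\, t \, C^2 \sigma^2$ is strictly monotone increasing in $t$. This monotonicity is the only ingredient needed.

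Concretely, the step I would carry out is to write the pairwise difference
\[
   R_a^{\noise} - R_b^{\noise} \;=\; \frac{\eta^2}{2}\, C^2 \sigma^2 \,\bigl( \Tr(\bH_{\ell}^a) - \Tr(\bH_{\ell}^b) \bigr),
\]
and observe that the sign of the right-hand side coincides with the sign of $\Tr(\bH_{\ell}^a) - \Tr(\bH_{\ell}^b)$. Under the hypothesis $\Tr(\bH_{\ell}^a) > \Tr(\bH_{\ell}^b)$, the right-hand side is strictly positive, yielding $R_a^{\noise} > R_b^{\noise}$ as claimed.

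There is no genuine obstacle here: the statement is essentially a one-line corollary of Theorem \ref{thm:taylor}, whose proof (via the second-order Taylor expansion around $\btheta_t$ and the identity $\EE[\psi^\top \bH_{\ell}^a \psi] = \Tr(\bH_{\ell}^a)\, C^2\sigma^2$ for isotropic Gaussian $\psi \sim \mathcal{N}(0, \bm{I} C^2 \sigma^2)$) already encapsulates all the analytical content. The value of the result lies not in its derivation but in isolating the Hessian trace as the sole group-dependent quantity driving $R_a^{\noise}$, which then motivates the subsequent analysis of the latent factors (distance to the decision boundary and input norm) that govern $\Tr(\bH_{\ell}^a)$.
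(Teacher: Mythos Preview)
Your proposal is correct and follows essentially the same approach as the paper: both invoke the closed-form expression $R_a^{\noise} = \frac{\eta^2}{2}\,\Tr(\bH_{\ell}^a)\,C^2\sigma^2$ from Theorem~\ref{thm:taylor} and observe that the positive common factor $\frac{\eta^2}{2}C^2\sigma^2$ makes the inequality between the $R^{\noise}$ terms equivalent to that between the Hessian traces.
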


\begin{proof}
 Suppose $ \Tr(\bm{H}^a_{\ell}) > \Tr(\bm{H}^b_{\ell})$. By definition of $R^{noise}_a$ and $R^{noise}_b$ from  Theorem \ref{thm:taylor} it follows that:
 
 $$R^{noise}_a = \frac{\eta^2}{2} \Tr( \bH_{\ell}^a ) C^2 \sigma^2   > \frac{\eta^2}{2} \Tr( \bH_{\ell}^b ) C^2 \sigma^2 = R^{noise}_b,$$
 which concludes the proof.

\end{proof}

\begin{theorem}
\label{thm:boundary}
Consider a $K$-class classifier $\bm{f}_{\btheta,k}$ ($k \in [K]$). 
For a given sample $X \sim D$, the term
$\hugP{1 - \sum_{k=1}^K \bm{f}_{\btheta, k}^2(X)}$ 
is maximized when
$\bm{f}_{\btheta, k}(X) = \nicefrac{1}{K}$ and minimized when 
$\exists k \in [K]$ s.t.~$\bm{f}_{\btheta, k}(X) = 1$ 
and $\bm{f}_{\btheta, k'} = 0 \ \forall k'\in [K], k' \neq k$.
\end{theorem}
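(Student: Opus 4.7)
The plan is to treat $p_k \defeq \bm{f}_{\btheta,k}(X)$ as a probability vector, so that $p_k \geq 0$ for all $k \in [K]$ and $\sum_{k=1}^K p_k = 1$ (which holds for classifiers whose output layer is a softmax, as is assumed throughout the paper). With this notation, the quantity of interest becomes $g(p) = 1 - \sum_{k=1}^K p_k^2$, and the goal reduces to characterizing the maximum and minimum of $g$ over the probability simplex $\Delta_K = \{p \in \RR_{\geq 0}^K : \sum_k p_k = 1\}$.

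For the maximum, I would invoke the Cauchy--Schwarz inequality (equivalently, Jensen's inequality applied to the convex function $x \mapsto x^2$) to obtain
\[
  \sum_{k=1}^K p_k^2 \;\geq\; \frac{1}{K}\hugP{\sum_{k=1}^K p_k}^{\!2} \;=\; \frac{1}{K},
\]
with equality if and only if all $p_k$ are equal, i.e.~$p_k = 1/K$. Consequently $g(p) \leq 1 - 1/K$, with the upper bound attained exactly at the uniform distribution, which proves the first claim.

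For the minimum, I would use that each $p_k \in [0,1]$, so $p_k^2 \leq p_k$, and therefore
\[
  \sum_{k=1}^K p_k^2 \;\leq\; \sum_{k=1}^K p_k \;=\; 1,
\]
giving $g(p) \geq 0$. Equality $p_k^2 = p_k$ holds iff $p_k \in \{0,1\}$, and combined with the simplex constraint $\sum_k p_k = 1$ this forces exactly one coordinate to equal $1$ and the rest to equal $0$. Hence the minimum value $0$ of $g$ is attained precisely at the one-hot vectors, establishing the second claim.

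There is no substantive obstacle here: the argument is elementary convex-analytic reasoning on the simplex, and the only implicit assumption used is that $\bm{f}_\btheta$ outputs a probability vector (otherwise the quantity $1 - \sum_k \bm{f}_{\btheta,k}^2(X)$ is not meaningfully bounded the way the theorem states). The bulk of the work is merely recognizing that the extremal structure of $\sum_k p_k^2$ on $\Delta_K$ is classical.
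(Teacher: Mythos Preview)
Your proof is correct and follows essentially the same approach as the paper: Cauchy--Schwarz (equivalently Jensen) for the upper bound, and the elementary inequality $p_k^2 \le p_k$ on $[0,1]$ for the lower bound. If anything, you are slightly more careful than the paper in spelling out the equality conditions and the implicit assumption that $\bm{f}_\btheta$ outputs a probability vector.
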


\begin{proof}
Fix an input $X$ of $D$ and denote $y_k = \bm{f}_{\btheta,k}(X) \in [0,1]$. 
Recall that $y_k$ represents the likelihood of the prediction of input
$X$ to be associated with label $k$.

Note that, by Cauchy–Schwarz inequality
\begin{subequations}
\begin{align}
\label{eq:31a}
    1 - \sum_{k=1}^K y_k^2 & \leq 1 -  K \hugP{\frac{\sum_i^K y_k}{K}}^2 \\
\label{eq:31b}
    & = 1 - \frac{1}{K}, 
\end{align}
\end{subequations}
where Equation \eqref{eq:31b} follows since $\sum_i^K y_k(X) = 1$.
The above expression is maximized  when 
\[y_k = \bm{f}_{\theta, k}(X)= \frac{1}{K}.\] 

Additionally, since $y_k \in [0,1]$ it follows that $y_k^2 \leq y_k$. Hence,
\begin{equation}
    1 - \sum_{k=1}^K y_k^2 \geq  1 - \sum_{i=1}^K y_k = 0.
\end{equation}
To hold, the equality above, it must exists $k \in [K]$ such that
 $y_k = \bm{f}_{\theta, k}(X) = 1$ 
 and for any other $k' \in [K]$ with $k' \neq k$, 
 $y_{k'} = \bm{f}_{\theta, k'} = 0$. 
\end{proof}

Given the connection of the term $1 - \sum_{k=1}^K (1 - f^2_{\theta,k}(X))$ and 
the associated (trace of the) Hessian loss $\bm{H}_f$, the result above suggests that 
the trace of the Hessian is minimized (maximized) when the classifier is very 
confident (uncertain) about the prediction of $X \sim D$ , i.e.,~when $X$ is far 
(close) to the decision boundary.

\section{Experimental settings}
\label{sec:experimental_settings}

\paragraph{Datasets} The paper uses the following UCI datasets to support its claims:
\begin{enumerate}
    \item \textbf{Adult} (Income) dataset, where the task is to predict if an 
    individual has low or high income, and the group labels are defined by race: 
    \emph{White} vs \emph{Non-White} \cite{UCIdatasets}.

    \item \textbf{Bank} dataset, where the task is to predict if a user subscribes 
    a term deposit or not and the group labels are defined by age: 
    \emph{people whose age is less than 60 years old vs the rest} \cite{Moro2014ADA}. 
    
    \item \textbf{Wine} dataset, where the task is to predict if a given wine is 
    of  good quality, and the group labels are defined by wine color: 
    \emph{red vs white} \cite{UCIdatasets}. 
    
    \item \textbf{Abalone} dataset, where the task is to predict if a given 
    abalone ring exceeds the median value, and the group labels are 
    defined by gender: \emph{female vs male} \cite{UCIdatasets}. 

    
    \item \textbf{Parkinsons} dataset, where the task is to predict if 
    a patient has total UPDRS score that exceeds the median value, 
    and the group labels are defined by gender: \emph{female vs male} \cite{article}.  

    \item \textbf{Churn} dataset, where the task is to predict if a 
    customer churned or not. The group labels are defined by on gender:
    \emph{female vs male} \cite{IBMdataset}.
       
    \item \textbf{Credit Card} dataset, where the task is to predict if 
    a customer defaults a loan or not. The group labels are defined by gender:
    \emph{female vs male} \cite{creditdataset}. 
    
   \item \textbf{Stroke} dataset, where the task is to predict if a patient 
   have had a stroke based on their physical conditions. 
   The group labels are defined by gender: \emph{female vs male} \cite{Strokedataset}. 
    
\end{enumerate}

All datasets were processed by standardization so each feature has zero mean and unit variance. 

\paragraph{Settings}
For output perturbation, the paper uses a Logistic regression model to obtain the optimal model parameters (we set the regularization parameter $\lambda = 1$) and add Gaussian noise to achieve privacy. 
The standard deviation of the noise required to the mechanism is determined following \citet{balle2018improving}. 

For DP-SGD, the paper uses a neural network with single hidden layer with 
\emph{tanh} activation function for the different datasets. The batch size $|B|$ is fixed to $ 32$ and the learning rate $\eta = 1e-4$. Unless specified we set the clipping bound $C =0.1$ and  noise multiplier $\sigma = 5.0$. The experiments consider 100 runs of DP-SGD with different random seeds for each configuration. We employ the Tensorflow Privacy toolbox to compute the privacy loss $\epsilon$ spent during training.

\smallskip\noindent\textbf{Computing infrastructure} 
All experiments were performed on a cluster equipped with Intel(R) 
Xeon(R) Platinum 8260 CPU @ 2.40GHz and 8GB of RAM.

\smallskip\noindent\textbf{Software and libraries}  
All models and experiments were written in Python 3.7 and  in Pytorch 1.5.0. 

\smallskip\noindent\textbf{Code} 
The code used for this submission is attached as supplemental material. All implementation of the experiments and proposed mitigation solution will be released upon publication.

\section{Additional experiments}
\label{sec:additional_experiments}
\subsection{More on ``Warm up: output perturbation''}

\paragraph{Correlation between Hessian trace and excessive risk} The
 following provides additional empirical support for the claims of
 the main paper: \emph{Groups with larger Hessian trace tend to have
 larger excessive risks} in this subsection.

The experiments in this sub-section use {output perturbation}.
Figure \ref{fig:output_pertb_others} report the excessive risk gap
and Hessian traces for the two groups defined in the datasets
(as described in Section \ref{sec:experimental_settings}. The figure
clearly illustrates that the groups with larger Hessian traces have
larger excessive risk gaps (i.e., experienced more unfairness) under
private output perturbation when compared with the groups with
smaller Hessian traces. These empirical findings are again a strong
support for the claims of Theorem \ref{thm:output}.

\begin{figure}[ht]
  \centering
    \centering
    \includegraphics[width=1.0\linewidth]{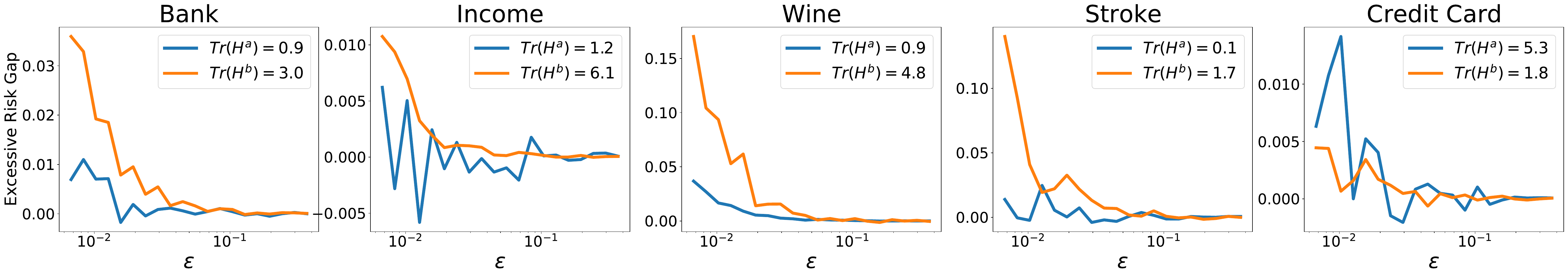}
    \caption{Correlation between  excessive  risk  gap  and Hessian 
    Traces at varying of the privacy loss $\epsilon$.}
    \label{fig:output_pertb_others}
 \end{figure}

 \paragraph{Impact of data normalization by group}
 
The next results provide evidence to support the following claim
raised in Section \ref{sec:optput_pert}: \emph{Given the impact of
gradient norms to unfairness, normalizing data independently for each
group can help improve fairness}. Figure \ref{fig:group_norm} shows
the evolution of the excessive risk $R_a$ and $R_b$ for the dataset
groups during training. The top plots present the results with
standard data normalization (e.g., each sample data is normalized
independently from its group membership) while the bottom plots show
the counterpart results for models trained when the data was
normalized within the group datasets $D_a$ and $D_b$. 
Note that the normalization adopted ensures that the data is $0$-mean
and of unit variance in each group dataset, which is a required 
condition to achieve the desired property.   

The results clearly show that this strategy can not only reduce unfairness,
but also the excessive risk gaps. 
  
\begin{figure}[ht]
  \centering
    \centering
    \includegraphics[width=0.8\linewidth]{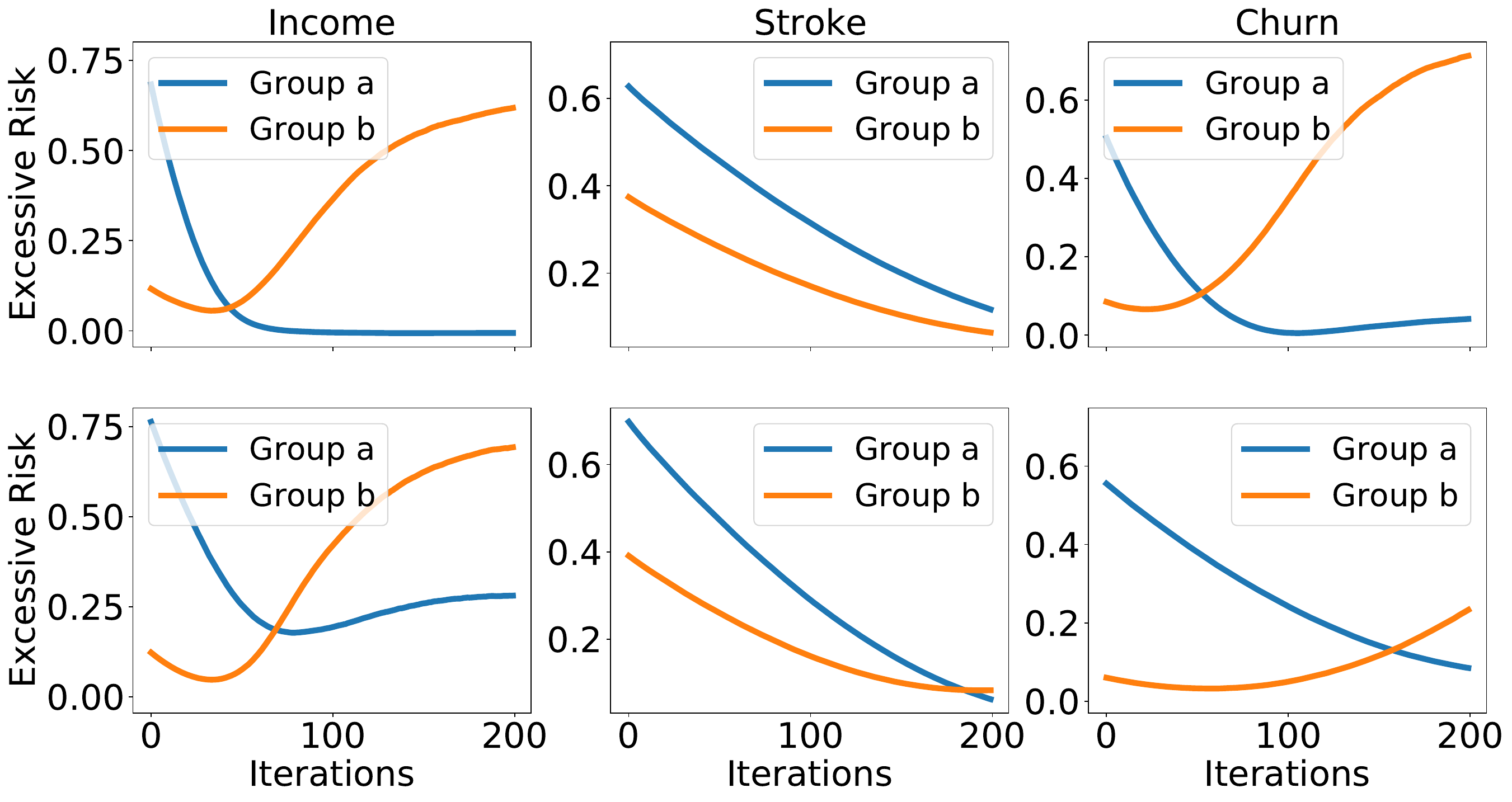}
    \caption{Excessive risk for each group without group normalization 
    (top) and  with group normalization (bottom).}
    \label{fig:group_norm}
 \end{figure}

\subsection{More on ``Why gradient clipping causes unfairness?''}
This section provides additional empirical evidence to support the claim made in Section \ref{sec:clipping} specifying the three direct factors influencing the clipping effect to the excessive risk: 
{\bf (1)} the Hessian loss, 
{\bf (2)} the gradient values, and 
{\bf (3)} the clipping bound. 
Among these three factors, the gradient values and clipping bound are the dominant ones. 

\paragraph{Impact of gradient values and clipping bound $C$}
Figure \ref{fig:grad_clipping} provides the relation between the gradient norm and the different choices of clipping bounds to the excessive risks. The results are shown for the Abalone, Churn and Credit Card datasets. 
The experiments show that gradient norms reduce as $C$ increases and that the group with larger gradient norms have also larger excessive risk.
Similar results were achieved for other datasets as well (not reported to avoid redundancy).

\begin{figure}[!h]
  \centering
  \begin{subfigure}{\linewidth}
    \centering
    \includegraphics[width=1.0\linewidth]{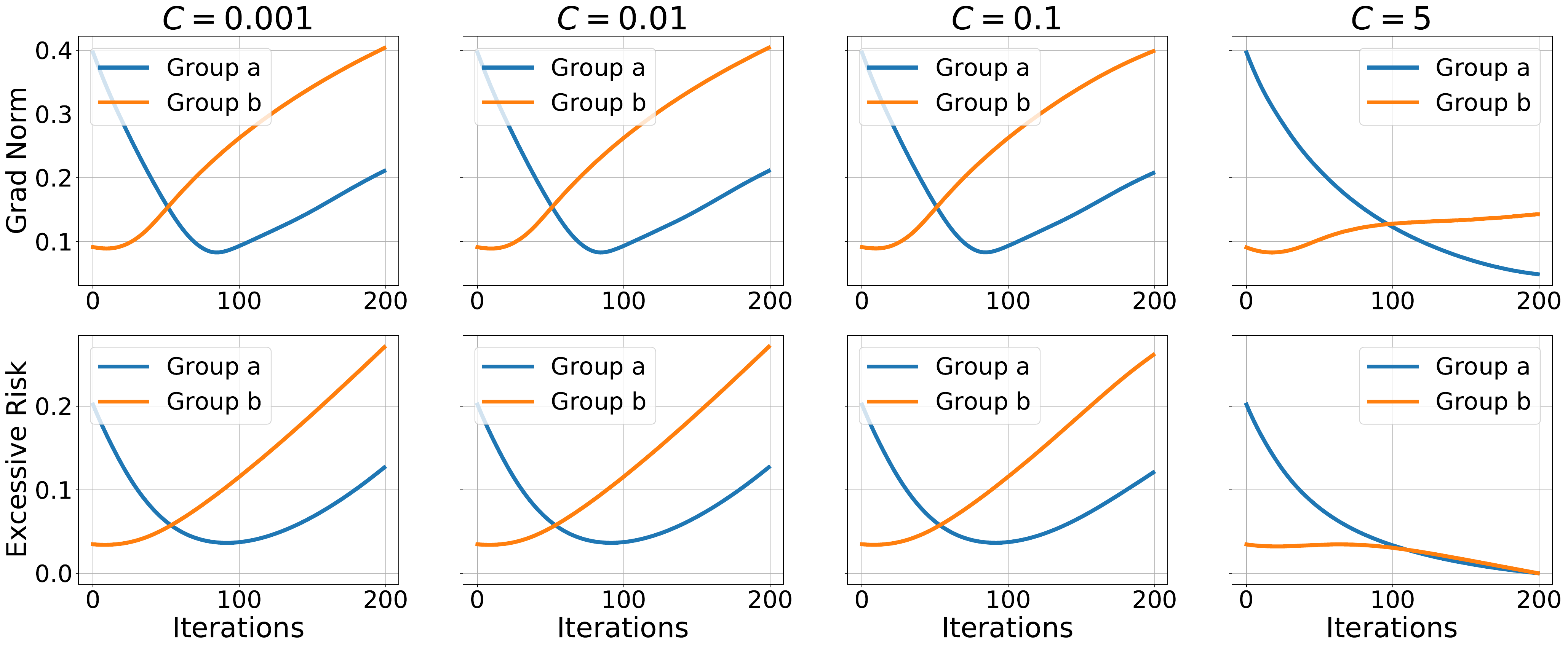}
    \caption{Abalone dataset}
  \end{subfigure}
  
   \begin{subfigure}{\linewidth}
    \centering
    \includegraphics[width=1.0\linewidth]{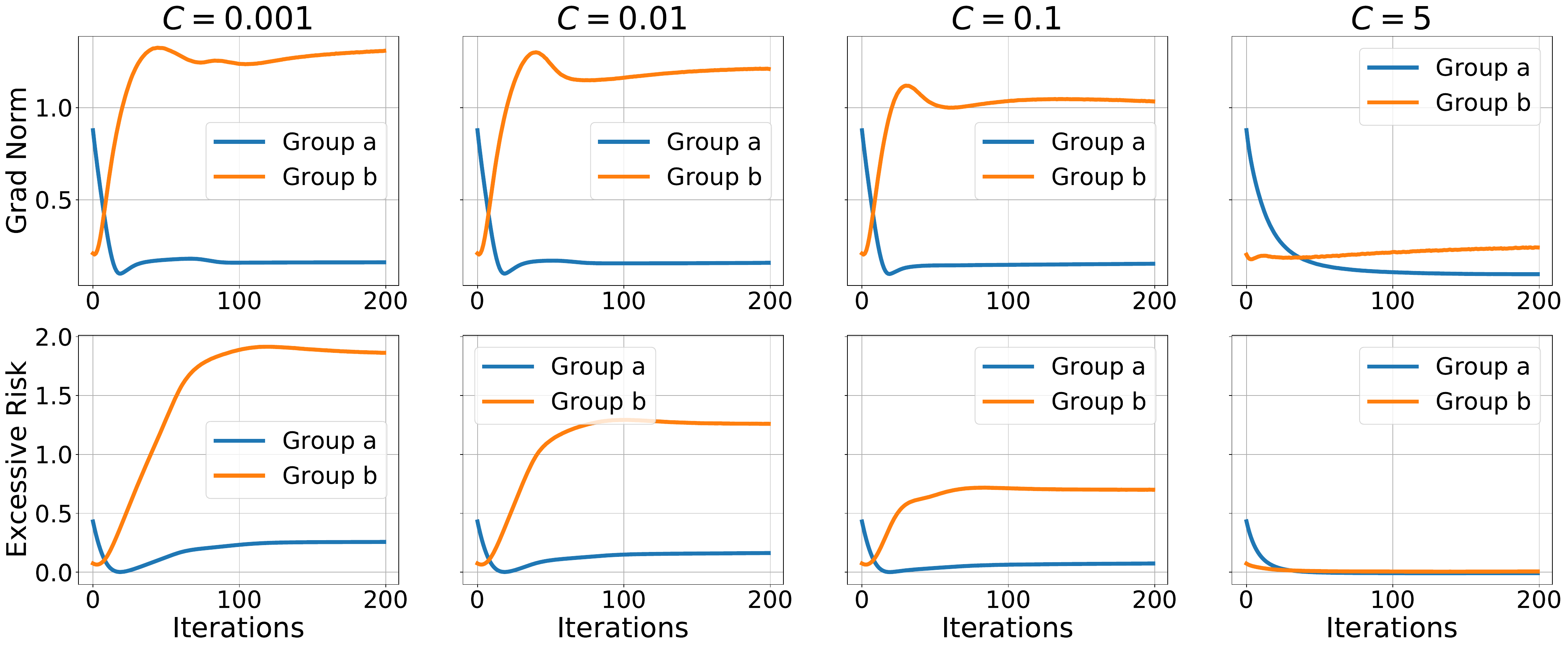}
    \caption{Churn dataset}
  \end{subfigure}
  
    \begin{subfigure}{\linewidth}
    \centering
    \includegraphics[width=1.0\linewidth]{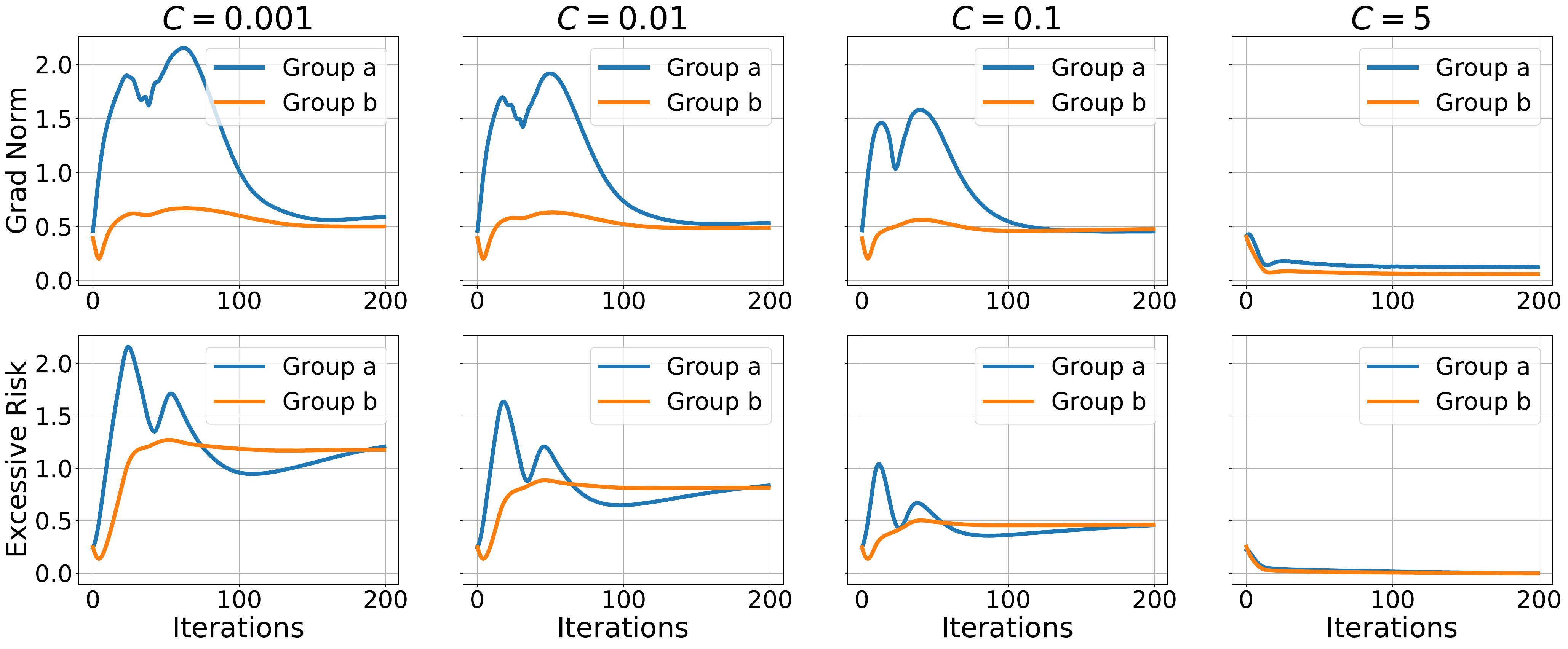}
    \caption{Churn dataset}
  \end{subfigure}
 
  \caption{Impact of gradient clipping with different clipping bound values 
  $C$ to the excessive risk.} 
  \label{fig:grad_clipping}
\end{figure}

\paragraph{The Hessian loss is a minor impact factor to the excessive risk.}
As showed in the main text, the excessive risk associated to the gradient clipping for a particular group $a \in \cA$ can be decomposed as:
\begin{equation}
    R^{clip}_a = \eta \hugP{
    \hugA{ \bm{g}_{D_a}, \bm{g}_D } - 
    \hugA{ \bm{g}_{D_a}, \bar{\bm{g}}_D}
    } 
   + \frac{\eta^2}{2} 
    \hugP{
    \mathbb{E}\hugB{\bar{\bm{g}}_B^T \bH_{\ell}^a \bar{\bm{g}}_B}  - 
    \mathbb{E}\hugB{\bm{g}_B^T \bH_{\ell}^a \bm{g}_B}
   }
\end{equation}

Denote $\psi_a =\hugP{
    \mathbb{E}\hugB{\bar{\bm{g}}_B^T \bH_{\ell}^a \bar{\bm{g}}_B}  - 
    \mathbb{E}\hugB{\bm{g}_B^T \bH_{\ell}^a \bm{g}_B}
   }$. 
This quantity clearly depends on the Hessian loss $\bH_{\ell}^a$. However, under the assumptions in Theorem \ref{thm:grad_clip}: convexity and smoothness of the loss function and the magnitude of the learning rate (i.e., that is small enough), the term $\psi_a$ will be a negligible component in $R^{clip}_a$. 

While this is evident under those assumption, our empirical analysis has reported a similar behavior for loss function for which those conditions do not generally apply. 
In the following experiment we run DP-SGD on a neural network with single hidden layer and tracked the values of $R^{clip}_a$ and $\psi_a$ for each group $a\in \cA$ during private training. 
These values are reported in Figure \ref{fig:hessian_loss_minor} for different datasets. It can be seen that the components $\psi_a$ (dotted lines) constitute a negligible amount to the excessive risk under gradient clipping $R^{clip}_a$.  

\begin{figure}[!h]
  \centering
    \centering
    \includegraphics[width=0.66\linewidth]{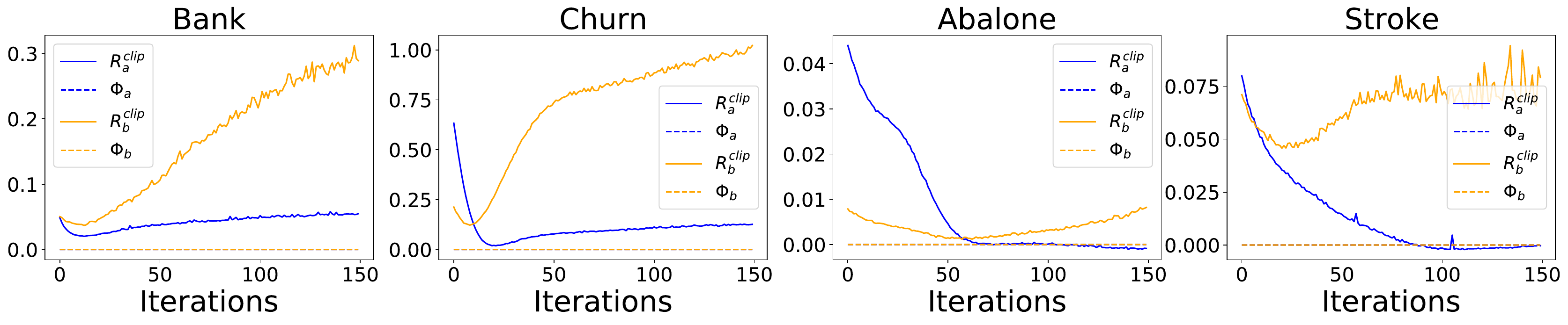}\\
    \includegraphics[width=0.66\linewidth]{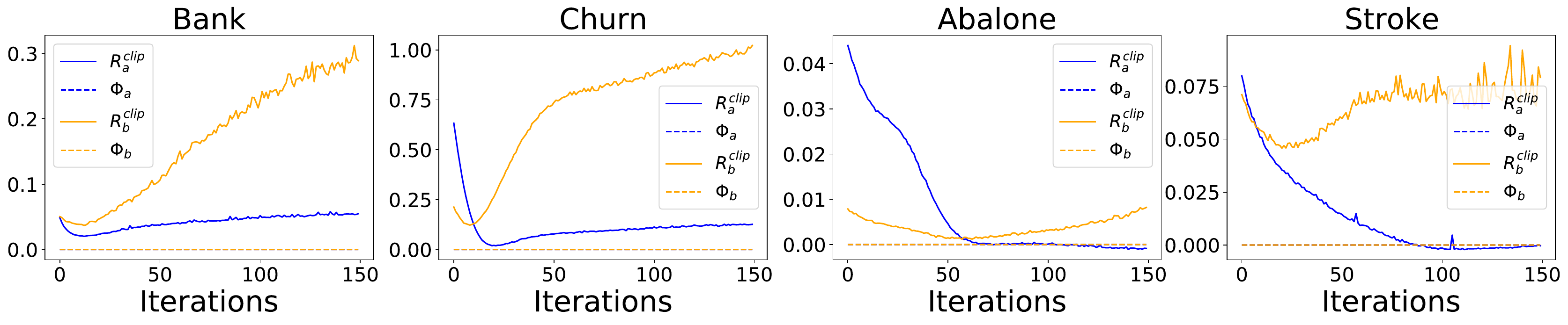}
    \caption{Values of $R^{clip}_a$ and $\psi_a$ during private training 
    for a neural network classifier. }
    \label{fig:hessian_loss_minor}
 \end{figure}

\paragraph{Relative group data size is a minor impact factor  to the excessive risk.}

Section \ref{sec:clipping} also observed that the relative group data 
size, $\nicefrac{p_b}{p_a}$ for two groups $a, b \in \cA$ had a minor impact
on unfairness. 
Figure \ref{fig:pct_group} provides empirical evidence to support 
this observation. It shows the effects of varying the relative group data
$\nicefrac{p_b}{p_a}$ to the gradient norms (top rows) and excessive risk (bottom rows)
in three datasets: Abalone, Bank, and Income.
The different relative group data ratios were obtained through subsampling.
Notice that changing the relative group sizes does not result in a 
noticeable effect in the group gradient norms and excessive risk. 
These experiments demonstrate that the relative group data size 
might play a minor role in affecting unfairness. 

These observation are also in alignment with the those raised by 
\citet{farrand2020neither}, who showed that the disparate impact of 
DP on model accuracy is not limited to highly imbalanced data and can
occur in situations where the groups are slightly imbalanced.

\begin{figure}[!h]
  \centering
  \begin{subfigure}{\linewidth}
    \centering
    \includegraphics[width=0.8\linewidth]{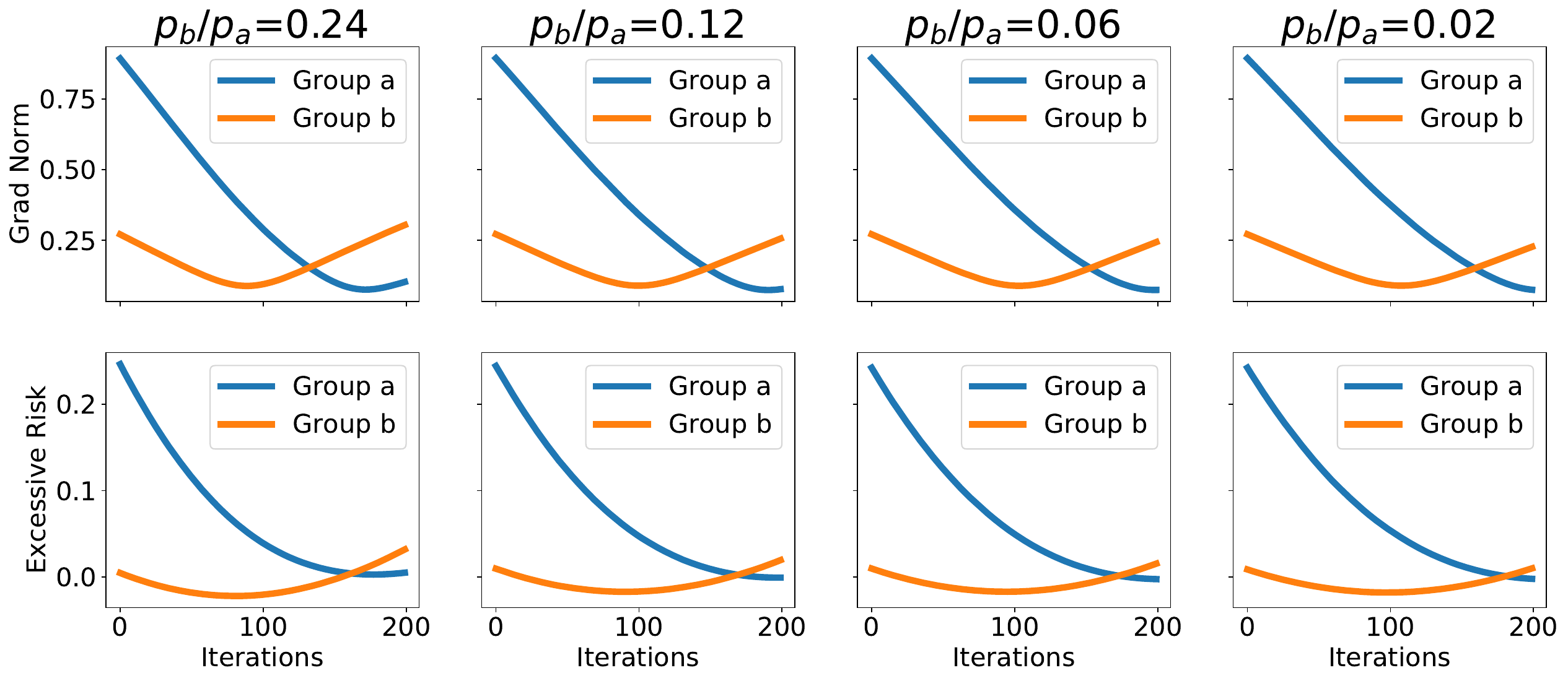}
    \caption{Abalone dataset}
  \end{subfigure}
  
   \begin{subfigure}{\linewidth}
    \centering
    \includegraphics[width=0.8\linewidth]{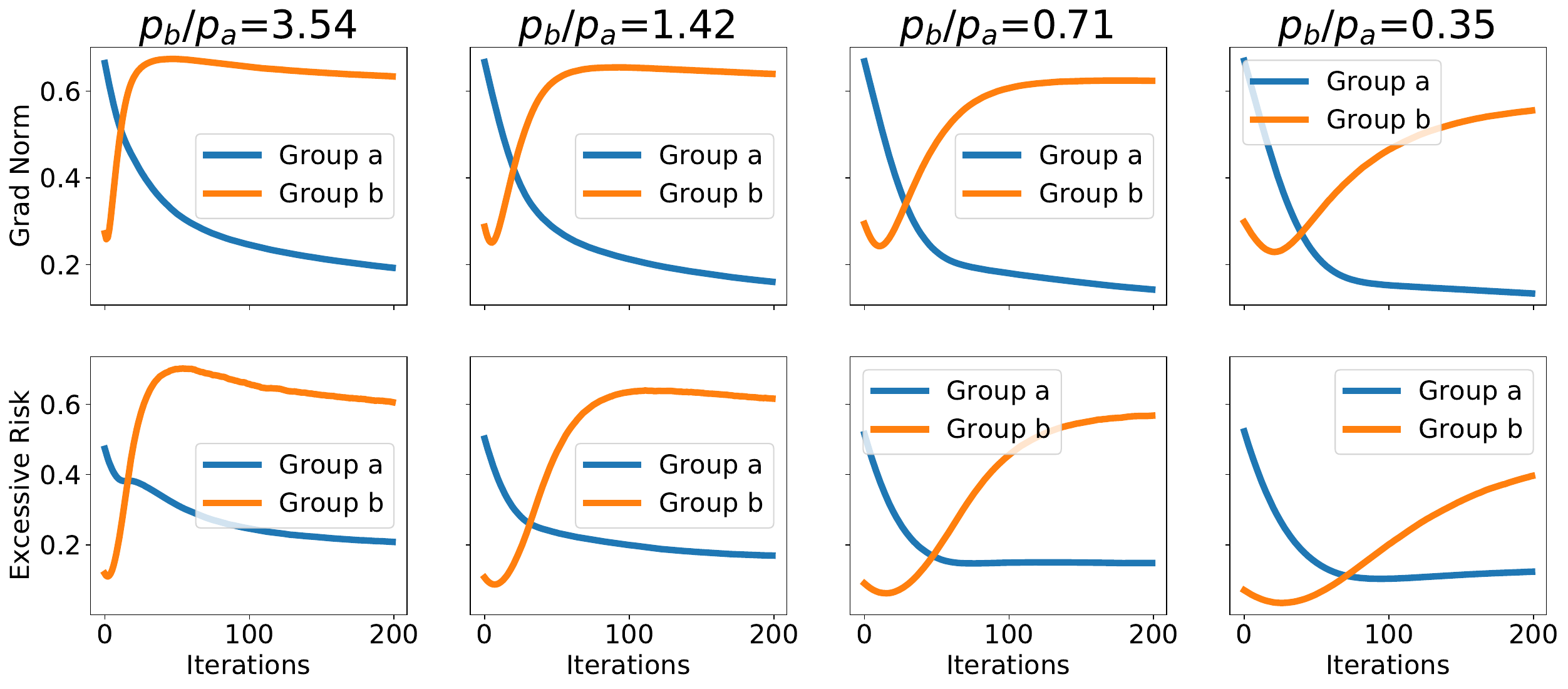}
    \caption{Bank dataset}
  \end{subfigure}
  
    \begin{subfigure}{\linewidth}
    \centering
    \includegraphics[width=0.8\linewidth]{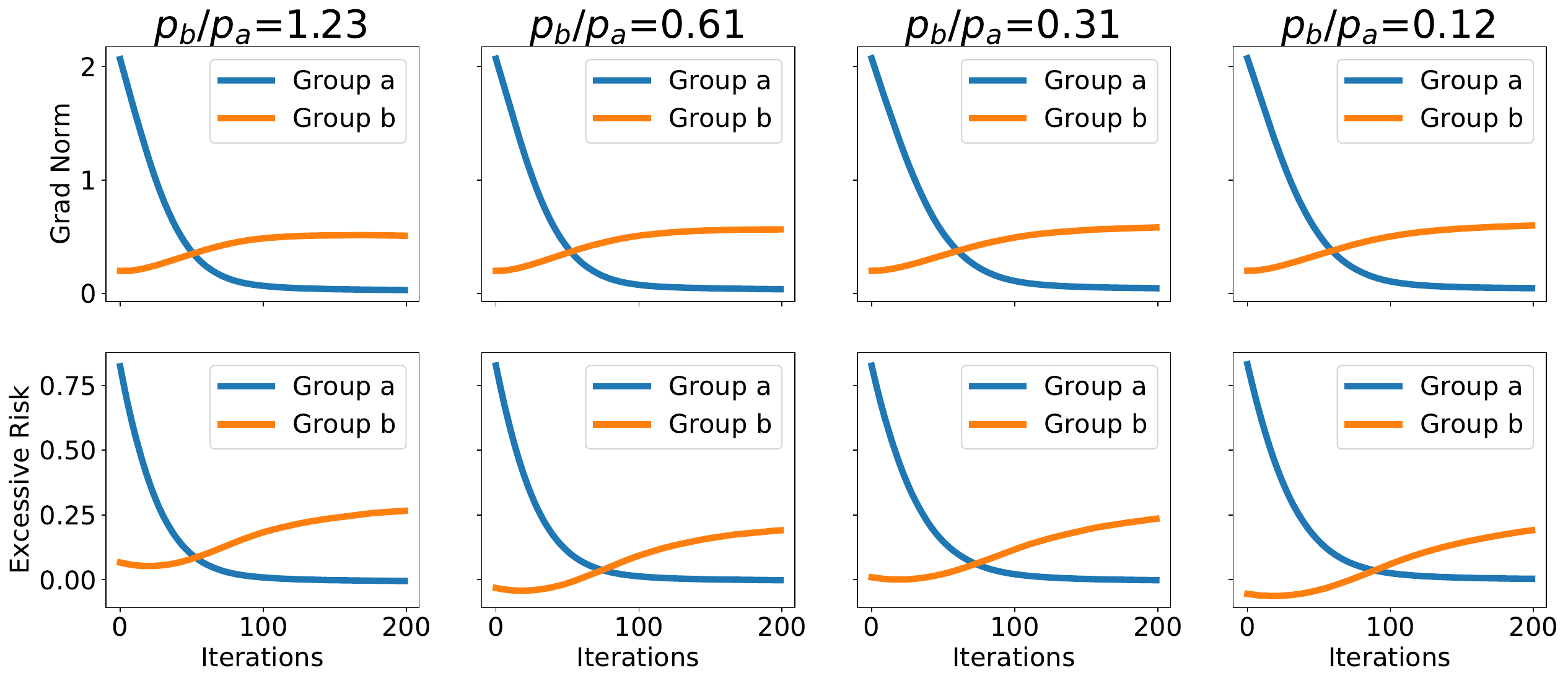}
    \caption{Income dataset}
  \end{subfigure}
 
  \caption{Impact of the relative group data size towards unfairness 
  under DP-SGD (with $C=0.1, \sigma = 5.0$).} 
  \label{fig:pct_group}
\end{figure}

\subsection{More on ``Why noise addition causes unfairness?''}


Figure \ref{fig:corr} illustrates the connection between the trace of 
the Hessian of the loss function at some sample $X \in D$ and its distance 
to the decision boundary. 
The figure clearly show that the closest (father) is a sample $X$ to the 
decision boundary, the larger (smaller) is the associated Hessian trace value 
$\Tr(\bm{H}_{\ell}^X)$.
The experiments are reported for datasets Parkinson, Stroke, Wine, and 
Churn, but once again they extend to other datasets as well. 

\begin{figure}[!h]
     \centering
     \begin{subfigure}[b]{0.35\textwidth}
         \centering
         \includegraphics[width=\textwidth]{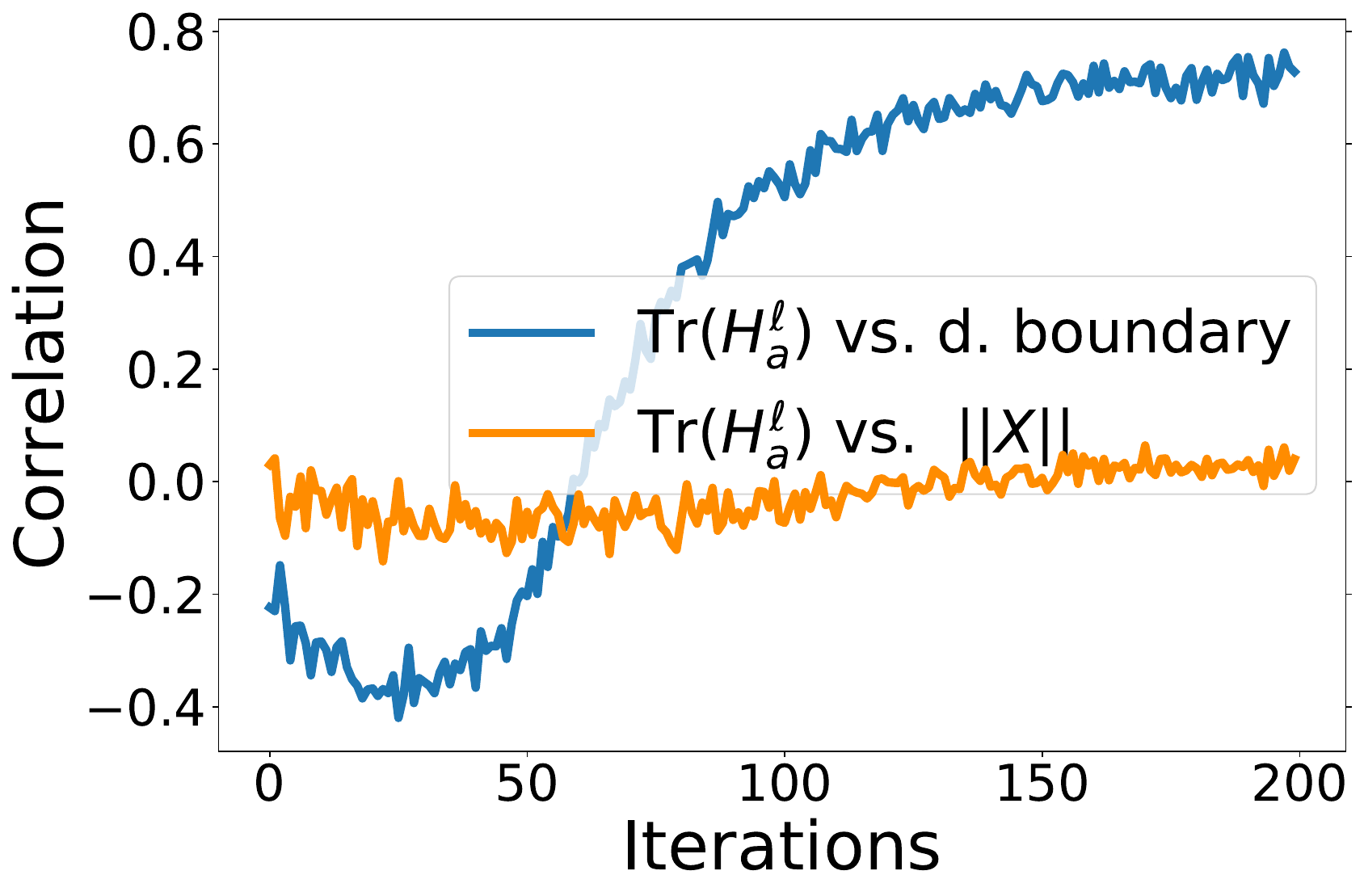}
         \caption{Parkinson dataset}
     \end{subfigure}
     \begin{subfigure}[b]{0.35\textwidth}
         \centering
         \includegraphics[width=\textwidth]{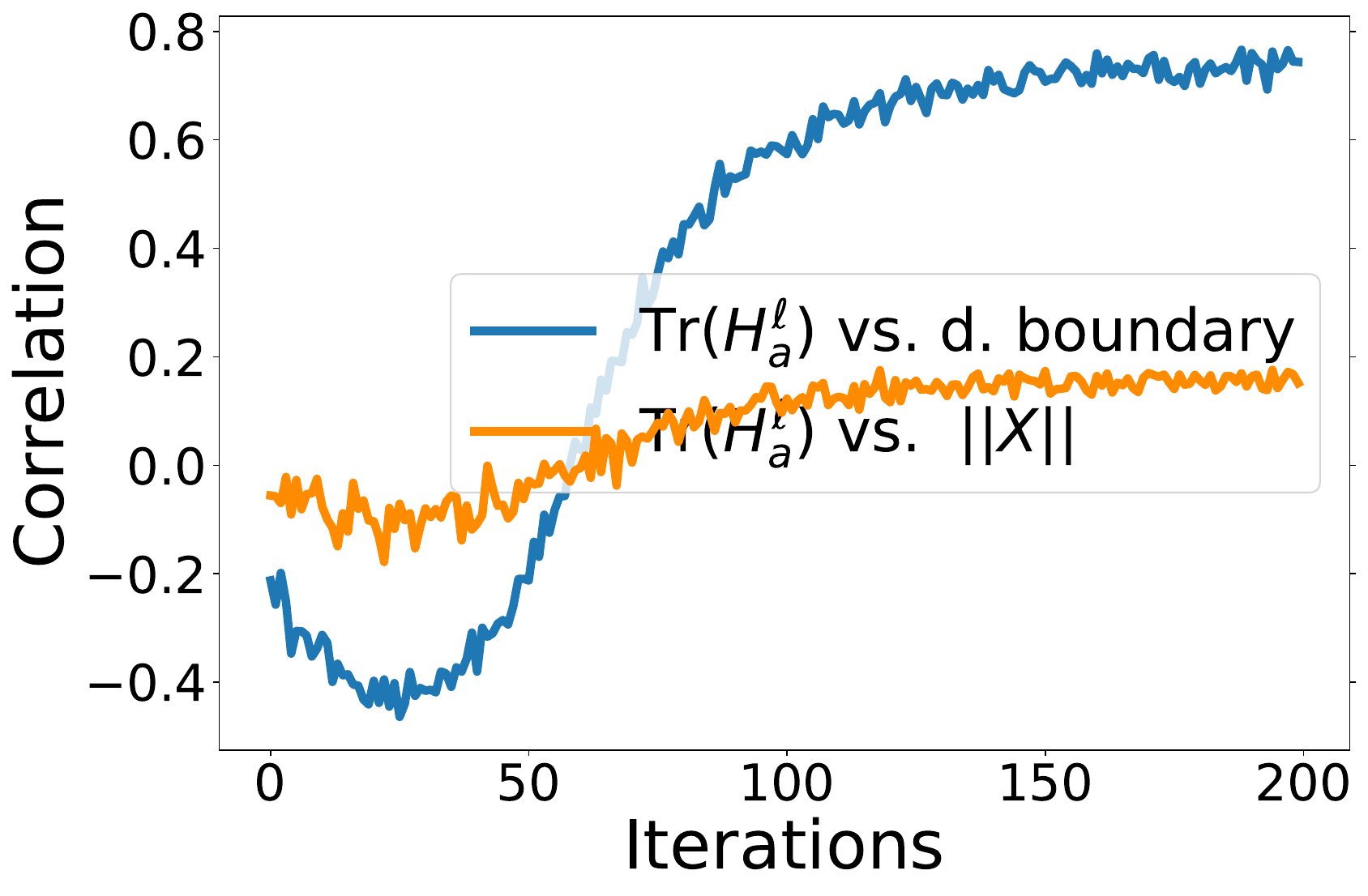}
         \caption{Stroke dataset}
     \end{subfigure}
     \begin{subfigure}[b]{0.35\textwidth}
         \centering
         \includegraphics[width=\textwidth]{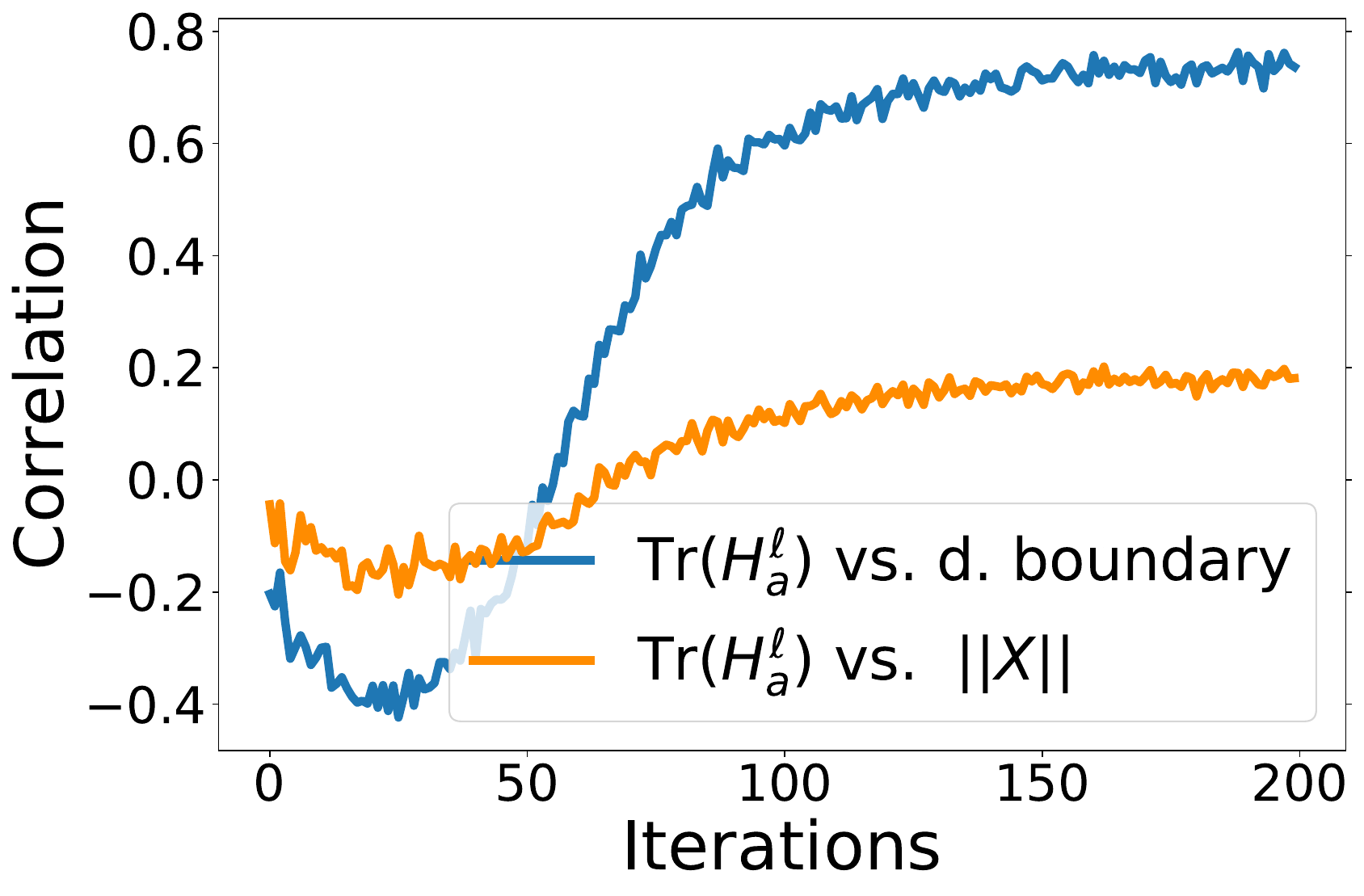}
         \caption{Wine dataset}
     \end{subfigure}
    \begin{subfigure}[b]{0.35\textwidth}
         \centering
         \includegraphics[width=\textwidth]{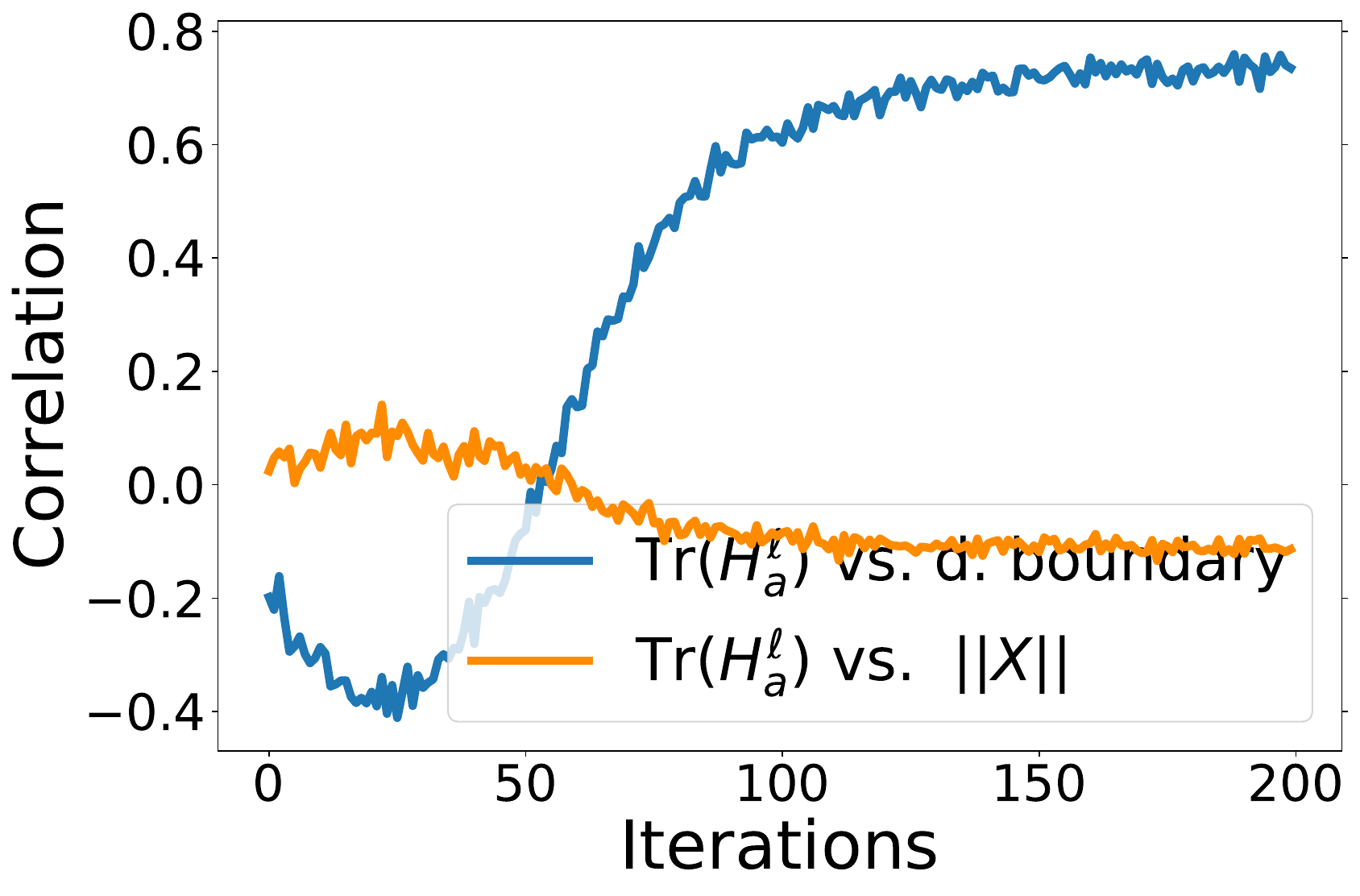}
         \caption{Churn dataset}
     \end{subfigure}
    \caption{Correlation between the trace of the Hessian of the loss 
    function for a data sample $X$ with its distance to the decision boundary 
    (dark colors) and input norm (light colors).}
    \label{fig:corr}
\end{figure}

\subsection{More on mitigation solutions }
Next, this section demonstrates the benefits of the proposed mitigation 
solution on additional datasets.  
Figure \ref{fig:mitigation_others} illustrates the excessive risk for 
each group in the reported datasets (recall that better fairness is 
achieved when the excessive risk curves values are small and similar)
at varying of the privacy parameter $\epsilon$ (i.e., the excessive 
risk is tracked during private training). 

The leftmost column in each sub-figure present the results for the baseline model, 
which runs DP-SGD without the proposed fairness-mitigating constraints. 
Observe the positive effects in reducing the inequality between the excessive 
risks between the groups when the solution activates both $\gamma_1$ 
(which regulates the component associated with $R^\clip$) and $\gamma_2$ (which 
regulates the component associated with $R^\noise$). 
In the reported experiments hyper-parameters $\gamma_1 = 1, \gamma_2 =1$ 
were found to be good values for all our benchmark datasets. 
Smaller $\gamma_1$ and $\gamma_2$ values may not reduce unfairness. 
Likewise, large values could even exacerbate unfairness. 
Using the above setting, the proposed mitigation solution 
was able not only to reduce unfairness in 6 out 8 cases studied, but
also to increase the utility of the private models. 

Once again, we mention that the design of optimal hyper-parameters is 
an interesting open challenge.

\begin{figure}[!h]
  \centering
  \begin{subfigure}{\linewidth}
    \centering
    \includegraphics[width=0.85\linewidth]{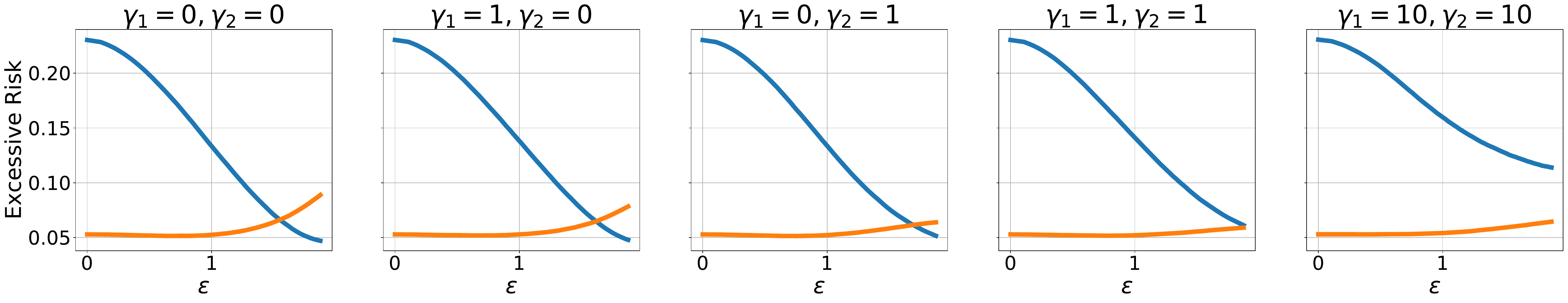}
    \caption{Abalone dataset}
  \end{subfigure}

  \begin{subfigure}{\linewidth}
    \centering
    \includegraphics[width=0.85\linewidth]{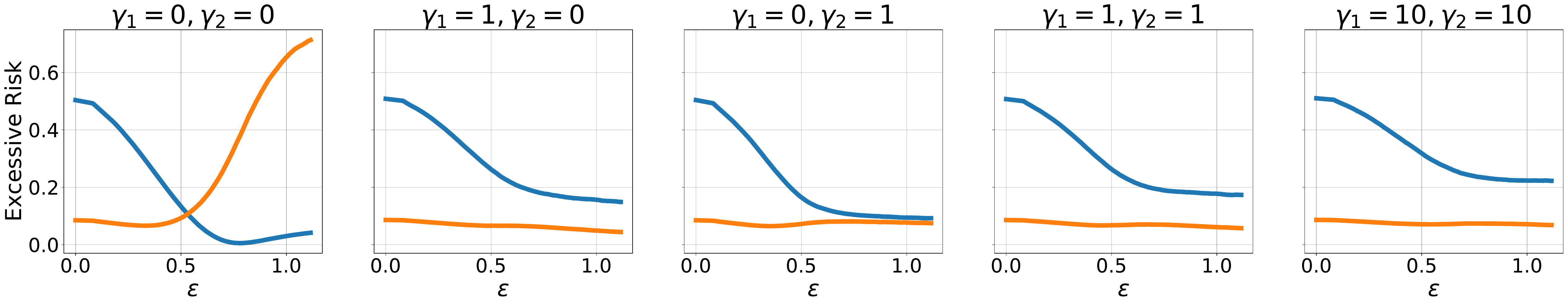}
    \caption{Churn dataset}
  \end{subfigure}  
  
  \begin{subfigure}{\linewidth}
    \centering
    \includegraphics[width=0.85\linewidth]{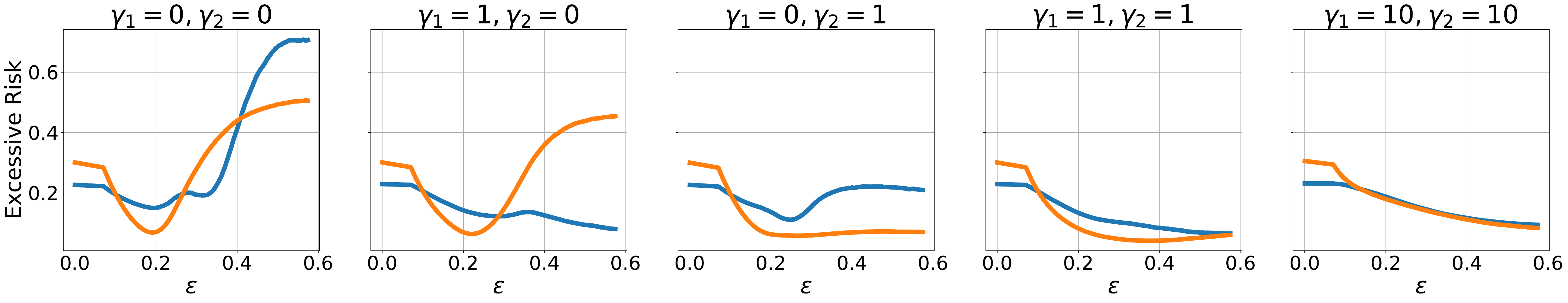}
    \caption{Credit Card dataset}
  \end{subfigure}  
  
  \begin{subfigure}{\linewidth}
    \centering
    \includegraphics[width=0.85\linewidth]{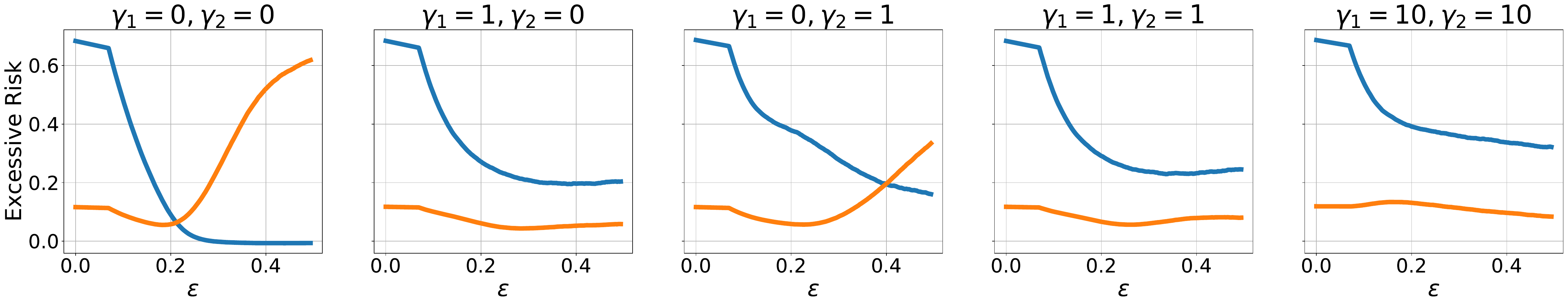}
    \caption{Income dataset}
      \end{subfigure}  
    
    \begin{subfigure}{\linewidth}
    \centering
    \includegraphics[width=0.85\linewidth]{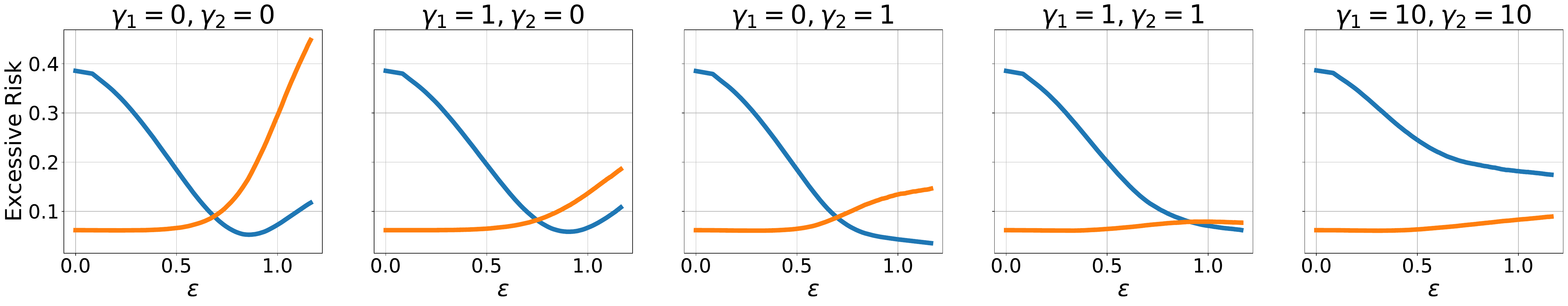}
    \caption{Wine dataset}

\begin{subfigure}{\linewidth}
    \centering
    \includegraphics[width=0.85\linewidth]{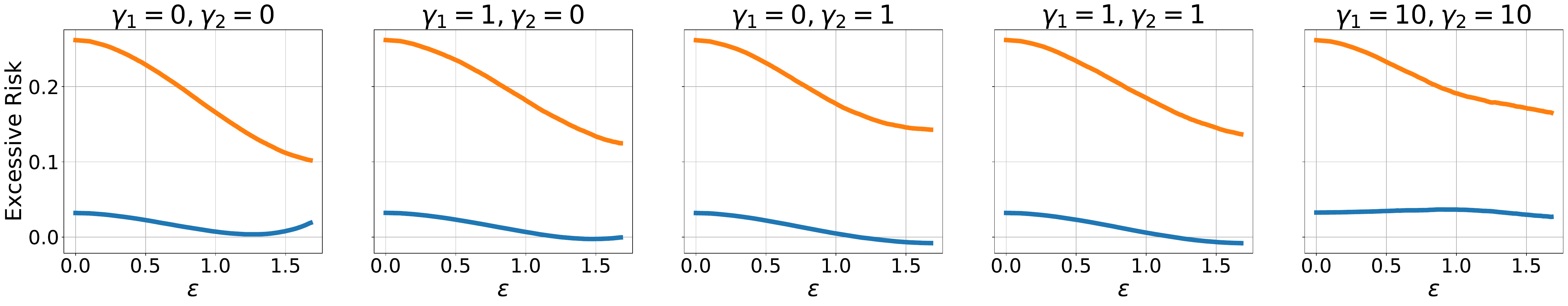}
    \caption{Parkinsons dataset}
  \end{subfigure}  
  
  \begin{subfigure}{\linewidth}
    \centering
    \includegraphics[width=0.85\linewidth]{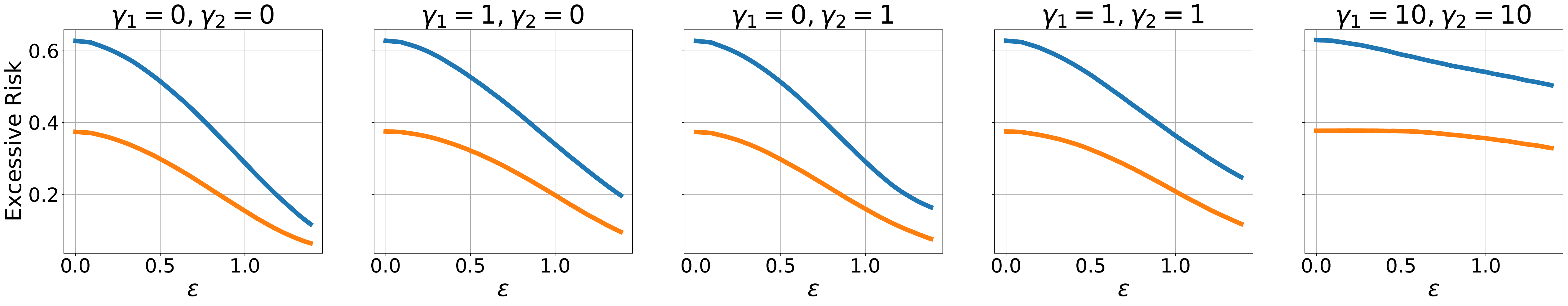}
    \caption{Stroke dataset}
  \end{subfigure}  
  
  \end{subfigure}  
  
  \caption{Mitigating solution: Excessive risk at varying of the 
  privacy loss $\epsilon$ for different $\gamma_1$ and $\gamma_2$.} 
  \label{fig:mitigation_others}
\end{figure}

\section{Additional examples}
\label{sec:additional_examples}
\subsection{More on gradient and Hessian loss of neural networks}
This section focuses on two tasks: 
The first is to demonstrate the connection between the gradient norm 
$\|\bm{g}_X\|$ for some input $X$ with its input norm $\|X\|$. 
The second is to demonstrate the relation between the trace of the 
Hessian loss at a sample $X$ with input norm $\|X\|$ and the closeness 
of $X$ to the decision boundary.
We do so by providing a derivation of the gradients and the Hessian 
trace of a neural networks with one hidden layer. 

\paragraph{Settings} 
Consider a neural network model $f_{\btheta}
 (X) \defeq \textsl{softmax} \left(\btheta_1^T \sigma(\btheta^T_2 X) \right)$ 
 where $X \in \RR^d, \theta_2 \in \RR^{d \times H}, \theta_1 \in \RR^{H \times K}$ 
 and the cross-entropy loss $\ell(f_{\btheta}(X),Y) = -\sum_{k=1}^K Y_i \log 
 \bf{f}_{\btheta, k}(X)$ where $K$ is the number of classes, and $\sigma(\cdot)$ 
 is the a proper activation function, e.g.~ a sigmoid function.
 Let $\bm{O} = \sigma(\btheta^T_2 X) \in \RR^H$
 be the vector $(O_1, \ldots, O_H)$ of $H$ hidden nodes of the network. 
 Denote with $h_j = \sum_i \theta_{ji} X_i$ as the $j$-th hidden unit 
 before the activation function. 
 Next, denote $\btheta_{1,j,k} \in \RR$ as the weight parameter that connects 
 the $j$-th hidden unit $h_j$ with the $k$-th output unit $f_k$ and
 $\theta_{2,i,j} \in \RR$ as the weight parameter that connects the $i$-th
 input $X_i$ unit with the $j$-th hidden unit $h_j$.

\paragraph{Gradients Norm}
First notice that we can decompose the gradients norm of this neural network into two layers as follows:
\begin{equation}
    \| \nabla_{\btheta}\ell(f_{\btheta}(X),Y) \|^2 =  \| \nabla_{\btheta_1}\ell(f_{\btheta}(X),Y) \|^2 + \| \nabla_{\btheta_2}\ell(f_{\btheta}(X),Y) \|^2.
\end{equation}
We will show that $\nabla_{\btheta_2}\ell(f_{\btheta}(X),Y) \| \propto  \|X\|.$

Notice that: 
$$
\|\nabla_{\btheta_2}\ell(f_{\btheta}(X),Y) \|^2 = \sum_{i,j} \| \nabla_{\btheta_{2,i,j}}\ell(f_{\btheta}(X),Y) \|^2.
$$

Applying, Equation (14) from \citet{gradient_formula}, it follows that: 
\begin{equation}
   \nabla_{\theta_{2,i,j}}\ell(f_{\btheta}(X),Y) = 
   \sum_{k=1}^K \left(Y_k - \bm{f}_{\theta,k}(X) \right) \, 
   \theta_{1,j,k}\left(O_j(1 - O_j) \right) X_i,
\end{equation}
which highlights the dependency of the gradient norm 
$\| \nabla_{\btheta_2}\ell(f_{\btheta}(X),Y) \|$ and the input norm $\| X\|^2$.

\paragraph{Hessian trace}
For the connections between the Hessian trace of the loss function at a sample $X$ with the closeness of $X$ to the decision 
boundary and the input norm $\|X\|$, the analysis follows the derivation 
provided by \citet{bishop1992exact}. First, notice that:
\begin{equation}
   \Tr(\bm{H}^{X}_{\ell}) = \Tr(\nabla^2_{\btheta_1}\ell(f_{\btheta}(X),Y)) + \Tr(\nabla^2_{\btheta_2}\ell(f_{\btheta}(X),Y))
\end{equation}
The following shows that:
\begin{enumerate}[leftmargin=*, parsep=0pt, itemsep=2pt, topsep=-4pt]
\item $\Tr\left(\nabla^2_{\btheta_2}\ell(f_{\btheta}(X),Y)\right) \propto \hugP{1 - \sum_{k=1}^K \bm{f}_{\btheta, k}^2(X)} $  
\item $\Tr\left(\nabla^2_{\btheta_1}\ell(f_{\btheta}(X),Y)\right) \propto \| X \|^2$. 
\end{enumerate}

The former follows from Equation (26) of \citet{bishop1992exact}, since:
\begin{equation}
    \nabla^2_{\btheta_{1,j,k}}\ell(f_{\btheta}(X),Y))  =   f_k(1- f_k)O^2_j,
\end{equation}
and thus,
$$\Tr(\nabla^2_{\btheta_1}\ell(f_{\btheta}(X),Y)) = 
\sum_{j=1}^H \sum_{k=1}^K f_k (1-f_k) O^2_j = \sum_{j=1}^H \big(\sum_{k=1}^K f_k - \sum_{k=1}^K f^2_k \big) O^2_j = (1- \sum_{k=1}^K f^2_k ) \sum_{j=1}^H O^2_j.$$

The above shows the connection between the trace of Hessian loss at a sample $X$ 
for the second layer of the neural network and the quantity 
$1 - \sum_{k=1}^K f^2_k(X)$ which measures how close is the sample $X$ to 
the decision boundary. This result relates with Theorem \ref{thm:boundary}. 

Regarding point (2), by applying Equation (27) of  \cite{bishop1992exact} we obtain:
\begin{equation}
    \nabla^2_{\btheta_{2,i,j}}\ell(f_{\btheta}(X),Y))  = X^2_i \Gamma_j,
\end{equation}
where $\Gamma_j = \sigma''(h_j) \sum_{k=1}^K \theta_{2,j,k}(Y_k - f_k) 
     + \sigma'(h_j)^2 \sum_{k=1}^K \btheta_{2,j,k}^2 f_k (1-f_k) $, 
where $\sigma'$ and $\sigma''$ are, respectively, the first and second 
derivative of the activation $\sigma$ with respect to the hidden node $h_j$. 

Thus:
$$\Tr(\nabla^2_{\btheta_2}\ell(f_{\btheta}(X),Y)) = \sum_{j=1}^H 
\sum_{i=1}^d \nabla^2_{\btheta_{2,i,j}} \ell(f_{\btheta}(X),Y) 
= \sum_{j=1}^H \left( \sum_{i=1}^d X^2_i\right) \Gamma_j \propto  \|X\|^2, $$

which shows the dependency of the trace of the Hessian of the loss function in the first layer at sample $X$ and the data input norm.





\end{document}